\def\figref#1{Figure~\ref{#1}}
\def\Figref#1{Figure~\ref{#1}}
\def\Secref#1{Section~\ref{#1}}
\def\Appref#1{Appendix~\ref{#1}}
\def\eqref#1{equation~\ref{#1}}
\def\Eqref#1{Equation~\ref{#1}}
\def\1{\bm{1}}
\def\vg{{\bm{g}}}
\def\vp{{\bm{p}}}
\def\vr{{\bm{r}}}
\def\vs{{\bm{s}}}
\def\vv{{\bm{v}}}
\def\vx{{\bm{x}}}
\def\evp{{p}}
\def\evr{{r}}
\def\evv{{v}}
\def\evx{{x}}
\def\mA{{\bm{A}}}
\def\mB{{\bm{B}}}
\def\mC{{\bm{C}}}
\def\mI{{\bm{I}}}
\def\mQ{{\bm{Q}}}
\def\mU{{\bm{U}}}
\def\mV{{\bm{V}}}
\def\mW{{\bm{W}}}
\def\mX{{\bm{X}}}
\def\mY{{\bm{Y}}}
\def\mLambda{{\bm{\Lambda}}}
\DeclareMathAlphabet{\mathsfit}{\encodingdefault}{\sfdefault}{m}{sl}
\SetMathAlphabet{\mathsfit}{bold}{\encodingdefault}{\sfdefault}{bx}{n}
\def\emA{{A}}
\def\emB{{B}}
\def\emC{{C}}
\newcommand{\E}{\mathbb{E}}
\newcommand{\R}{\mathbb{R}}
\newcommand{\Var}{\mathrm{Var}}
\title{Learning Low Dimensional State Spaces with Overparameterized Recurrent Neural Nets}
\author{Edo Cohen-Karlik$^{1}$, Itamar Menuhin-Gruman$^{1}$, Raja Giryes$^{1}$, Nadav Cohen$^{1}$ \& Amir Globerson$^{1,2}$  \\
$^{1}$Tel Aviv University $^{2}$Google Research\\
\texttt{\{edocohen,menuhin\}@mail.tau.ac.il} \\
\texttt{\{raja,cohennadav,gamir\}@tauex.tau.ac.il} \\
}
\begin{document}

\maketitle
\begin{abstract}
Overparameterization in deep learning typically refers to settings where a trained neural network (NN) has representational capacity to fit the training data in many ways, some of which generalize well, while others do not.
In the case of Recurrent Neural Networks (RNNs), there exists an additional layer of overparameterization, in the sense that a model may exhibit many solutions that generalize well for sequence lengths seen in training, some of which \emph{extrapolate} to longer sequences, while others do not.
Numerous works have studied the tendency of Gradient Descent (GD) to fit overparameterized NNs with solutions that generalize well.
On the other hand, its tendency to fit overparameterized RNNs with solutions that extrapolate has been discovered only recently and is far less understood.
In this paper, we analyze the extrapolation properties of GD when applied to overparameterized linear RNNs.
In contrast to recent arguments suggesting an implicit bias towards short-term memory, we provide theoretical evidence for learning low-dimensional state spaces, which can also model long-term memory.
Our result relies on a dynamical characterization which shows that GD (with small step size and near-zero initialization) strives to maintain a certain form of balancedness, as well as on tools developed in the context of the \emph{moment problem} from statistics (recovery of a probability distribution from its moments).
Experiments corroborate our theory, demonstrating extrapolation via learning low-dimensional state spaces with both linear and non-linear RNNs.
\end{abstract}
\section{Introduction}

Neural Networks (NNs) are often \emph{overparameterized}, in the sense that their representational capacity far exceeds what is necessary for fitting training data.
Surprisingly, training overparameterized NNs via (variants of) Gradient Descent (GD) tends to produce solutions that generalize well, despite existence of many solutions that do not.
This \emph{implicit generalization} phenomenon attracted considerable scientific interest, resulting in various theoretical explanations (see, e.g., \cite{woodworth2020kernel,yun2020unifying,ZhangBHRV17,li2020towards,ji2018gradient,lyu2019gradient}).

Recent studies have surfaced a new form of implicit bias that arises in Recurrent Neural Networks (RNNs) and their variants (e.g., Long Short-Term Memory \cite{lstm} and Gated Recurrent Units \cite{chung2014empirical}).
For such models, the length of sequences in training is often shorter than in testing, and it is not clear to what extent a learned solution will be able to \emph{extrapolate} beyond the sequence lengths seen in training.
In the overparameterized regime, where the representational capacity of the learned model exceeds what is necessary for fitting short sequences, there may exist solutions that generalize but do not extrapolate, meaning that their accuracy is high over short sequences but arbitrarily poor over long ones (see \cite{cohen2022extrapolation}). 
In practice however, when training RNNs using GD, accurate extrapolation is often observed. 
We refer to this phenomenon as the \emph{implicit extrapolation} of~GD. 

As opposed to the implicit generalization of GD, little is formally known about its implicit extrapolation. 
Existing theoretical analyses of the latter focus on linear RNNs~---~also known as \emph{Linear Dynamical Systems} (LDS)~---~and either treat infinitely wide models \citep{emami2021implicit}, or models of finite width that learn from a memoryless teacher \citep{cohen2022extrapolation}.
In these regimes, GD has been argued to exhibit an implicit bias towards short-term memory.
While such results are informative, their generality remains in question, particularly since infinitely wide NNs are known to substantially differ from their finite-width counterparts, and since a memoryless teacher essentially neglects the main characteristic of RNNs (memory).

In this paper, we theoretically investigate the implicit extrapolation of GD when applied to overparameterized finite-width linear RNNs learning from a teacher with memory. 
We consider models with symmetric transition matrices, in the case where a student (learned model) with state space dimension~$d$ is trained on sequences of length~$k$ generated by a teacher with state space dimension~\smash{$\hat{d}$}.
Our interest lies in the overparameterized regime, where $d$ is greater than both $k$ and~\smash{$\hat{d}$}, meaning that the student has state space dimensions large enough to fully agree with the teacher on sequences of length~$k$, while potentially disagreeing with it on longer sequences. As a necessary assumption on initialization, we follow prior work and focus on a certain balancedness condition, which is known (see experiments in \cite{cohen2022extrapolation}, as well as our theoretical analysis) to capture near-zero initialization as commonly employed in practice.

Our main theoretical result states that GD originating from a balanced initialization leads the student to extrapolate, \emph{irrespective of how large its state space dimension is}. 
Key to the result is a surprising connection to a moment matching theorem from \cite{cohen2011use}, whose proof relies on ideas from compressed sensing \citep{Elad2010Sparse,Eldar2021CompressedSensing} and neighborly polytopes \citep{gale1963neighborly}. 
This connection may be of independent interest, and in particular may prove useful in deriving other results concerning the implicit properties of GD.
We corroborate our theory with experiments, which demonstrate extrapolation via learning low-dimensional state spaces in both the analyzed setting and ones involving non-linear RNNs.

The implicit extrapolation of GD is an emerging and exciting area of inquiry.
Our results suggest that short-term memory is not enough for explaining it as previously believed.
We hope the techniques developed in this paper will contribute to a further understanding of this phenomenon.
\section{Related Work}\label{sec:related_work}
The study of linear RNNs, or LDS, has a rich history dating back to at least the early works of Kalman \citep{kalman1960general, kalman1963mathematical}. 
An extensively studied question relevant to extrapolation is that of \textit{system identification}, which explores when the parameters of a teacher LDS can be recovered (see \cite{ljung1999system}).
Another related topic concerns finding compact realizations of systems, i.e.~realizations of the same input-output mapping as a given LDS, with a state space dimension that is lower (see \cite{antoulas2005approximation}).
%
Despite the relation, our focus is fundamentally different from the above~---~we ask what happens when one learns an LDS using GD. 
Since GD is not explicitly designed to find a low-dimensional state space, it is not clear that the application of GD to an overparameterized student allows system identification through a compact realization. 
The fact that it does relate to the implicit properties of GD, and to our knowledge has not been investigated in the classic LDS literature.

The implicit generalization of GD in training RNNs has been a subject of theoretical study for at least several years (see, e.g.,~\cite{hardt2016gradient, allen2019convergence, lim2021noisy}).
In contrast, works analyzing the implicit extrapolation of GD have surfaced only recently, specifically in~\cite{emami2021implicit} and~\cite{cohen2022extrapolation}
\cite{emami2021implicit}~analyzes linear RNNs in the infinite width regime, suggesting that in this case GD is implicitly biased towards impulse responses corresponding to short-term memory. 
\cite{cohen2022extrapolation}~studies finite-width linear RNNs (as we do), showing that when the teacher is memoryless (has state space dimension zero), GD emanating from a balanced initialization successfully extrapolates.
Our work tackles an arguably more realistic and challenging setting~---~we analyze the regime in which the teacher has memory.
Our results suggest that the implicit extrapolation of GD does not originate from a bias towards short-term memory, but rather a tendency to learn low-dimensional state spaces.\footnote{There is no formal contradiction between our results and those of \citep{emami2021implicit} and \citep{cohen2022extrapolation}.  These works make restrictive assumptions (namely, the former assumes that the teacher is stable and its impulse response decays exponentially fast, and the latter assumes that the teacher is memoryless) under which implicit extrapolation via learning low-dimensional state spaces leads to solutions with short-term memory. Our work on the other hand is not limited by these assumptions, and we show that in cases where they are violated, learning yields solutions with low-dimensional state spaces that do \emph{not} result in short-term memory.}
We note that there have been works studying extrapolation in the context of non-recurrent NNs, e.g. \cite{xu2020neural}.
This type of extrapolation deals with the behavior of learned functions outside the support of the training distribution, and thus fundamentally different from the type of extrapolation we consider, which deals with the behavior of learned functions over sequences longer than those seen in training.

Linear RNNs fundamentally differ from the commonly studied model of linear (feed-forward) NNs (see, e.g., \cite{arora2018optimization,ji2018gradient,arora2019convergence,arora2019implicit,razin2020implicit}). 
One of the key differences is that a linear RNN entails different powers of a parameter (transition) matrix, leading to a loss function which roughly corresponds to a sum of losses for multiple linear NNs having different architectures and shared weights.
This precludes the use of a vast array of theoretical tools tailored for linear NNs, rendering the analysis of linear RNNs technically challenging.

On the empirical side, extrapolation of NNs to sequence lengths beyond those seen in training has been experimentally demonstrated in numerous recent works, covering both modern language and attention models \citep{press2022train, anil2022exploring, zhang2022unveiling}, and RNNs with transition matrices of particular forms \citep{gu2022efficiently, gu2021combining, gu2020hippo, gupta2022diagonal}. 
The current paper is motivated by these findings, and takes a step towards theoretically explaining them.
\section{Linear Recurrent Neural Networks \label{sec:setup}}

Our theoretical analysis applies to single-input single-output (SISO) linear RNNs with symmetric transition matrices.
Given a state space dimension $d \in \mathbb{N}$, this model is defined by the update rule:
\begin{equation}
    \label{eq:lin_rnn}
    \vs_{t+1}=\mA \vs_t+\mB x_t,
    \quad
    y_t=\mC \vs_t,
    \quad
    t = 0 , 1 , 2 , \ldots
    \text{\,,}
\end{equation}
where
$\mA \in\mathbb{R}^{d\times d}$, $\mB \in \mathbb{R}^{d\times 1}$ and $\mC \in \mathbb{R}^{1\times d}$ are configurable parameters, the \emph{transition matrix}~$\mA$ satisfies $\mA = \mA^\top$;
$x_0 , x_1 , \ldots \in \mathbb{R}$ form an input sequence;
$y_0 , y_1 , \ldots \in \mathbb{R}$ form the corresponding output sequence; and 
$\vs_t\in\mathbb{R}^{d\times 1}$ represents the internal state at time~$t$, assumed to be equal to zero at the outset (i.e.~it is assumed that $\vs_0=\mathbf{0}$).
As with any linear time-invariant system \citep{porat1996course}, the input-output mapping realized by the RNN is determined by its impulse response.
\begin{definition}\label{def:impulse_response}
The \textbf{impulse response} of the RNN is the output sequence corresponding to the input sequence $( x_0 , x_1 , x_2 , \ldots ) = ( 1,  0 , 0 , \ldots )$.
Namely, it is the sequence $(\mC\mB,\mC\mA\mB,\mC\mA^2\mB,\dots)$.
\end{definition}
For brevity, we employ the shorthand $\Theta := (\mA,\mB,\mC)$.
The $d \times d$ symmetric transition matrix~$\mA$ is parameterized through a $d ( d + 1 ) /2$-dimensional vector holding its upper triangular elements, and with a slight overloading of notation, the symbol~$\mA$ is also used to refer to this parameterization.

We note that our theory readily extends to multiple-input multiple-output (MIMO) networks, and the focus on the SISO case is merely for simplicity of presentation.
Note also that the restriction to symmetric transition matrices is customary in both theory \citep{hazan2018spectral} and practice \citep{gupta2022diagonal}, and represents a generalization of the \emph{canonical modal form}, which under mild non-degeneracy conditions does not limit generality \citep{boyd2006ee263}.

Given a length~$k$ input sequence, $\mathbf{x}=(x_0,\dots,x_{k-1}) \in \mathbb{R}^k$, consider the output at the last time step, i.e.~$y:=y_k \in \mathbb{R}$, and denote it by~$RNN ( \vx )$.
Using this output as a label, we define an empirical loss induced by a training set $S=\left\lbrace \left(\vx^{(1)},y^{(1)}\right),\dots, \left(\vx^{(N)},y^{(N)}\right) \right\rbrace \subset \mathbb{R}^k \times \mathbb{R}$:
\begin{equation}\label{eq:empirical_loss_packed}
    \mathcal{L}_S(\mA,\mB,\mC)=\frac{1}{N}\sum_{i=1}^N \ell\left( RNN\left(\vx^{(i)}\right), y^{(i)}\right),
\end{equation}
where $\ell(y,\hat{y})=(y-\hat{y})^2$ is the square loss. 
By the update rule of the RNN (\Eqref{eq:lin_rnn}), we have:
\begin{equation}\label{eq:empirical_loss}
    \mathcal{L}_S(\mA,\mB,\mC) = \frac{1}{N}\sum_{i=1}^N \left( \sum_{j=0}^{k-1}\mC\mA^{k-1-j}\mB x_j^{(i)} - y^{(i)} \right)^2.
\end{equation}
Suppose that ground truth labels are generated by an RNN as defined in \Eqref{eq:lin_rnn}, and denote the state space dimension and parameters of this \emph{teacher} network by \smash{$\hat{d}$} and \smash{$\hat{\Theta} = (\hat{\mA},\hat{\mB},\hat{\mC})$} respectively.
We employ the common assumption (e.g., see \cite{hardt2016gradient}) by which input sequences are drawn from a whitened distribution, i.e.~a distribution where $\E\left[ x_j x_{j'} \right]$ equals~$1$ if $j = j'$ and~$0$ otherwise.
The population loss over length~$k$ sequences can then be written as (see Lemma~\ref{lemma:expected_loss}):
\begin{equation}\label{eq:population_loss}
    \mathcal{L}(\mA,\mB,\mC)=\sum_{j=0}^{k-1} \left(\mC\mA^j \mB-\hat{\mC}\hat{\mA}^j\hat{\mB}\right)^2
    \text{\,.}
\end{equation}
\Eqref{eq:population_loss} implies that a solution $\Theta = ( \mA , \mB , \mC )$ achieves zero population loss over length~$k$ sequences if and only if \smash{$\mC \mA^j \mB=\hat{\mC} \hat{\mA}^j \hat{\mB}$} for $j=0,\dots,k-1$.
To what extent does such a solution imply that the \emph{student} (i.e., the learned RNN) extrapolates to longer sequences?
This depends on how close $\mC \mA^j \mB$ is to \smash{$\hat{\mC} \hat{\mA}^j \hat{\mB}$} for $j \geq k$.
\begin{definition}\label{def:extrapolation}
For $\epsilon \geq 0$ and $q \in \mathbb{N}$, we say that the student \textbf{$\epsilon$-extrapolates with horizon $q$} with respect to (w.r.t) the teacher if: 
\begin{equation}
    |\mC\mA^j \mB-\hat{\mC}\hat{\mA}^j\hat{\mB}| \leq \epsilon,
    \quad
    \forall j \in \{ 0 , 1 , \ldots , q - 1 \}
    \text{\,.}
\end{equation}
If the above holds for all $q \in \mathbb{N}$ then the student is said to \textbf{$\epsilon$-extrapolate} w.r.t the teacher, and if it holds for all $q \in \mathbb{N}$ with $\epsilon = 0$ then the student is simply said to \textbf{extrapolate} w.r.t the teacher.
\end{definition}
Per Definition~\ref{def:extrapolation}, $\epsilon$-extrapolation with horizon~$q$ is equivalent to the first $q$~elements of the student's impulse response being $\epsilon$-close to those of the teacher's, whereas extrapolation means that the student's impulse response fully coincides with the teacher's.
The latter condition implies that the student realizes the same input-output mapping as the teacher, for any sequence length (this corresponds to the notion of system identification; see \Secref{sec:related_work}).

Notice that when the student is \emph{overparameterized}, in the sense that $d$ is greater than $k$ and~\smash{$\hat{d}$}, it may perfectly generalize, i.e.~lead the population loss over length~$k$ sequences (\Eqref{eq:population_loss}) to equal zero, and yet fail to extrapolate, as stated in the following proposition.
\begin{proposition}\label{prop:symmetric_lds_expressivity}
Assume $d > k$, and let $\epsilon \geq 0$ and $q \in \{ k + 1 , k + 2 , \ldots\}$.
Then, for any teacher parameters \smash{$\hat{\Theta}$}, there exist student parameters $\Theta$ with which the population loss in \Eqref{eq:population_loss} equals zero, and yet the student does \emph{\textbf{not}} $\epsilon$-extrapolate with horizon $q$.
\end{proposition}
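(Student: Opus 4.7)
The plan is to exploit the gap $d - k > 0$ in degrees of freedom. After orthogonally diagonalizing the symmetric transition matrix $\mA$, the student's impulse response at time $j$ takes the form $\sum_{i=1}^{d} w_i \lambda_i^j$, where $\lambda_1,\dots,\lambda_d$ are the eigenvalues of $\mA$ and the coefficients $w_i$ depend linearly on $\mB$ and $\mC$. Matching the teacher's first $k$ impulse response entries $\hat m_j := \hat\mC \hat\mA^j \hat\mB$ imposes only $k$ linear constraints on the $d$-dimensional vector $(w_1,\dots,w_d)$, so the zero-loss set is a $(d-k)$-dimensional affine subspace. I will produce an element of this subspace whose $k$-th impulse response entry is arbitrarily far from $\hat m_k$.

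Concretely, I would pick any $d$ distinct real numbers $\lambda_1,\dots,\lambda_d$ and set $\mA := \mathrm{diag}(\lambda_1,\dots,\lambda_d)$ (which is symmetric), $\mB := (1,\dots,1)^\top$, and $\mC := (w_1,\dots,w_d)$ for a vector $w \in \mathbb{R}^d$ to be chosen. Then $\mC \mA^j \mB = \sum_{i=1}^d w_i \lambda_i^j$. Let $V_k \in \mathbb{R}^{k\times d}$ denote the Vandermonde-like matrix with $(V_k)_{j+1,\,i} = \lambda_i^j$ for $j \in \{0,\dots,k-1\}$. Since the $\lambda_i$ are distinct and $d \geq k$, $V_k$ has full row rank $k$, so there exists a particular solution $w_0 \in \mathbb{R}^d$ to $V_k w_0 = (\hat m_0,\dots,\hat m_{k-1})^\top$. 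With $\mC := w_0^\top$, the student's first $k$ impulse response entries coincide with the teacher's and the population loss in \Eqref{eq:population_loss} vanishes.

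Next I would perturb $w_0$ along a direction that preserves matching at indices $0,\dots,k-1$ but breaks it at index $k$. Since $d > k$, the $(k{+}1)\times d$ Vandermonde-like matrix $V_{k+1}$ obtained by appending the row $(\lambda_1^k,\dots,\lambda_d^k)$ to $V_k$ has rank $k+1$ (any Vandermonde matrix with distinct nodes and at least as many columns as rows has full row rank). Hence $(\lambda_1^k,\dots,\lambda_d^k)$ is not in the row span of $V_k$, which is equivalent to the existence of $v \in \ker V_k$ with $\sum_{i=1}^d v_i \lambda_i^k \neq 0$. Setting $w := w_0 + \alpha v$ keeps $V_k w = V_k w_0$, so zero training loss is retained, while $\bigl|\sum_i w_i \lambda_i^k - \hat m_k\bigr| = |\alpha|\,\bigl|\sum_i v_i \lambda_i^k\bigr|$ can be made to exceed any prescribed $\epsilon \geq 0$ by choosing $|\alpha|$ sufficiently large. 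This yields non-$\epsilon$-extrapolation at index $k$, and hence at any horizon $q \geq k+1$.

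I expect the main (and essentially only) subtle point to be the Vandermonde rank claim used to certify the existence of the null-space direction $v$; everything else reduces to linear algebra and an explicit construction. This is precisely the step that requires the overparameterization hypothesis $d > k$: if $d = k$ then $\ker V_k = \{0\}$ and no such perturbation exists, whereas $d > k$ turns $V_{k+1}$ into a Vandermonde matrix with more columns than rows, forcing its full row rank and hence producing the desired $v$.
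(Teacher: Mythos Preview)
Your argument is correct and follows essentially the same route as the paper: diagonalize the symmetric transition matrix, express the impulse response via a Vandermonde system in the eigenvalues, and use the excess $d-k$ degrees of freedom to freely assign the $k$th (and later) impulse response entries. The paper phrases this by inverting the full $d\times d$ Vandermonde matrix to set the first $d$ impulse response entries arbitrarily, while you instead work with the $k\times d$ system and a null-space perturbation certified by the rank of $V_{k+1}$; these are equivalent. One minor slip: the displayed equality $\bigl|\sum_i w_i \lambda_i^k - \hat m_k\bigr| = |\alpha|\,\bigl|\sum_i v_i \lambda_i^k\bigr|$ is not generally true, since $\sum_i (w_0)_i \lambda_i^k$ need not equal $\hat m_k$; the left-hand side is an affine function of $\alpha$ with nonzero slope, which still suffices to exceed any $\epsilon$ for suitable $\alpha$.
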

\begin{sproof}[Proof sketch (for complete proof see \Appref{sec:apdx:perfect_generalization_failed_extrapolation}).]
The result follows from the fact that the first~$d$ elements of the student's impulse response can be assigned freely via a proper choice of~$\Theta$.
\end{sproof}
We are interested in the extent to which student parameters learned by GD extrapolate in the overparameterized regime.
Proposition~\ref{prop:symmetric_lds_expressivity} implies that, regardless of how many (length~$k$) sequences are used in training, if GD leads to any form of extrapolation, it must be a result of some implicit bias induced by the algorithm.
Note that in our setting, extrapolation cannot be explained via classic tools from statistical learning theory, as evaluation over sequences longer than those seen in training violates the standard assumption of train and test data originating from the same distribution.

To decouple the question of extrapolation from that of generalization, we consider the case where the training set~$S$ is large, or more formally, where the empirical loss~$\mathcal{L}_S ( \cdot )$ (\Eqref{eq:empirical_loss}) is well represented by the population loss~$\mathcal{L} ( \cdot )$ (\Eqref{eq:population_loss}). 
We model GD with small step size via Gradient Flow (GF), as customary in the theory of NNs~---~see \citet{saxe2013exact, gunasekar2017implicit, arora2018optimization, arora2019implicit, lyu2019gradient, li2020towards, azulay2021implicit} for examples where it is used and~\cite{elkabetz2021continuous} for a theoretical justification of its usage.
Using the GF formulation, we analyze the following dynamics:
\begin{equation}
    \dot{\alpha} ( \tau ) := \frac{d}{d \tau} \alpha ( \tau ) = - \frac{\partial}{\partial \alpha} \mathcal{L} \big( \mA ( \tau ) , \mB ( \tau ) , \mC ( \tau ) \big)
    ~ , ~
    \tau \geq 0
    \text{\,,}
\end{equation}
where $\alpha\in\lbrace \mA,\mB,\mC \rbrace$. 
If no assumption on initialization is made, no form of extrapolation can be established (indeed, the initial point may be a global minimizer of~$\mathcal{L} ( \cdot )$ that fails to extrapolate, and GF will stay there).
Following prior work (see \cite{cohen2022extrapolation}), we assume that the initialization adheres to the following balancedness condition:
\begin{definition}\label{def:balanced}
An RNN with parameters $\Theta = (\mA,\mB,\mC)$ is said to be \textbf{balanced} if $\mB=\mC^\top$.
\end{definition}
It was shown empirically in \cite{cohen2022extrapolation} that the balancedness condition captures near-zero initialization as commonly employed in practice.
We support this finding theoretically in Section~\ref{sec:relaxed_balanced_assumption}.
Aside from the initialization of the student, we will also assume that the teacher adheres to the balancedness condition.

\section{Theoretical Analysis}\label{sec:analysis}

We turn to our theoretical analysis.
\Secref{sec:exact_extrapolation} proves that in the setting of Section~\ref{sec:setup}, convergence of GF to a zero loss solution leads the student to extrapolate, \emph{irrespective of how large its state space dimension is}.
\Secref{sec:approx_extrapolation} extends this result by establishing that, under mild conditions, approximate convergence leads to approximate extrapolation.
The results of sections \ref{sec:exact_extrapolation} and~\ref{sec:approx_extrapolation} assume that GF emanates from a balanced initialization, which empirically is known to capture near-zero initialization as commonly employed in practice (see \Secref{sec:setup}).
\Secref{sec:relaxed_balanced_assumption} theoretically supports this empirical premise, by showing that with high probability, random near-zero initialization leads to balancedness.

We introduce notations that will be used throughout the analysis. 
For a matrix $\mQ\in\R^{m\times n}$, we let $\|\mQ\|_F$, $\|\mQ\|_{\infty}$ and~$\|\mQ\|_2$ denote the Frobenius, $\ell_{\infty}$ and $\ell_2$ (spectral) norms, respectively.
For a vector $\vv \in \R^m$, we use $\|\vv \|$ to denote the Euclidean norm and~$\evv_i$ to denote its $i^{th}$ entry.

\subsection{Convergence Leads to Extrapolation} \label{sec:exact_extrapolation}

Theoretical analyses of implicit generalization often assume convergence to a solution attaining zero loss (see, e.g., \cite{azulay2021implicit, gunasekar2017implicit, lyu2019gradient, woodworth2020kernel}).
Under such an assumption, Theorem~\ref{thm:exact_extrapolation} below establishes implicit extrapolation, i.e.~that the solution to which GF converges extrapolates, \emph{irrespective of how large the student's state space dimension~$d$ is}.
A condition posed by the theorem is that the training sequence length~$k$ is greater than two times the teacher's state space dimension~\smash{$\hat{d}$}.
This condition is necessary~---~see Appendix~\ref{sec:apdx:lower_bound_on_seq_len} for theoretical justification and \Secref{sec:experiments} for empirical demonstration.
\begin{theorem}\label{thm:exact_extrapolation}
Assume that $d > k > 2\hat{d}$, the teacher parameters~$\hat{\Theta}$ are balanced (Definition~\ref{def:balanced}), and the student parameters~$\Theta$ are learned by applying GF to the loss~$\mathcal{L} ( \cdot )$ (\Eqref{eq:population_loss}) starting from a balanced initialization.
Then, if GF converges to a point~$\Theta^*$ satisfying $\mathcal{L} ( \Theta^* ) = 0$, this point extrapolates (Definition~\ref{def:extrapolation}).
\end{theorem}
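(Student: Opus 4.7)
My plan is to reduce the claim to a uniqueness statement for the truncated moment problem with positive atomic measures on $\R$, through the structure that symmetry of $\mA$ plus balancedness imposes. The guiding observation is that when $\mA^\ast = (\mA^\ast)^\top$ and $\mB^\ast = (\mC^\ast)^\top$, each impulse-response entry $\mC^\ast (\mA^\ast)^j \mB^\ast$ is the $j$-th moment of a \emph{positive} atomic measure with at most $d$ atoms; the (assumed balanced) teacher yields an analogous measure with at most $\hat{d}$ atoms; and $\mathcal{L}(\Theta^\ast) = 0$ forces the first $k$ such moments to match. The hypothesis $k > 2\hat{d}$ is exactly the threshold at which a classical moment-matching result forces the two measures to coincide, so \emph{all} moments agree and extrapolation per Definition~\ref{def:extrapolation} follows.

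\textbf{Carrying it out.} First I would verify that balancedness is preserved along the flow. Since $\mA$ is parameterized by its upper-triangular entries, symmetry of $\mA(\tau)$ is automatic, and a short computation of $\partial \mathcal{L}/\partial \mB$ and $\partial \mathcal{L}/\partial \mC$ from \Eqref{eq:population_loss} shows that, under symmetry of $\mA(\tau)$ together with $\mB(\tau) = \mC(\tau)^\top$, the two gradients are transposes of one another. Hence $\tfrac{d}{d\tau}\bigl(\mB(\tau) - \mC(\tau)^\top\bigr) \equiv 0$ along GF and $\Theta^\ast$ is balanced. Diagonalizing $\mA^\ast = \mU \mLambda \mU^\top$ with $\mU$ orthogonal and $\mLambda = \mathrm{diag}(\lambda_1,\ldots,\lambda_d)$, and letting $\vw := \mU^\top \mB^\ast$, balancedness gives
\[
\mC^\ast (\mA^\ast)^j \mB^\ast \;=\; (\mB^\ast)^\top (\mA^\ast)^j \mB^\ast \;=\; \sum_{i=1}^{d} w_i^2 \, \lambda_i^j, \qquad j = 0, 1, 2, \ldots,
\]
exhibiting the student's impulse response as the moment sequence of the nonnegative atomic measure $\mu^\ast := \sum_{i=1}^{d} w_i^2\, \delta_{\lambda_i}$. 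The identical construction applied to the balanced teacher produces a positive atomic measure $\hat{\mu}$ supported on at most $\hat{d}$ atoms, and $\mathcal{L}(\Theta^\ast) = 0$ becomes equality of the first $k$ moments of $\mu^\ast$ and $\hat{\mu}$. Invoking the moment-matching theorem of~\cite{cohen2011use} --- which, via compressed-sensing and neighborly-polytope ideas, establishes that a finite positive atomic measure on $\R$ with at most $\hat{d}$ atoms is uniquely determined within the cone of finite positive measures by its first $2\hat{d}$ moments --- together with $k > 2\hat{d}$ yields $\mu^\ast = \hat{\mu}$, whence $\mC^\ast (\mA^\ast)^j \mB^\ast = \hat{\mC}\hat{\mA}^j \hat{\mB}$ for every $j \in \mathbb{N}$.

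\textbf{Main obstacle.} The step I expect to be most delicate is arranging for the moment-matching theorem to be applicable with the hypotheses we actually have. A priori $\mu^\ast$ could have up to $d \gg \hat{d}$ atoms, and uniqueness from $2\hat{d}$ moments fails outright for signed measures of such size --- it is precisely the \emph{positivity} of the weights $w_i^2$ that balancedness creates that unlocks the result of~\cite{cohen2011use}. Thus the short dynamical calculation establishing preservation of balancedness is doing substantive work: it is what supplies the positivity hypothesis required by the measure-theoretic theorem. The threshold $k > 2\hat{d}$ is likewise not incidental --- it is the sharp moment count for uniqueness in the positive moment problem --- and this is what underlies the necessity discussed in Appendix~\ref{sec:apdx:lower_bound_on_seq_len}.
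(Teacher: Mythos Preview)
Your proposal is correct and follows the paper's approach essentially step for step: establish that balancedness is preserved along GF (the paper's Lemma~\ref{lemma:balanced}), diagonalize the symmetric transition matrix to exhibit each impulse-response entry as a moment of a nonnegative atomic measure, and then invoke the moment-matching result of \cite{cohen2011use} together with $k > 2\hat{d}$ to conclude equality of the measures (the paper's Lemma~\ref{lemma:exact_extrapolation}). The only cosmetic difference is that the paper normalizes to probability distributions (using that the $0$th moments already match) before quoting Cohen's theorem, whereas you work directly in the cone of finite positive measures.
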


In order to prove Theorem~\ref{thm:exact_extrapolation}, we introduce two lemmas:
Lemma~\ref{lemma:balanced}, which shows that balancedness is preserved under GF;
and 
Lemma~\ref{lemma:exact_extrapolation}, which (through a surprising connection to a moment problem from statistics) establishes that a balanced solution attaining zero loss necessarily extrapolates. With Lemmas \ref{lemma:balanced} and~\ref{lemma:exact_extrapolation} in place, the proof of Theorem~\ref{thm:exact_extrapolation} readily follows.

\begin{restatable}{lemma}{balanced}
\label{lemma:balanced}
Let $\Theta(\tau)$, with $\tau \geq 0$, be a curve brought forth by applying GF to the loss~$\mathcal{L} ( \cdot )$ starting from a balanced initialization.
Then, $\Theta(\tau)$ is balanced for every $\tau \geq 0$.
\end{restatable}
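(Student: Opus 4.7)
The plan is to compute the gradient flow dynamics of $\mB$ and $\mC$ explicitly, exploit the symmetry of $\mA$ to show that the difference $\mB - \mC^\top$ evolves according to a linear ODE with zero right-hand side at $\mB = \mC^\top$, and then conclude via the uniqueness of solutions to linear ODEs.

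More concretely, I would first expand the loss as $\mathcal{L}(\Theta) = \sum_{j=0}^{k-1} r_j^2$ with $r_j := \mC \mA^j \mB - \hat{\mC}\hat{\mA}^j \hat{\mB}$, and compute
\[
\frac{\partial \mathcal{L}}{\partial \mB} = 2 \sum_{j=0}^{k-1} r_j \, \mA^j \mC^\top,
\qquad
\frac{\partial \mathcal{L}}{\partial \mC} = 2 \sum_{j=0}^{k-1} r_j \, \mB^\top \mA^j,
\]
where I use the assumption $\mA = \mA^\top$ to bring $(\mA^j)^\top$ to $\mA^j$. Transposing the second identity yields $\dot{\mC}^\top(\tau) = -2\sum_{j=0}^{k-1} r_j(\tau)\, \mA(\tau)^j \mB(\tau)$, and hence
\[
\dot{\mB}(\tau) - \dot{\mC}^\top(\tau) \;=\; 2 \sum_{j=0}^{k-1} r_j(\tau)\, \mA(\tau)^j \bigl( \mB(\tau) - \mC(\tau)^\top \bigr).
\]

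Setting $\Delta(\tau) := \mB(\tau) - \mC(\tau)^\top$ and $M(\tau) := 2\sum_{j=0}^{k-1} r_j(\tau)\, \mA(\tau)^j$, the above reads $\dot{\Delta}(\tau) = M(\tau)\, \Delta(\tau)$. Since GF trajectories are continuous and stay in a locally bounded region on any compact time interval, $M(\tau)$ is continuous in $\tau$, so the ODE admits a unique solution for the initial condition $\Delta(0) = 0$ (which holds by the assumed balancedness at initialization). That unique solution is clearly $\Delta \equiv 0$, giving $\mB(\tau) = \mC(\tau)^\top$ for all $\tau \geq 0$. Note that $\mA(\tau)$ automatically remains symmetric because $\mA$ is parameterized by its upper-triangular entries, so no separate argument is needed for the transition matrix.

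I do not expect a serious obstacle: the main technical step is the symmetry-driven cancellation in the difference of gradients, and the rest is standard ODE uniqueness. The only subtlety worth flagging is verifying that $\mA^\top = \mA$ is preserved along the flow, which is immediate from the chosen parameterization of $\mA$ via its upper triangle, and the continuity (hence local integrability) of $M(\tau)$ along any finite segment of the GF trajectory.
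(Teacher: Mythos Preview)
Your proposal is correct and rests on the same key observation as the paper's proof, namely that the symmetry $\mA=\mA^\top$ forces $\partial\mathcal{L}/\partial\mB=(\partial\mathcal{L}/\partial\mC)^\top$ whenever $\mB=\mC^\top$. The execution, however, differs: the paper first proves the statement for \emph{gradient descent} by induction on the iteration index (showing $\mB_{t+1}=\mC_{t+1}^\top$ from $\mB_t=\mC_t^\top$), and then transfers the conclusion to gradient flow by invoking that GD with small step size approximates GF. You instead work directly with GF, deriving the linear ODE $\dot\Delta(\tau)=M(\tau)\Delta(\tau)$ for $\Delta=\mB-\mC^\top$ and invoking uniqueness to conclude $\Delta\equiv 0$. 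Your route is more direct and avoids the detour through discrete-time dynamics and the GD$\to$GF approximation argument; the paper's route has the minor advantage of yielding the GD statement as a byproduct. Your remark that symmetry of $\mA(\tau)$ is automatic from the upper-triangular parameterization is also apt and matches the paper's setup.
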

\begin{sproof}[Proof sketch (for complete proof see \Appref{sec:lemma_balanced_proof}).]
The result follows from the symmetric role of $\mB$ and~$\mC$ in the loss $\mathcal{L} ( \cdot )$.
\end{sproof}

\begin{lemma}\label{lemma:exact_extrapolation}
Suppose that $d > k > 2\hat{d}$, the teacher is balanced, and that the student parameters $\Theta$ are balanced and satisfy $\mathcal{L} ( \Theta ) = 0$. Then $\Theta$ extrapolates.
\end{lemma}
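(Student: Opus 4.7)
The plan is to reduce extrapolation to a moment-matching claim about non-negative atomic measures, and then invoke a sparse moment uniqueness result. Since $\mA$ and $\hat{\mA}$ are symmetric, I would first orthogonally diagonalize $\mA = \mU \mLambda \mU^\top$ with eigenvalues $\lambda_1, \ldots, \lambda_d$ on the diagonal of $\mLambda$, and similarly for $\hat{\mA}$. Exploiting balancedness $\mB = \mC^\top$ rewrites the student's impulse response as
\[
    \mC \mA^j \mB \;=\; (\mU^\top \mC^\top)^\top \mLambda^j (\mU^\top \mC^\top) \;=\; \sum_{i=1}^d w_i^2 \,\lambda_i^j,
\]
where $w_i$ denotes the $i$-th entry of $\mU^\top \mC^\top$. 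Hence $\mC \mA^j \mB$ is precisely the $j$-th moment of the non-negative atomic measure $\mu := \sum_{i=1}^d w_i^2\,\delta_{\lambda_i}$ supported on at most $d$ points. The identical construction on the teacher yields a non-negative atomic measure $\hat{\mu}$ supported on at most $\hat{d}$ points whose $j$-th moment is $\hat{\mC}\hat{\mA}^j\hat{\mB}$.

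The zero-loss hypothesis, via \Eqref{eq:population_loss}, then translates to $\mu$ and $\hat{\mu}$ agreeing on every moment of order $j \in \{0, 1, \ldots, k-1\}$. Extrapolation (Definition~\ref{def:extrapolation}) is the statement that they agree on moments of \emph{all} orders, and since both measures have finite support, this is equivalent to $\mu = \hat{\mu}$. The lemma therefore reduces to: two non-negative atomic measures on $\mathbb{R}$ that agree on the first $k$ moments, one of which has at most $\hat{d}$ atoms, must coincide whenever $k > 2\hat{d}$.

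For this final step I would invoke the sparse moment-matching theorem from \cite{cohen2011use} that is flagged in the introduction. Its essence can be captured by considering the non-negative polynomial $p(x) = \prod_{i=1}^{\hat{d}}(x - \hat{\lambda}_i)^2$, which has degree at most $2\hat{d}$ and vanishes on $\mathrm{supp}(\hat{\mu})$. Thus $\int p\,d\hat{\mu} = 0$; and since moment matching up to order $2\hat{d}$ (available because $k - 1 \ge 2\hat{d}$) preserves $\int p$, we also get $\int p\,d\mu = 0$. Non-negativity of $p$ and of $\mu$ then forces $\mathrm{supp}(\mu) \subseteq \mathrm{supp}(\hat{\mu})$, after which matching of $\hat{d}$ further moments on this common finite support pins down the atomic weights via an invertible Vandermonde system, yielding $\mu = \hat{\mu}$.

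The main obstacle is exactly the fact that $d$ is allowed to be much larger than $k$ and $\hat{d}$: a naive dimension count on the signed measure $\mu - \hat{\mu}$, which may have up to $d + \hat{d}$ atoms (far exceeding the $k$ matching linear constraints), is insufficient. It is essential to use that both measures are non-negative; this positivity is what powers the compressed-sensing / neighborly-polytopes argument underlying \cite{cohen2011use}, and what makes the threshold $k > 2\hat{d}$ sufficient rather than something scaling with the student dimension $d$.
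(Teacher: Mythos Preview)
Your argument is correct and follows the same overall strategy as the paper: diagonalize, use balancedness to obtain non-negative atomic measures whose $j$th moments are the impulse-response entries, and then appeal to a sparse moment-uniqueness result tied to the threshold $k>2\hat{d}$. Two differences are worth noting. First, the paper normalizes by $\hat{\mC}\hat{\mB}$ to work with \emph{probability} distributions (random variables $\hat{Z}$ and $Z$), which forces a separate treatment of the degenerate case $\hat{\mC}\hat{\mB}=0$; your formulation with general non-negative measures sidesteps this case split cleanly. Second, the paper invokes the uniqueness result of \cite{cohen2011use} (and \cite{wu2020optimal}) as a black box, whereas you supply a self-contained elementary proof via the squared-Vandermonde polynomial $p(x)=\prod_i(x-\hat{\lambda}_i)^2$: non-negativity of $p$ together with $\int p\,d\mu=\int p\,d\hat{\mu}=0$ forces $\mathrm{supp}(\mu)\subseteq\{\hat{\lambda}_1,\ldots,\hat{\lambda}_{\hat{d}}\}$, after which a Vandermonde system on the (at most $\hat{d}$) distinct support points fixes the weights. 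One small wording issue: you do not need ``$\hat{d}$ further moments'' beyond order $2\hat{d}$---the moments of order $0,\ldots,\hat{d}-1$ already available suffice for the Vandermonde step (and if some $\hat{\lambda}_i$ coincide, reduce to the distinct support first so the system is invertible).
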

\begin{sproof}[Proof sketch (for complete proof see \Appref{sec:exact_extrapolation_proof}).]
The proof is based on a surprising connection that we draw to the \textit{moment problem} from statistics (recovery of a probability distribution from its moments), which has been studied for decades (see, e.g., \cite{schmudgen2017moment}). 

Without loss of generality, we may assume that \smash{$\hat{\mA}$} is diagonal (if this is not the case then we apply an orthogonal eigendecomposition to~\smash{$\hat{\mA}$} and subsequently absorb eigenvectors into \smash{$\hat{\mB}$} and~\smash{$\hat{\mC}$}).
We may also assume that \smash{$\hat{\mC}\hat{\mB} = 1$} (otherwise we absorb a scaling factor into $\mB$ and/or~$\mC$).
Since \smash{$\hat{\mC}^\top=\hat{\mB}$} (teacher is balanced), we may define a probability vector (i.e.~a vector with non-negative entries summing up to one) \smash{$\hat{\vp} \in \mathbb{R}^{\hat{d}}$} via \smash{$\hat{p}_i=\hat{\emC}_i\hat{\emB}_i$}, \smash{$i = 1 , \ldots , \hat{d}$}.
We let \smash{$\hat{Z}$} denote the random variable supported on \smash{$\{ \hat{A}_{1, 1} , \ldots , \hat{A}_{\hat{d}, \hat{d}} \}$}, which assumes the value \smash{$\hat{A}_{i, i}$} with probability~\smash{$\hat{p}_i$}, \smash{$i = 1 , \ldots , \hat{d}$}.
Notice that for every $j \in \mathbb{N}$:
\[
\hat{\mC} \hat{\mA}^j \hat{\mB} = \sum\nolimits_{i=1}^{\hat{d}} \hat{p}_i \hat{\emA}_{i , i} ^j = \E[\hat{Z}^j]
\text{\,,}
\]
meaning that the elements of the teacher's impulse response are precisely the moments of~\smash{$\hat{Z}$}.

Similarly to above we may assume~$\mA$ is diagonal, and since $\mathcal{L} ( \Theta ) = 0$ it holds that \smash{$\mC \mB = \hat{\mC}\hat{\mB} = 1$}.
We may thus define a probability vector $\vp \in \mathbb{R}^{d}$ via $p_i=\emC_i\emB_i$, $i = 1 , \ldots , d$, and a random variable~$Z$ which assumes the value $A_{i i}$ with probability~$p_i$, $i = 1 , \ldots , d$.
For every $j \in \mathbb{N}$:
\[
\mC \mA^j \mB = \sum\nolimits_{i=1}^d {p}_i \emA_{i , i} ^j = \E[ Z^j]
\text{\,,}
\]
and so the elements of the student's impulse response are precisely the moments of~$Z$.

The probabilistic formulation we set forth admits an interpretation of extrapolation as a moment problem.
Namely, since $\mathcal{L} ( \Theta ) = 0$ (i.e.~\smash{$\mC \mA^j \mB=\hat{\mC} \hat{\mA}^j \hat{\mB}$} for $j=0,\dots,k-1$) the random variables $Z$ and~\smash{$\hat{Z}$} agree on their first $k - 1$ moments, and the question is whether they agree on all higher moments as well.
We note that this question is somewhat more challenging than that tackled in classic instances of the moment problem, since the support of the random variable whose moments we match~($Z$) is not known to coincide with the support of the random variable we seek to recover~(\smash{$\hat{Z}$}).
Luckily, a recent powerful result allows addressing the question we face~---~\cite{cohen2011use} showed that the first $2 n$ moments of a discrete random variable~$X$ taking at most $n \in \mathbb{N}$ values uniquely define~$X$, in the sense that \emph{any} discrete random variable agreeing with these $2 n$ moments must be identical to~$X$.
Translating this result to our setting, we have that if~$Z$ agrees with~\smash{$\hat{Z}$} on its first~\smash{$2 \hat{d}$} moments, it must be identical to~\smash{$\hat{Z}$}, and in particular it must agree with~\smash{$\hat{Z}$} on all higher moments as well. 
The fact that \smash{$k - 1 \geq 2 \hat{d}$} then concludes the proof.



To attain some intuition for the result we imported from \cite{cohen2011use}, consider the simple case where \smash{$\hat{d}=1$}.
The transition matrix \smash{$\hat{\mA}$} is then a scalar \smash{$\hat{a} \in \mathbb{R}$}, the random variable \smash{$\hat{Z}$} is deterministically equal to~\smash{$\hat{a}$}, and the teacher's impulse response is given by the moments \smash{$\E[\hat{Z}^j] = \hat{a}^j$}, $j = 0 , 1 , \ldots$.
Since we assume \smash{$k > 2 \hat{d}$}, the fact that $\mathcal{L} ( \Theta ) = 0$ means the random variable corresponding to the student, $Z$, agrees with the first two moments of~\smash{$\hat{Z}$}.
That is, $Z$~satisfies \smash{$\E[Z] = \hat{a}$} and \smash{$\E[Z^2] = \hat{a}^2$}.
This implies that $\Var[Z]=\E[Z^2] - \E[Z]^2 = 0$, and therefore $Z$ is deterministically equal to~\smash{$\hat{a}$}, i.e.~it is identical to~\smash{$\hat{Z}$}.
The two random variables thus agree on all of their moments, meaning the impulse responses of the student and teacher are the same.
%
\end{sproof}

\begin{proof}[Proof of Theorem~\ref{thm:exact_extrapolation}]
By Lemma~\ref{lemma:balanced} (as well as continuity considerations) $\Theta^*$~is balanced.
Therefore, Lemma~\ref{lemma:exact_extrapolation} implies that it extrapolates.
\end{proof}

\subsection{Approximate Convergence Leads to Approximate Extrapolation} \label{sec:approx_extrapolation}

Theorem~\ref{thm:exact_extrapolation} in \Secref{sec:exact_extrapolation} proves extrapolation in the case where GF converges to a zero loss solution.
Theorem~\ref{thm:main_result} below extends this result by establishing that, under mild conditions, approximate convergence leads to approximate extrapolation~---~or more formally~---~for any $\epsilon > 0$ and $q \in \mathbb{N}$, when GF leads the loss to be sufficiently small, the student $\epsilon$-extrapolates with horizon~$q$.
\begin{theorem}\label{thm:main_result}
Assume the conditions of Theorem~\ref{thm:exact_extrapolation}, and that the teacher parameters~$\hat{\Theta}$ are stable, i.e. the eigenvalues of~\smash{$\hat{\mA}$} are in~$[-1,1]$.
Assume also that~\smash{$\hat{\Theta}$} are non-degenerate, in the sense that the input-output mapping they realize is not identically zero.
Finally, assume that the student parameters~$\Theta$ learned by GF are confined to some bounded domain in parameter space. 
Then, for any $\epsilon > 0$ and $q \in \mathbb{N}$, there exists $\delta(\epsilon, q) > 0$ such that whenever $\mathcal{L} ( \Theta ) \leq \delta(\epsilon, q)$, the student $\epsilon$-extrapolates with horizon~$q$.
\ignore{Assume the conditions of Theorem~\ref{thm:exact_extrapolation}, and that the teacher parameters~$\hat{\Theta}$ are stable, i.e. the singular values of~\smash{$\hat{\mA}$} are in~$[-1,1]$.
Assume also that~\smash{$\hat{\Theta}$} are non-degenerate, in the sense that the input-output mapping they realize is not identically zero.
Then, for any $\epsilon > 0$ and $q \in \mathbb{N}$, there exists $\delta(\epsilon, q) > 0$ which admits the following:
Whenever GF leads the student parameters~$\Theta$ to within (Frobenius) distance $\delta(\epsilon,q)$ from a balanced point~$\Theta^*$ satisfying $\mathcal{L} ( \Theta^* ) = 0$, the student $\epsilon$-extrapolates with horizon~$q$.}
\end{theorem}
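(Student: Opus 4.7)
The plan is to prove Theorem~\ref{thm:main_result} by a compactness-and-continuity argument that upgrades Lemma~\ref{lemma:exact_extrapolation} from its exact statement to a qualitative approximate version. The building blocks are: (a)~Lemma~\ref{lemma:balanced} together with continuity, which keeps every student $\Theta$ along the GF trajectory balanced; (b)~the polynomial (hence continuous) nature of both~$\mathcal{L}(\cdot)$ and the impulse-response map $\Theta\mapsto(\mC\mA^j\mB)_{j\geq 0}$; and (c)~the boundedness assumption on the GF trajectory, which enables extraction of convergent subsequences.

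I would argue by contradiction. Suppose the conclusion fails: there exist $\epsilon>0$, $q\in\mathbb{N}$, and a sequence $\{\Theta_n\}$ of balanced student parameters in the bounded domain with $\mathcal{L}(\Theta_n)\to 0$, such that for each~$n$ some $j_n\in\{0,\ldots,q-1\}$ satisfies $|\mC_n\mA_n^{j_n}\mB_n-\hat{\mC}\hat{\mA}^{j_n}\hat{\mB}|>\epsilon$. By pigeonhole I may pass to a subsequence along which $j_n\equiv j^\star$ is constant, and by Bolzano--Weierstrass to a further subsequence along which $\Theta_n\to\Theta^\star$ in the bounded domain. Balancedness is the closed condition $\mB=\mC^\top$, so $\Theta^\star$ is balanced; continuity of~$\mathcal{L}$ gives $\mathcal{L}(\Theta^\star)=0$; and the arithmetic condition $d>k>2\hat{d}$ is intact. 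Lemma~\ref{lemma:exact_extrapolation} thus yields that $\Theta^\star$ extrapolates, so in particular $\mC^\star(\mA^\star)^{j^\star}\mB^\star=\hat{\mC}\hat{\mA}^{j^\star}\hat{\mB}$. Continuity of $\Theta\mapsto\mC\mA^{j^\star}\mB$ then gives $\mC_n\mA_n^{j^\star}\mB_n\to\hat{\mC}\hat{\mA}^{j^\star}\hat{\mB}$, contradicting the standing lower bound~$\epsilon$. The existence of $\delta(\epsilon,q)$ follows.

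The delicate ingredient is the bounded-domain hypothesis on the GF trajectory, which is precisely what prevents a balanced sequence from escaping to infinity while driving $\mathcal{L}$ to zero (e.g.\ by rescaling eigenvalues of~$\mA$ against the magnitude of $\mB=\mC^\top$). I expect the stability and non-degeneracy assumptions on~$\hat{\Theta}$ to enter by certifying such boundedness under GF in natural regimes, rather than by playing any role in the compactness step itself. The main genuine obstacle in pushing further would be obtaining a \emph{quantitative} modulus for $\delta(\epsilon,q)$: this would require a robust version of Lemma~\ref{lemma:exact_extrapolation}, namely a stability analysis of the moment-recovery result of~\cite{cohen2011use} that controls how perturbations of the first~$k$ moments of the student's random variable~$Z$ propagate to all higher moments (likely via condition-number estimates for Vandermonde-type matrices in the support-recovery step). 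This refinement is substantially more technical and is not needed for the qualitative statement at hand.
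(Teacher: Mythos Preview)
Your compactness argument is correct and gives a clean qualitative proof of the theorem. It is, however, genuinely different from the paper's approach. The paper does not argue by contradiction via subsequences; instead it carries out precisely the ``robust version of Lemma~\ref{lemma:exact_extrapolation}'' that you flag as the harder next step. Concretely, the paper passes through the moment-matching interpretation, then invokes a quantitative stability result for moment recovery (Proposition~2 of \cite{wu2020optimal}) to bound the $1$-Wasserstein distance between the student and teacher distributions by $\Oc(\delta^{1/4\hat d})$, upgrades this to a $p$-Wasserstein bound via the diameter inequality $\Wc_p^p\le \gamma^{p-1}\Wc_1$ \citep{panaretos2019statistical}, and finally controls $|m_p-\hat m_p|$ by $\Wc_p$ \citep{biswas2021bounding}. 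This yields an explicit $\delta(\epsilon,q)$ of order $\epsilon^{4\hat d(q-1)}$. Your route is shorter and more elementary but non-constructive; the paper's route is heavier but delivers a quantitative rate. Your observation that stability and non-degeneracy of~$\hat\Theta$ play no role in the compactness step is accurate: in the paper's proof they are used exactly where you would expect---stability bounds the support diameter~$\gamma$ in the Wasserstein comparison, and non-degeneracy guarantees $\hat\mC\hat\mB>0$ so that the normalized moments $m_j,\hat m_j$ are well-defined.
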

\begin{sproof}[Proof sketch (for complete proof see \Appref{sec:apdx:approx_extrapolation}).]
Let $\delta > 0$ be a constant whose value will be chosen later, and suppose GF reached a point~$\Theta$ satisfying $\mathcal{L} ( \Theta ) \leq \delta$.

Following the proof of Lemma~\ref{lemma:exact_extrapolation}, \smash{$\hat{\Theta}$}~is identified with a distribution supported on the eigenvalues of~\smash{$\hat{\mA}$}, whose $j$th moment is \smash{$\hat{m}_j := \hat{\mC}\hat{\mA}^j\hat{\mB} ( \hat{\mC} \hat{\mB} )^{-1}$} for every $j \in \mathbb{N}$.
Similarly, $\Theta$~is identified with a distribution supported on the eigenvalues of~$\mA$, whose $j$th moment is $m_j := \mC \mA^j \mB ( \mC \mB )^{-1}$ for every $j \in \mathbb{N}$.
The fact that $\mathcal{L} ( \Theta ) \leq \delta$ implies \smash{$| \mC \mB - \hat{\mC} \hat{\mB} | \leq \sqrt{\delta}$}, and in addition \smash{$| \hat{m}_j - m_j | \leq \Oc ( \sqrt{\delta} )$} for every $j = 1 , \ldots , k - 1$. 
To conclude the proof it suffices to show that
\begin{equation}
\label{eq:moments_approx}
| \hat{m}_j - m_j | \leq \Oc ( \epsilon ) \quad \forall j \in \{ 1 , \ldots , q - 1 \}
\end{equation}
given a small enough choice for~$\delta$ (this choice then serves as $\delta ( \epsilon , q )$ in the theorem statement).

We establish \Eqref{eq:moments_approx} by employing the theory of Wasserstein distances \citep{vaserstein1969markov}.
For $p \in \mathbb{N}$, denote by~$\Wc_p$ the $p$-Wasserstein distance between the distributions identified with \smash{$\hat{\Theta}$} and~$\Theta$.
Since \smash{$k > 2 \hat{d}$}, it holds that \smash{$| \hat{m}_j - m_j | \leq \Oc ( \sqrt{\delta} )$} for every \smash{$j = 1 , \ldots , 2 \hat{d}$}.
Proposition~2 in \cite{wu2020optimal} then implies $\Wc_1 \leq \Oc ( \delta^{1 / 4 \hat{d}} )$.
For any $p \in \mathbb{N}$, $\Wc_p \leq \Oc ( \Wc_1^{1 / p})$ (see Section~2.3 in~\cite{panaretos2019statistical}) and \smash{$| \hat{m}_p - m_p | \leq \Oc ( \Wc_p )$} (see Section~1.2 in \cite{biswas2021bounding}).
Combining the latter three inequalities, we have that \smash{$| \hat{m}_p - m_p | \leq \Oc ( \delta^{1 / 4 \hat{d} p} )$} for any $p \in \mathbb{N}$.
Choosing \smash{$\delta = \Oc ( \epsilon^{4 \hat{d} ( q - 1 )} )$} therefore establishes \Eqref{eq:moments_approx}.
\ignore{
In order to show $\epsilon$-extrapolation with horizon $q$ we need to show that $\forall\epsilon>0, q\in\mathbb{N}$, $\exists \delta(\epsilon,q)>0$ such that for any $\Theta'$ satisfying $||\Theta'-\Theta^*||<\delta$, we have for $n=0,\dots,q$ that
\begin{equation}
    \smash{\mC'(\mA')^n\mB'-\hat{\mC}\hat{\mA}^n\hat{\mB}=O(\epsilon)}.
\end{equation}

First, by Theorem~\ref{thm:exact_extrapolation} $\Theta^*$ is an extrapolating solution. Following the interpretation of discrete random variables of Lemma~\ref{lemma:exact_extrapolation} proof, we can bound the Wasserstein-1 distance between the extrapolating distribution defined by $\Theta^*$ and the approximating distribution defined by $\Theta'$. From Proposition 2 in \cite{wu2020optimal}, we have that if $|\mC'(\mA')^n\mB'-\mC^*(\mA^*)^n\mB^*|\le \delta$ for $n\in\lbrace 0,\dots,2\hat{d}\rbrace$, then
\begin{equation}
    \smash{\mathcal{W}_1(\Theta',\Theta^*)=O\left(\delta^{\frac{1}{4\hat{d}}}\right)}.
\end{equation}

The second part of the proof ties the error at step $p$ with the $\mathcal{W}_p$ distance \citep{biswas2021bounding}:
\begin{equation}
    \mC'(\mA')^p\mB'-\mC^*(\mA^*)^p\mB^*\le \mathcal{W}_p.
\end{equation}
The final step of the proof is based on \cite{panaretos2019statistical} and bounds $\mathcal{W}_p$ in terms of $\mathcal{W}_1$, i.e.,
\begin{equation}
    \mathcal{W}_p(\Theta,\Theta^*)= S^{p-1}\mathcal{W}_1(\Theta,\Theta^*),
\end{equation}
where $S$ is the difference between the maximal and minimal eigenvalues in the teacher and student support (due to our assumption of convergence, $S$ is bounded).

Combining the three steps above, we have that if $|\mC'(\mA')^n\mB'-\mC^*(\mA^*)^n\mB^*|\le \delta$ for $n\in\lbrace 0,\dots,2\hat{d}\rbrace$, then $\forall p>n$
\begin{equation}\label{eq:final_inequality}
    \smash{\mC'(\mA')^p\mB'-\mC^*(\mA^*)^p\mB^*\le S^{p-1}O\left(\delta^{\frac{1}{4\hat{d}}}\right)}.
\end{equation}

By setting $\delta<\left(\frac{\epsilon}{c_1\cdot S^{q-1}}\right)^{4\hat{d}}$ the RHS of \Eqref{eq:final_inequality} satisfies $c_1S^{p-1}\delta^{\frac{1}{4\hat{d}}}<\epsilon$.
}
\end{sproof}

\ignore{
\begin{corollary}\label{corollary:extrapolation_for_any_horizon}
\amirg{not sure this corollary is helpful. Also seems strange that thing will work in the non-stable case so need to check this carefully.}
If $\max_{i} A^*_i-\min_{j}A^*_j<1$ we have extrapolation for an infinite horizon.
\end{corollary}
The proof for Corollary \ref{corollary:extrapolation_for_any_horizon} follows by noting that $\exists\delta$ such that if $|\Theta'-\Theta^*|<\delta$, then $S<1$. With $S<1$ the final $\delta$ term does not depend on $q$ and therefore extrapolates for an infinite horizon.
}

\subsection{Balancedness Captures Near-Zero Initialization}
\label{sec:relaxed_balanced_assumption}

Theorems \ref{thm:exact_extrapolation} and~\ref{thm:main_result} assume that GF emanates from a balanced initialization, i.e.~from a point $\Theta = ( \mA , \mB , \mC )$ satisfying $\mB = \mC^\top$.
It was shown in~\cite{cohen2022extrapolation} that theoretical predictions derived assuming balanced initialization faithfully match experiments conducted with near-zero initialization (an initialization commonly used in practice).
Proposition~\ref{prop:implicit_bias_for_balanced_init} below theoretically supports this finding, establishing that with high probability, random near-zero initialization leads GF to arrive at an approximately balanced point, i.e.~a point $\Theta = ( \mA , \mB , \mC )$ for which the difference between $\mB$ and~$\mC^\top$ is negligible compared to their size.
\begin{proposition}\label{prop:implicit_bias_for_balanced_init}
Suppose that:
\emph{(i)}~$d > 20$;
\emph{(ii)}~the teacher parameters~$\hat{\Theta}$ are balanced and are non-degenerate, in the sense that the input-output mapping they realize is not identically zero;
and 
\emph{(iii)}~the student parameters are learned by applying GF to the loss~$\mathcal{L} ( \cdot )$.
Let $\tilde{\Theta}$ be a random point in parameter space, with entries drawn independently from the standard normal distribution.
For $\epsilon > 0$, consider the case where GF emanates from the initialization~$\epsilon \tilde{\Theta}$, and denote the resulting curve by $\Theta_\epsilon ( \tau ) = ( \mA_\epsilon ( \tau ) , \mB_\epsilon ( \tau ) , \mC_\epsilon ( \tau ) )$, with $\tau \geq 0$.
Then, w.p. at least~$0.75$, for every $\epsilon > 0$ there exists $\tau_\epsilon \geq 0$ such that:
\begin{equation}
    \lim_{\epsilon\rightarrow 0^+}\frac{||\mB_{\epsilon}(\tau_\epsilon) - \mC_{\epsilon}^\top(\tau_\epsilon) ||_F}{||\mB_{\epsilon}(\tau_\epsilon) + \mC_{\epsilon}^\top(\tau_\epsilon) ||_F}=0
    \text{\,.}
\end{equation}
\end{proposition}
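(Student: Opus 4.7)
The plan is to linearize the gradient flow about the origin in parameter space, identify the stable mode $\vu := \mB_\epsilon - \mC_\epsilon^\top$ and the unstable mode $\vv := \mB_\epsilon + \mC_\epsilon^\top$, and then choose $\tau_\epsilon$ so that the unstable mode has grown enough relative to the stable one to drive the ratio to zero while the linearization is still valid. Differentiating $\mathcal{L}$ from \Eqref{eq:population_loss} and using $\mA = \mA^\top$ gives, with $r_j(\tau) := \mC(\tau)\mA(\tau)^j\mB(\tau) - \hat{\mC}\hat{\mA}^j\hat{\mB}$,
\begin{equation*}
\dot{\mB}(\tau) = -2\sum_{j=0}^{k-1} r_j(\tau)\,\mA(\tau)^j\mC(\tau)^\top, \qquad \dot{\mC}(\tau)^\top = -2\sum_{j=0}^{k-1} r_j(\tau)\,\mA(\tau)^j\mB(\tau).
\end{equation*}
Subtracting and adding these two identities yields $\dot{\vu} = M(\tau)\vu$ and $\dot{\vv} = -M(\tau)\vv$ with $M(\tau) := 2\sum_{j=0}^{k-1} r_j(\tau)\,\mA(\tau)^j$. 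The symmetric role of $\mB$ and $\mC$ in $\mathcal{L}$ is precisely what produces this clean $\vu$/$\vv$ decoupling.

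At $\tau = 0$ the three objects $\mA_\epsilon(0), \mB_\epsilon(0), \mC_\epsilon(0)$ each have Frobenius norm $\Oc(\epsilon)$, so $\mC\mA^j\mB = \Oc(\epsilon^{j+2})$, $\mA^j = \Oc(\epsilon^j)$, and $r_j(0) = -\hat{\mC}\hat{\mA}^j\hat{\mB} + o(1)$. Only the $j=0$ summand survives at leading order and $M(\tau) = -\lambda\mI + \Oc(\epsilon)$ with $\lambda := 2\hat{\mC}\hat{\mB}$. Balancedness of $\hat{\Theta}$ gives $\hat{\mC}\hat{\mB} = \|\hat{\mB}\|_2^2$, and non-degeneracy forces $\hat{\mB} \neq 0$ (otherwise the impulse response is identically zero), so $\lambda > 0$. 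Thus at leading order $\dot{\vu} \approx -\lambda\vu$ and $\dot{\vv} \approx \lambda\vv$: $\vu$ is a stable (contracting) mode and $\vv$ is an unstable (expanding) mode, at opposite exponential rates. Substituting the Gaussian initializations $\vu(0) = \epsilon(\tilde\mB - \tilde\mC^\top)$ and $\vv(0) = \epsilon(\tilde\mB + \tilde\mC^\top)$ and integrating yields
\begin{equation*}
\|\vu_\epsilon(\tau)\|_F \approx \epsilon\,e^{-\lambda\tau}\|\tilde\mB - \tilde\mC^\top\|_F, \qquad \|\vv_\epsilon(\tau)\|_F \approx \epsilon\,e^{\lambda\tau}\|\tilde\mB + \tilde\mC^\top\|_F.
\end{equation*}

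I would then set $\tau_\epsilon := \frac{1}{2\lambda}\log(1/\epsilon)$. This choice keeps $\|\vv_\epsilon(\tau_\epsilon)\|_F = \Oc(\sqrt{\epsilon}) \to 0$, so $\Theta_\epsilon$ stays in the linearization regime throughout $[0,\tau_\epsilon]$, while the target ratio at $\tau_\epsilon$ is of order $\epsilon \cdot \|\tilde\mB-\tilde\mC^\top\|_F/\|\tilde\mB+\tilde\mC^\top\|_F$, which tends to zero as $\epsilon \to 0^+$ provided this random ratio is finite and strictly positive. The explicit probability bound $0.75$ is obtained by restricting to the event $\{\|\tilde\mB - \tilde\mC^\top\|_F \leq M\} \cap \{\|\tilde\mB + \tilde\mC^\top\|_F \geq 1/M\}$ for a suitable constant $M$: since $\tilde\mB - \tilde\mC^\top$ and $\tilde\mB + \tilde\mC^\top$ are independent $\mathcal{N}(0,2\mI_d)$ vectors, Gaussian concentration in dimension $d > 20$ makes the probability of each event arbitrarily close to~$1$, with plenty of room to spare.

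The main obstacle is upgrading the heuristic ``$\approx$'' above into an honest asymptotic on the logarithmically long interval $[0,\tau_\epsilon]$. I would run a three-step bootstrap via Gr\"onwall's inequality: (i) since $\dot{\mA}$ is quadratic in $(\mB,\mC)$, as long as $\|\vv_\epsilon(\tau)\|_F = \Oc(\sqrt{\epsilon})$ on the interval one obtains $\|\mA_\epsilon(\tau) - \mA_\epsilon(0)\|_F = \Oc(\epsilon)$; (ii) this forces $r_j(\tau) = -\hat{\mC}\hat{\mA}^j\hat{\mB} + \Oc(\epsilon)$ and $M(\tau) = -\lambda\mI + \Oc(\epsilon)$ throughout; (iii) the multiplicative correction to the linearized $(\vu,\vv)$-flow is then $e^{\Oc(\epsilon\log(1/\epsilon))} = 1 + o(1)$, so the expressions for $\|\vu_\epsilon(\tau_\epsilon)\|_F$ and $\|\vv_\epsilon(\tau_\epsilon)\|_F$ above are correct up to a $(1+o(1))$ factor. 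The tightness of this bootstrap is exactly what pins the admissible horizon to $\tau_\epsilon$ of order $\log(1/\epsilon)/\lambda$: any larger and $\vv$ would grow past $\Oc(1)$, breaking the bound on $\mA$ and hence on $M$, and invalidating the linearization.
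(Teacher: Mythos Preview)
Your proposal is correct and follows essentially the same approach as the paper: both introduce the variables $\mB\pm\mC^\top$, identify the leading-order linear dynamics $\dot{\vu}\approx-\lambda\vu$, $\dot{\vv}\approx\lambda\vv$ driven by $\lambda=2\hat{\mC}\hat{\mB}>0$, choose a time horizon of order $\log(1/\epsilon)$, and control the remainder terms while bounding the random initialization via Gaussian concentration. The only notable difference is that you exploit the symmetry of $\mA$ to obtain the exact decoupling $\dot{\vu}=M(\tau)\vu$, $\dot{\vv}=-M(\tau)\vv$, whereas the paper carries out the argument for general (possibly non-symmetric) $\mA$ via the symmetric/antisymmetric split of $\mA^i$; your version is a legitimate simplification in the setting of the proposition, and your Gr\"onwall bootstrap plays the same role as the paper's explicit norm bounds (Lemmas on $\|\mA\|_F$, $\|\mB\|$, $\|\mC\|$ up to time $\bar t$).
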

\begin{sproof}[Proof sketch (for complete proof see \Appref{sec:apdx:bias_to_symmetry}).]
The idea behind the proof is as follows.
Assume $\epsilon$ is sufficiently small.
Then, when the entries of $\Theta = ( \mA , \mB , \mC)$ are on the order of~$\epsilon$, we have \smash{$\frac{\partial}{\partial \mB} \mathcal{L} ( \Theta ) \approx -2 \hat{\mC} \hat{\mB} \cdot \mC^\top$} and \smash{$\frac{\partial}{\partial \mC} \mathcal{L} ( \Theta ) \approx -2 \hat{\mC} \hat{\mB} \cdot \mB^\top$}.
This implies that during the first part of the curve~$\Theta_\epsilon ( \tau )$ it holds that
$\tfrac{d}{d \tau} ( \mB_\epsilon ( \tau ) - \mC_\epsilon^\top ( \tau ) ) \approx - 2 \hat{\mC} \hat{\mB} \cdot ( \mB_\epsilon ( \tau ) - \mC_\epsilon^\top ( \tau ) )$
and similarly
$\tfrac{d}{d \tau} ( \mB_\epsilon ( \tau ) + \mC_\epsilon^\top ( \tau ) ) \approx ~~~ 2 \hat{\mC} \hat{\mB} \cdot ( \mB_\epsilon ( \tau ) + \mC_\epsilon^\top ( \tau ) )$.
Since $\hat{\mC} \hat{\mB} > 0$ (follows from the teacher parameters being balanced and non-degenerate), the entries of $\mB_\epsilon ( \tau ) - \mC_\epsilon^\top ( \tau )$ shrink exponentially fast while those of $\mB_\epsilon ( \tau ) + \mC_\epsilon^\top ( \tau )$ grow at the same rate.
This exponential shrinkage/growth leads $\| \mB_\epsilon ( \tau ) - \mC_\epsilon^\top ( \tau ) \| \big/ \| \mB_\epsilon ( \tau ) + \mC_\epsilon^\top ( \tau ) \|$ to become extremely small, more so the smaller $\epsilon$ is.
%
\end{sproof}
\section{Experiments}
\label{sec:experiments}

In this section we present experiments corroborating our theoretical analysis (\Secref{sec:analysis}).
The latter establishes that, under certain conditions, a linear RNN with state space dimension~$d$ extrapolates when learning from a teacher network with state space dimension~\smash{$\hat{d}$} via training sequences of length~$k$, irrespective of how large~$d$ is compared to \smash{$\hat{d}$} and~$k$. 
A key condition underlying the result is that $k$ is larger than $2 \hat{d}$.
\Secref{sec:exp:sym_teacher} below considers the theoretically analyzed setting, and empirically evaluates extrapolation as~$k$ varies.
Its results demonstrate a phase transition, in the sense that extrapolation takes place when \smash{$k > 2 \hat{d}$}, in compliance with theory, but fails when $k$ falls below~\smash{$2 \hat{d}$}, in which case the theory indeed does not guarantee extrapolation.
\Secref{sec:exp:step_func} displays the same phenomenon with linear RNNs that do not adhere to some of the assumptions made by the theory (in particular the assumption of symmetric transition matrices, and those concerning balancedness).
Finally, \Secref{sec:exp:non_linear_teacher} considers non-linear RNNs (specifically, Gated Recurrent Unit networks \cite{chung2014empirical}), and shows that they too exhibit a phase transition in extrapolation as the training sequence length varies.
For brevity, we defer some of the details behind our implementation, as well as additional experiments, to \Appref{sec:apdx:additional_experiments}.

\subsection{Theoretically Analyzed Setting} \label{sec:exp:sym_teacher}

Our first experiment considers the setting described in \Secref{sec:setup} and theoretically analyzed in \Secref{sec:analysis}.
As representative values for the state space dimensions of the teacher and (overparameterized) student, we choose \smash{$\hat{d} = 5$} and $d = 40$ respectively (higher state space dimensions for the student, namely $d = 100$ and $d = 200$, yield qualitatively identical results).
For a given training sequence length~$k$, the student is learned via GD applied directly to the population loss defined in \Eqref{eq:population_loss} (applying GD to the empirical loss defined in \Eqref{eq:empirical_loss}, with $N = 10,000$ training examples, led to similar results). 
\Figref{fig:phase_transition_v2}(a) reports the extrapolation error (quantified by the $\ell_\infty$ distance between the impulse response of the learned student and that of the teacher) as a function of~$k$.
As can be seen, extrapolation exhibits a phase transition that accords with our theory: when \smash{$k > 2 \hat{d}$} extrapolation error is low, whereas when $k$ falls below~\smash{$2 \hat{d}$} extrapolation error is high.

\subsection{Other Settings With Linear Recurrent Neural Networks} \label{sec:exp:step_func}

To assess the generality of our findings, we experiment with linear RNNs in settings that do not adhere to some of the assumptions made by our theory.
Specifically, we evaluate settings in which:
\emph{(i)}~the teacher is unbalanced, meaning \smash{$\hat{\mB} \neq \hat{\mC}^\top$}, and its transition matrix~\smash{$\hat{\mA}$} is non-symmetric;
\emph{(ii)}~the student's transition matrix~$\mA$ is not restricted to be symmetric;
\emph{(iii)}~learning is implemented by optimizing the empirical loss defined in \Eqref{eq:empirical_loss} (rather than the population loss defined in \Eqref{eq:population_loss});
and
\emph{(iv)}~optimization is based on Adam \cite{kingma2014adam} (rather than GD), emanating from standard near-zero initialization which is generally unbalanced (namely, $\mB\neq \mC^\top$).
\Figref{fig:phase_transition_v2}(b) reports the results of an experiment where the state space dimensions of the teacher and (overparameterized) student are \smash{$\hat{d} = 10$} and $d = 50$ respectively (higher state space dimensions for the student, namely $d = 100$ and $d = 200$, yield qualitatively identical results), and where the teacher implements a delay line of \smash{$\hat{d}$} time steps (for details see \Appref{sec:apdx:unbalanced_teacher_generation}).
Similar results obtained with randomly generated teachers are reported in \Appref{sec:apdx:additional_experiments}.
As can be seen, despite the fact that our theory does not apply to the evaluated settings, its conclusions still hold~---~extrapolation error is low when the training sequence length~$k$ is greater than~\smash{$2 \hat{d}$}, and high when $k$ falls below~\smash{$2 \hat{d}$}.
\vspace*{-0.67cm}
\begin{figure}[H]
    \centering
    \subfigure[Theoretically analyzed setting]{\includegraphics[width=0.45\textwidth,height=3.6cm]{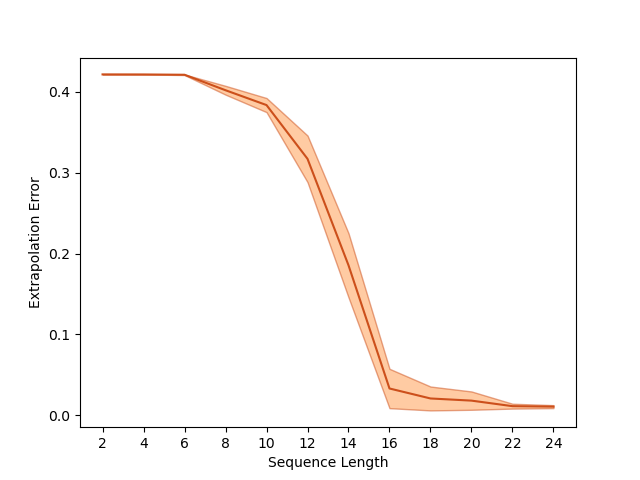}}
    \subfigure[Other setting with linear RNN]{\includegraphics[width=0.45\textwidth,height=3.6cm]{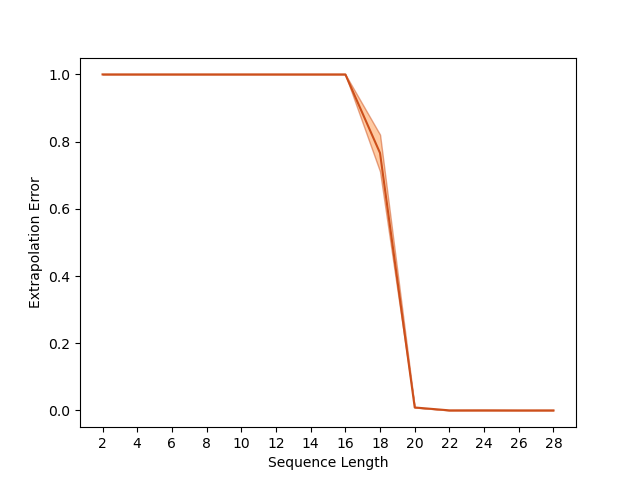}} 
    \caption{
    Demonstration of implicit extrapolation with linear RNNs.
    Plots show extrapolation error (average over three random seeds, with shaded region marking standard deviation) as a function of training sequence length~$k$, for a student with state space dimension~$d$ learning from a teacher with state space dimension~\smash{$\hat{d}$}, where \smash{$d \gg \hat{d}$}.
    (a) Models adhere to the setting described in \Secref{sec:setup} and theoretically analyzed in \Secref{sec:analysis}, with \smash{$\hat{d} = 5$}, $d = 40$.
    (b) Models do not adhere to some of the assumptions made by the theory, and \smash{$\hat{d} = 10$}, $d = 50$.
    Notice that extrapolation exhibits a phase transition that accords with theory~---~when \smash{$k > 2 \hat{d}$} extrapolation error is low, and when $k$ falls below~\smash{$2 \hat{d}$} extrapolation error is high. The gradual transition exhibited is due to numerical errors introduced by the optimization not reaching an exact global minimum.	
    For further details see Sections \ref{sec:exp:sym_teacher} and~\ref{sec:exp:step_func} and \Appref{sec:apdx:additional_experiments}.
    }
    \label{fig:phase_transition_v2}
\end{figure}

\subsection{Non-Linear Recurrent Neural Networks} \label{sec:exp:non_linear_teacher}

As a final experiment, we explore implicit extrapolation with non-linear RNNs, namely GRU networks.
Specifically, we evaluate the extent to which a student GRU with state space dimension $d_g = 100$ extrapolates when learning from a teacher GRU with state space dimension \smash{$\hat{d}_g = 10$} (higher state space dimensions for the student, namely $d_g = 200$ and $d_g = 500$, yield qualitatively identical results).
The student is learned by optimizing an empirical loss comprising training sequences of length~$k_g$, where $k_g$ is predetermined.
Optimization is based on Adam emanating from standard near-zero initialization.
\figref{fig:gru_extrapolation}(a) reports the extrapolation error (quantified by the $\ell_\infty$ distance between the response of the learned student and that of the teacher, averaged across randomly generated input sequences) for different choices of~$k_g$.
As can be seen, similarly to the case with linear RNNs (see Sections \ref{sec:exp:sym_teacher} and~\ref{sec:exp:step_func}), there exists a critical threshold for the training sequence length~$k_g$, above which extrapolation error is low and below which extrapolation error is high (note that this critical threshold is around four times the teacher's state space dimension, whereas with linear RNNs the critical threshold was around two times the teacher's state space dimension; theoretically explaining this difference is an interesting direction for future work).
\figref{fig:gru_extrapolation}(b) displays the average output response over different inputs of the teacher alongside those of two students~---~one trained with sequences of length $k_g = 30$, and the other with sequences of length $k_g = 60$.\footnote{%
Note that with GRU networks, in contrast to linear RNNs, the impulse response does not identify the input-output mapping realized by a network.
It is presented in \figref{fig:gru_extrapolation}(b) for demonstrative purposes.
}
As expected, the impulse response of each student tracks that of the teacher for the first $k_g$ time steps (where $k_g$ is student-dependent).
However, while the student for which $k_g = 30$ fails to track the teacher beyond $k_g$ time steps, the student for which $k_g = 60$ succeeds, thereby exemplifying implicit extrapolation. 
\begin{figure}
\vspace*{-1.5cm}
\centering
\subfigure[Extrapolation vs.~training sequence length]
{\includegraphics[width=0.45\textwidth,height=3.6cm]{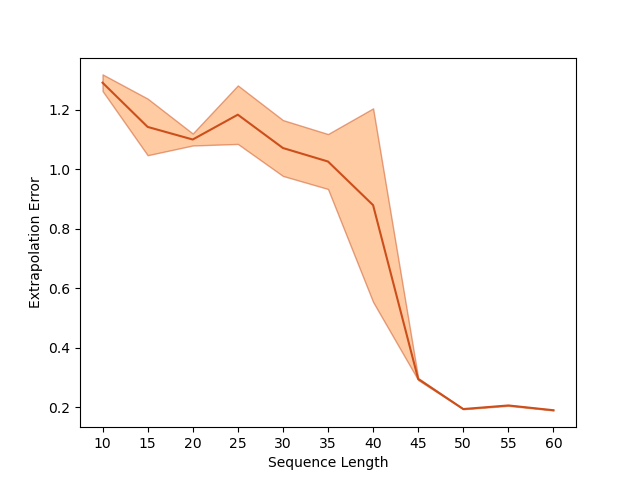}}
\subfigure[Average output over several inputs]
{\includegraphics[width=0.52\textwidth,height=3.6cm]{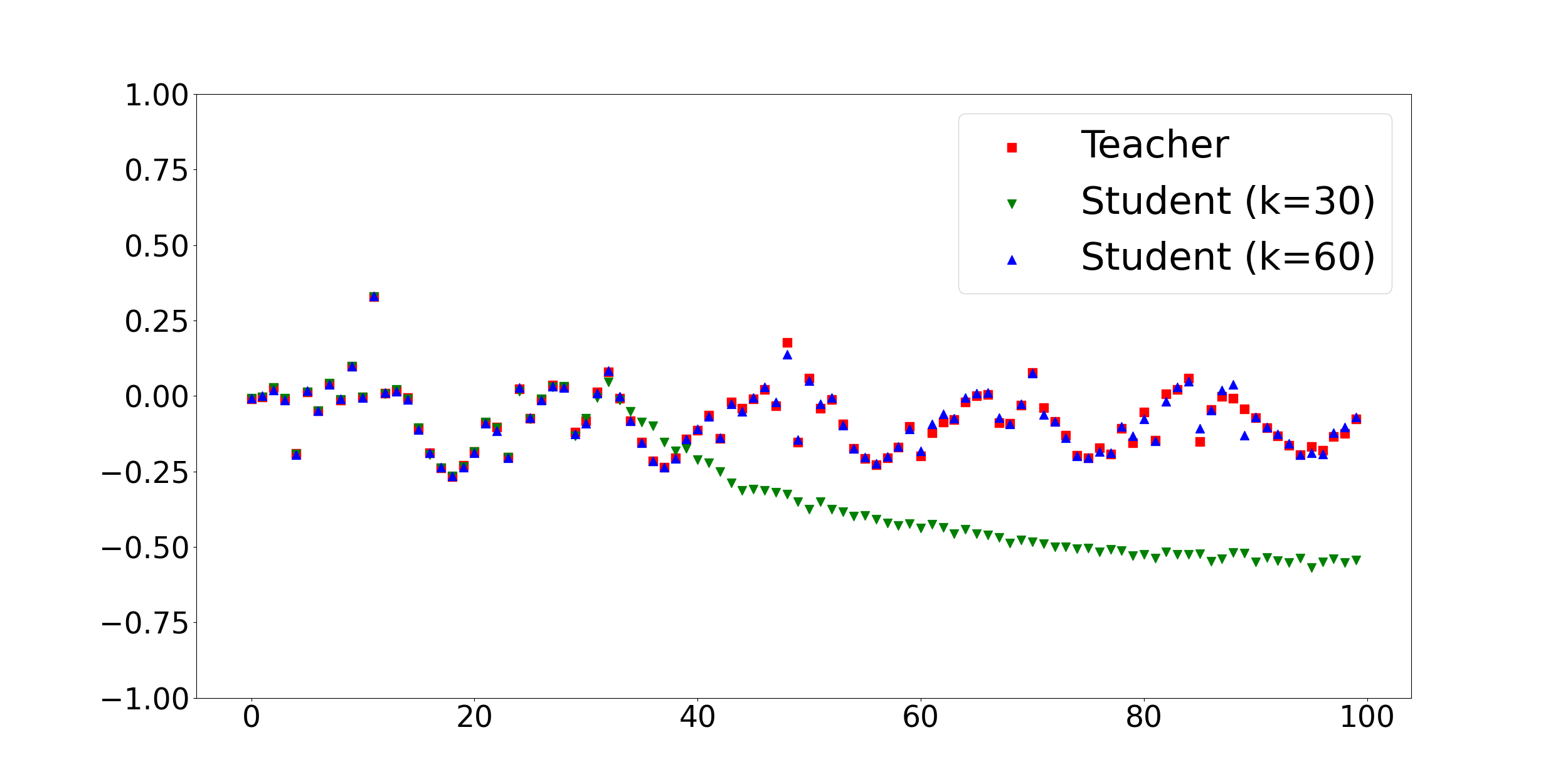}}
\caption{
Demonstration of implicit extrapolation with non-linear RNNs, namely GRU networks.
Plots show results for a student with state space dimension $d_g = 100$ learning from a teacher with state space dimension \smash{$\hat{d}_g = 10$} using training sequences of length~$k_g$, where $k_g$ varies.
(a)~Extrapolation error (average over ten random seeds, with shaded region marking standard deviation) as a function of~$k_g$.
(b)~Average output over several inputs of teacher and student for different choices of~$k_g$.
Notice that, similarly to the case with linear RNNs, there exists a critical threshold for~$k_g$ above which extrapolation error is low and below which extrapolation error is high. See details in \Appref{sec:apdx:additional_experiments}.
}
\label{fig:gru_extrapolation}
\end{figure}
\section{Conclusion}\label{sec:conclusions}

This paper studies the question of extrapolation in RNNs, and more specifically, of whether a student RNN trained on data generated by a teacher RNN can capture the behavior of the teacher over sequences longer than those seen in training. 
We focus on overparameterized students that can perfectly fit training sequences while producing a wide range of behaviors over longer sequences. 
Such a student will fail to extrapolate, unless the teacher possesses a certain structure, and the learning algorithm is biased towards solutions adhering to that structure.
We show~---~theoretically for linear RNNs and empirically for both linear and non-linear RNNs~---~that such implicit extrapolation takes place when the teacher has a low dimensional state space and the learning algorithm is GD.

Existing studies of implicit extrapolation in (linear) RNNs \citep{emami2021implicit,cohen2022extrapolation} suggest that GD is biased towards solutions with short-term memory.
While low dimensional state space and short-term memory may coincide in some cases, in general they do not, and a solution with low dimensional state space may entail long-term memory.
Our theory and experiments show that in settings where low dimensional state space and short-term memory contradict each other, the implicit extrapolation chooses the former over the latter.


An important direction for future work is extending our theory to non-linear RNNs. 
We believe it is possible, in the same way that theories for linear (feed-forward) NNs were extended to account for non-linear NNs (see, e.g., \citet{razin2021implicit,razin2022implicit, lyu2019gradient}).
An additional direction to explore is the applicability of our results to the recently introduced S4 model \citep{gu2022efficiently}.
\section{Acknowledgements}
This work was supported by the European Research
Council (ERC) under the European Unions Horizon 2020
research and innovation programme (grant ERC HOLI
819080), the Tel-Aviv University Data-Science and AI Center (TAD), a Google Research
Scholar Award, a Google Research Gift, the Yandex Initiative in Machine Learning, the Israel Science Foundation (grant 1780/21), Len Blavatnik and the Blavatnik Family Foundation, and Amnon
and Anat Shashua.

\bibliography{iclr2023_conference}
\bibliographystyle{iclr2023_conference}

\clearpage
\appendix
\counterwithin{theorem}{section}

\section{Necessity of Lower Bound on Training Sequence Length}\label{sec:apdx:lower_bound_on_seq_len}


Our theoretical guarantees of implicit extrapolation (Theorems \ref{thm:exact_extrapolation} and~\ref{thm:main_result}) assumed that the training sequence length~$k$ is greater than two times the teacher's state space dimension~\smash{$\hat{d}$}.
Below we show that this assumption is necessary (up to a small additive constant).
More precisely, we prove that if $k\le 2\smash{\hat{d}}-1$, implicit extrapolation cannot be guaranteed.

\begin{lemma}\label{apdx:lower_bound_lemma}
For any \smash{$\hat{d} \in \mathbb{N}$}, there exist two configurations of teacher parameters $\hat{\Theta}_1 =$ \smash{$(\hat{\mA}_1,\hat{\mB}_1,\hat{\mC}_1)$} and \smash{$\hat{\Theta}_2 = (\hat{\mA}_2,\hat{\mB}_2,\hat{\mC}_2)$}, both balanced (Definition~\ref{def:balanced}), stable (meaning the eigenvalues of \smash{$\hat{\mA}_1$} and~\smash{$\hat{\mA}_2$} are in~$[-1,1]$) and non-degenerate (meaning the input-output mappings realized by \smash{$\hat{\Theta}_1$} and~\smash{$\hat{\Theta}_2$} are not identically zero), such that:
\[
\hat{\mB}_1\hat{\mA}_1^j\hat{\mC}_1=\hat{\mB}_2\hat{\mA}_2^j\hat{\mC}_2
~~~
\text{for all~~} j = 0 , 1 , \ldots , 2 \hat{d} - 2
\text{\,,}
\]
and yet: 
\[
\hat{\mB}_1\hat{\mA}_1^j\hat{\mC}_1 \neq \hat{\mB}_2\hat{\mA}_2^j\hat{\mC}_2
~~~
\text{for~~} j = 2 \hat{d} - 1
\text{\,.}
\]
\end{lemma}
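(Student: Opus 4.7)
The plan is to translate the lemma into the probabilistic language of Lemma~\ref{lemma:exact_extrapolation}: a balanced teacher of state space dimension $\hat{d}$ (WLOG with diagonal $\hat{\mA}$, and scaled so that $\hat{\mC}\hat{\mB}=1$) corresponds to a $\hat{d}$-atomic probability measure on the eigenvalues of $\hat{\mA}$, with the $j$-th impulse response value $\hat{\mC}\hat{\mA}^j\hat{\mB}$ equal to the $j$-th moment of that measure. The lemma thus reduces to exhibiting two distinct $\hat{d}$-atomic probability measures supported in $[-1,1]$ whose moments $m_0,\ldots,m_{2\hat{d}-2}$ agree but whose $(2\hat{d}-1)$-th moment differs. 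Stability and non-degeneracy of the resulting teachers will then follow automatically from the supports lying in $[-1,1]$ and the weights being positive and summing to one.

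The source of the two measures will be classical Gaussian quadrature. Let $\mu$ be the uniform probability measure on $[-1,1]$. Let $\nu_1$ be the Gauss--Legendre $\hat{d}$-point quadrature of $\mu$: its nodes are the roots of the Legendre polynomial $P_{\hat{d}}$ (all in $(-1,1)$), the weights are positive and sum to one, and it reproduces all moments of $\mu$ up to index $2\hat{d}-1$. Let $\nu_2$ be the Gauss--Radau $\hat{d}$-point quadrature with node fixed at $x=1$: supported in $[-1,1]$ with positive weights summing to one, and reproducing moments of $\mu$ up to index $2\hat{d}-2$. By construction the two measures agree on $m_0,\ldots,m_{2\hat{d}-2}$; they must disagree on $m_{2\hat{d}-1}$, since otherwise $\nu_2$ would also be an exact $\hat{d}$-atomic quadrature for $\mu$ on all polynomials of degree $\leq 2\hat{d}-1$, which by uniqueness of Gauss--Legendre would force $\nu_2=\nu_1$, contradicting the presence of an atom at $1$ in $\nu_2$ and the strictly interior support of $\nu_1$.

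The final step is assembly. Given $\nu_i=\sum_k w_{i,k}\,\delta_{a_{i,k}}$, set $\hat{\mA}_i=\mathrm{diag}(a_{i,1},\ldots,a_{i,\hat{d}})$ and $\hat{\mB}_i=\hat{\mC}_i^\top$ with entries $\sqrt{w_{i,k}}$ (well-defined since $w_{i,k}>0$). Balancedness is immediate, stability follows from the nodes lying in $[-1,1]$, and non-degeneracy follows from $\hat{\mC}_i\hat{\mB}_i=\sum_k w_{i,k}=1\neq 0$. A direct computation yields $\hat{\mC}_i\hat{\mA}_i^{\,j}\hat{\mB}_i=\sum_k w_{i,k}a_{i,k}^{\,j}=\int x^j\,d\nu_i$, so the moment identities and non-identity translate verbatim into the required relations on impulse responses. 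I anticipate no substantive obstacle: the whole argument is essentially a packaging of standard quadrature facts, and the one mildly delicate point---strict disagreement at index $2\hat{d}-1$---is handled cleanly by the uniqueness of Gauss--Legendre quadrature.
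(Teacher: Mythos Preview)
Your proof is correct, and the overall strategy matches the paper's: both reduce the lemma to the moment-matching statement that there exist two distinct $\hat{d}$-atomic probability measures on $[-1,1]$ agreeing on moments $0,\ldots,2\hat{d}-2$ but differing at moment $2\hat{d}-1$, via the correspondence (from the proof of Lemma~\ref{lemma:exact_extrapolation}) between balanced, stable, non-degenerate teachers with $\hat{\mC}\hat{\mB}=1$ and such measures. The difference is in how that last step is discharged. The paper simply cites Lemmas~4 and~30 of \cite{wu2020optimal} as a black box. You instead construct the two measures explicitly as the $\hat{d}$-point Gauss--Legendre and Gauss--Radau (with fixed node at $1$) quadratures of the uniform measure on $[-1,1]$, and force strict disagreement at index $2\hat{d}-1$ via the uniqueness of the $\hat{d}$-point Gaussian quadrature exact to degree $2\hat{d}-1$. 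Your route is self-contained and makes the construction concrete; the paper's is shorter but leans on an external reference. Either way the argument is brief, and the quadrature viewpoint you use is in fact the classical machinery underlying the cited results in \cite{wu2020optimal}.
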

\begin{proof}
A derivation as in the proof sketch of Lemma~\ref{lemma:exact_extrapolation} shows that
any \smash{$\hat{d}$}-atomic distribution (i.e.~any distribution supported on a set of \smash{$\hat{d}$} real numbers) can be associated with a balanced configuration of teacher parameters \smash{$\hat{\Theta} = (\hat{\mA},\hat{\mB},\hat{\mC})$} satisfying \smash{$\hat{\mC} \hat{\mB} = 1$}, such that the values to which the distribution assigns non-zero probability are the eigenvalues of~\smash{$\hat{\mA}$}, and the $j$th moment of the distribution is equal to \smash{$\hat{\mC} \hat{\mA}^j \hat{\mB}$} for every $j \in \mathbb{N}$.
In light of this, and of the fact that any configuration of teacher parameters \smash{$\hat{\Theta} = (\hat{\mA},\hat{\mB},\hat{\mC})$} satisfying \smash{$\hat{\mC} \hat{\mB} = 1$} is non-degenerate (the first element of its impulse response is non-zero), it suffices to prove that there exist two \smash{$\hat{d}$}-atomic distributions supported on~$[-1,1]$ which agree on their first \smash{$2 \hat{d} - 2$} moments yet disagree on their \smash{$( 2 \hat{d} - 1 )$}'th moment.
This follows from Lemmas 4 and~30 in~\cite{wu2020optimal}.
\end{proof}

\begin{corollary}
Assume the conditions of Theorem~\ref{thm:exact_extrapolation}, but with $k > 2\smash{\hat{d}}$ replaced by $k\le 2\smash{\hat{d}}-1$.
Then, the stated result does not hold, i.e.~GF may converge to a point~$\Theta^*$ satisfying $\mathcal{L} ( \Theta^* ) = 0$ which does not extrapolate (Definition~\ref{def:extrapolation}). 
Similarly, replacing $k > 2\smash{\hat{d}}$ by $k\le 2\smash{\hat{d}}-1$ in the conditions of Theorem~\ref{thm:main_result} renders the stated result false, meaning there exist $\epsilon > 0$ and $q \in \mathbb{N}$ such that for every $\delta > 0$, there is some~$\Theta$ satisfying $\mathcal{L} ( \Theta ) \leq \delta$ which does not $\epsilon$-extrapolate with horizon~$q$ (Definition~\ref{def:extrapolation}).
\end{corollary}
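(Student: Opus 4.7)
The plan is to use Lemma~\ref{apdx:lower_bound_lemma} as the single source of nontrivial content: it already produces two balanced, stable, non-degenerate teachers of state-space dimension $\hat{d}$ whose impulse responses coincide through index $2\hat{d}-2$ but differ at index $2\hat{d}-1$. Given this, the corollary will essentially follow by designating one of those teachers as the ``ground truth'' and constructing a student whose internal dynamics match the \emph{other} exactly, so that the empirical/population loss over length~$k$ sequences vanishes while extrapolation fails at index $2\hat{d}-1$.

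Concretely, I would first fix some $\hat{d}\geq 1$, pick any $k\le 2\hat{d}-1$ and any $d>\max\{k,\hat{d}\}$ (for instance $k=2\hat{d}-1$, $d=2\hat{d}$; both conditions of Theorems~\ref{thm:exact_extrapolation} and~\ref{thm:main_result} are then preserved after the $k>2\hat{d}$ assumption is dropped), and adopt $\hat{\Theta}_2$ from Lemma~\ref{apdx:lower_bound_lemma} as the teacher. Then I would build the counterexample student $\Theta^\star=(\mA,\mB,\mC)$ of state-space dimension $d$ by embedding $\hat{\Theta}_1$ into the larger model: after diagonalizing $\hat{\mA}_1$ (which is permitted, just as in the proof sketch of Lemma~\ref{lemma:exact_extrapolation}), extend it to a $d\times d$ diagonal matrix by appending $d-\hat{d}$ zero eigenvalues, and pad $\hat{\mB}_1,\hat{\mC}_1$ with matching zero entries. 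The resulting $\Theta^\star$ is symmetric, balanced (inherited from $\hat{\Theta}_1$), and realizes exactly the impulse response of $\hat{\Theta}_1$.

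For the first half of the corollary, the key observation is that by Lemma~\ref{apdx:lower_bound_lemma} the first $k\le 2\hat{d}-1$ impulse-response entries of $\hat{\Theta}_1$ and $\hat{\Theta}_2$ coincide, so by the form of the population loss in \Eqref{eq:population_loss} one has $\mathcal{L}(\Theta^\star)=0$. Hence $\Theta^\star$ is a global minimizer, first-order optimality forces $\nabla\mathcal{L}(\Theta^\star)=0$, and GF initialized at the balanced point $\Theta^\star$ remains stationary and trivially ``converges'' to it. Yet, again by the lemma, the student's $(2\hat{d}-1)$'th impulse-response entry equals $\hat{\mC}_1\hat{\mA}_1^{2\hat{d}-1}\hat{\mB}_1\neq\hat{\mC}_2\hat{\mA}_2^{2\hat{d}-1}\hat{\mB}_2$, so $\Theta^\star$ does not extrapolate~---~contradicting the modified Theorem~\ref{thm:exact_extrapolation}.

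For the second half I would reuse the same $\Theta^\star$: set $\epsilon_0:=|\hat{\mC}_1\hat{\mA}_1^{2\hat{d}-1}\hat{\mB}_1-\hat{\mC}_2\hat{\mA}_2^{2\hat{d}-1}\hat{\mB}_2|>0$, and take $\epsilon=\epsilon_0/3$ and $q=2\hat{d}$. For every $\delta>0$ the point $\Theta^\star$ satisfies $\mathcal{L}(\Theta^\star)=0\le\delta$, lives in a bounded domain (being a single point, and indeed the entire GF trajectory from $\Theta^\star$ is $\{\Theta^\star\}$), and yet its $(q-1)$'th impulse-response gap equals $\epsilon_0>\epsilon$, so it fails to $\epsilon$-extrapolate with horizon $q$. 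I do not anticipate any genuine obstacle; the only bookkeeping step is to verify that every hypothesis of Theorems~\ref{thm:exact_extrapolation} and~\ref{thm:main_result} (symmetry, balancedness of initialization and teacher, stability and non-degeneracy of the teacher, boundedness of the trajectory) is simultaneously satisfied by this single construction, and each is either inherited from Lemma~\ref{apdx:lower_bound_lemma} or immediate from the embedding. All the genuine work lives inside Lemma~\ref{apdx:lower_bound_lemma}, which in turn delegates it to the moment-matching construction of Wu \& Yang.
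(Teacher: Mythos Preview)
Your proposal is correct and complete. It differs from the paper's own proof in style: the paper argues by contradiction at the level of the loss function, while you give an explicit constructive counterexample.

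Specifically, the paper observes that since the two teachers $\hat{\Theta}_1,\hat{\Theta}_2$ from Lemma~\ref{apdx:lower_bound_lemma} agree on impulse-response indices $0,\dots,2\hat{d}-2\ge k-1$, they induce \emph{the same} population loss $\mathcal{L}(\cdot)$. Hence GF sees no difference between them, and if the (modified) theorems held, any limiting $\Theta^*$ would have to extrapolate with respect to both teachers simultaneously---impossible, since their impulse responses diverge at index $2\hat{d}-1$. This is a two-line indirect argument that never needs to exhibit a particular student or GF trajectory.

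Your route instead fixes the teacher as $\hat{\Theta}_2$, embeds $\hat{\Theta}_1$ into a $d$-dimensional balanced symmetric student $\Theta^\star$, and notes that $\Theta^\star$ is a global minimizer (hence a stationary point of GF) that fails to extrapolate. This is longer but more explicit: it produces a concrete balanced initialization and GF trajectory witnessing the failure, and it checks every hypothesis (symmetry, balancedness, stability, non-degeneracy, boundedness) individually. Both arguments rest entirely on Lemma~\ref{apdx:lower_bound_lemma}; yours trades brevity for constructiveness.
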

\begin{proof}
In the context of either Theorem~\ref{thm:exact_extrapolation} or Theorem~\ref{thm:main_result}, if $k\le 2\smash{\hat{d}}-1$ then by Lemma~\ref{apdx:lower_bound_lemma} there exist two configurations of teacher parameters~---~both satisfying the conditions of the theorem~---~which lead to a different impulse response (Definition~\ref{def:impulse_response}) yet induce the same loss~$\mathcal{L} ( \cdot )$ (\Eqref{eq:population_loss}).
If the result stated in the theorem were true, it would mean extrapolation, or $\epsilon$-extrapolation with horizon~$q$ for arbitrarily small $\epsilon > 0$ and arbitrarily large $q \in \mathbb{N}$ (see Definition~\ref{def:extrapolation}), simultaneously with respect to both teachers, and this leads to a contradiction.
\end{proof}

\section{Further Experiments}\label{sec:apdx:additional_experiments}
In this section we provide additional experiments that are not included in the main manuscript due to space constraints.

\begin{figure}
    \centering
    \subfigure[Balanced teacher and general (unbalanced) student]{\includegraphics[width=0.49\textwidth]{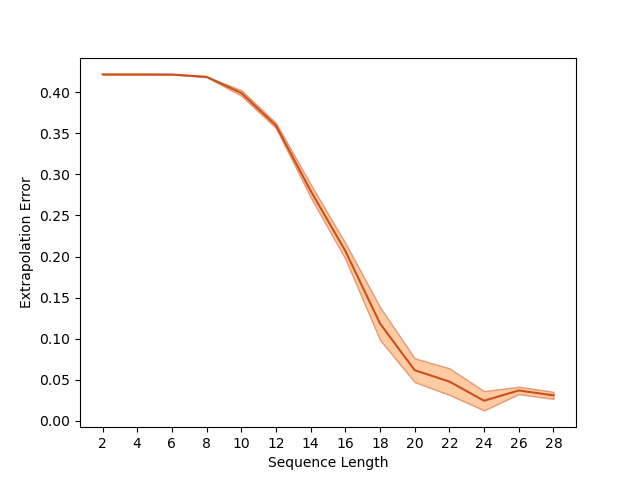}}
    \subfigure[Random (unbalanced) teacher and general (unbalanced) student]{\includegraphics[width=0.49\textwidth]{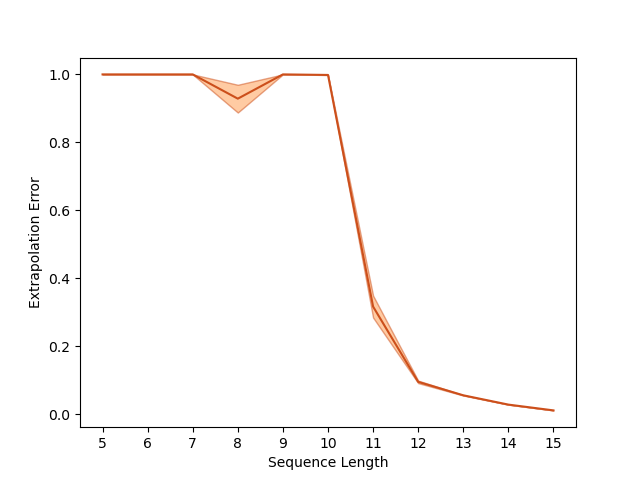}} 
    \caption{
    Extrapolation error as a function of the training sequence length $k$. (a) a balanced teacher with state dimensions $\hat{d}=5$ and a general (unbalanced and non diagonal) student with $d=40$. (b) a random unbalanced teacher (see \Secref{sec:apdx:unbalanced_teacher}) with dimension $\hat{d}=5$, and a student that has a non-diagonal transition matrix and is trained with standard (small) initialization, with state dimension $d=50$. In both plots results are averaged over 3 seeds.}
    \label{fig:apdx_additional_experiments}
\end{figure}

\subsection{Balanced Teacher}\label{sec:apdx:balanced_teacher}
In \Secref{sec:exp:sym_teacher}, we have experimented with our proposed theoretical setup. In this section we provide additional figures and experiments.

\subsubsection{Unbalanced Student}\label{sec:apdx:unbalanced_student}

In this experiment we use the same balanced teacher with $\hat{d}=5$ as done in \Secref{sec:exp:sym_teacher}. Instead of the diagonal student with balanced initialization, we use a general (non-diagonal) student with weights sampled from a Gaussian with scale $10^{-5}$ and $d=40$. Results are depicted in \figref{fig:apdx_additional_experiments}(a). A similar phase transition phenomenon to the one in \figref{fig:phase_transition_v2} is found also here.

\subsubsection{Effect of the Initialization Scale}\label{sec:largeinit}

Proposition~\ref{prop:implicit_bias_for_balanced_init} provides theoretical support for the fact that under near-zero initialization, the learned RNN tends to balancedness, which according to theorems \ref{thm:exact_extrapolation} and~\ref{thm:main_result} guarantees extrapolation.
Below we empirically explore the impact of varying the initialization scale. We use the same setting as in \Secref{sec:apdx:unbalanced_student}, and repeat the experiment with different initialization scales for the students' weights.

\begin{figure}
\centering
\includegraphics[width=0.75\textwidth]{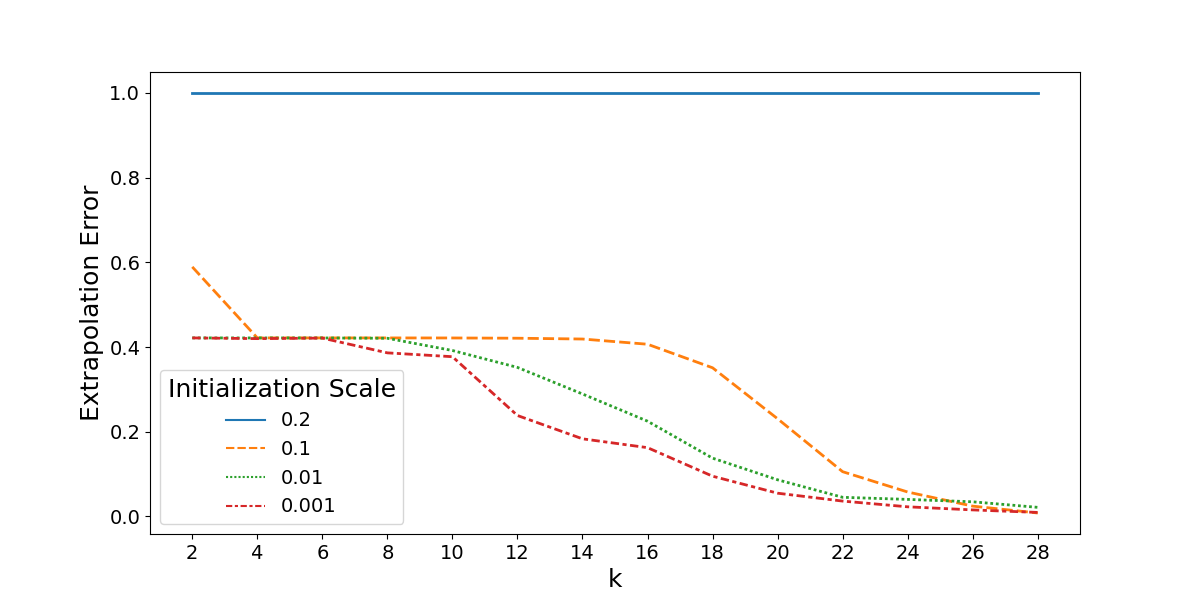}
\caption{
Extrapolation error as a function of training sequence length $k$ for different initialization scales. Extrapolation error increases along with the scale of initialization.
}
\label{fig:init_breaks}
\end{figure}

As can be seen in \Figref{fig:init_breaks}, the extrapolation deteriorates for larger initialization scale, in the sense that it requires longer training sequences for getting good extrapolation error.  This suggests that the condition of small initialization required by our theory is not an artifact of our proof technique, but rather a necessary condition for extrapolation to occur.

\subsection{Unbalanced Teacher}\label{sec:apdx:unbalanced_teacher}
In \Secref{sec:exp:step_func}, we have tested the extrapolation with respect to a specific unbalanced teacher and have observed a similar phase transition as predicted by the theory of \Secref{sec:analysis} and empirical evaluation of \Secref{sec:exp:sym_teacher}. Here we show that the phase transition is not limited to the specific teacher discussed by testing with respect to a randomly generated unbalanced (non-diagonal) teacher (see \Secref{sec:apdx:unbalanced_teacher_generation}). The teacher is set to $\hat{d}=5$ and student to $d=50$. Results are presented in \figref{fig:apdx_additional_experiments}~(b). Here too we can observe the phase transition phenomena.

\subsection{Impulse Response Figures}
In \Secref{sec:experiments} we have presented the extrapolation performance in different settings. In order to better convey the meaning of extrapolating vs non-extrapolating solutions we present here figures of the impulse response of different models.

We start with the impulse response corresponding to the experiment described in \Secref{sec:exp:sym_teacher}. \Figref{fig:apdx:balanced_ir} depicts the balanced teacher with $\hat{d}=5$ and two selected students (with $d=40$), one trained with $k=10$ and the other with $k=20$.

\begin{figure}
    \centering
    \includegraphics[width=0.8\textwidth]{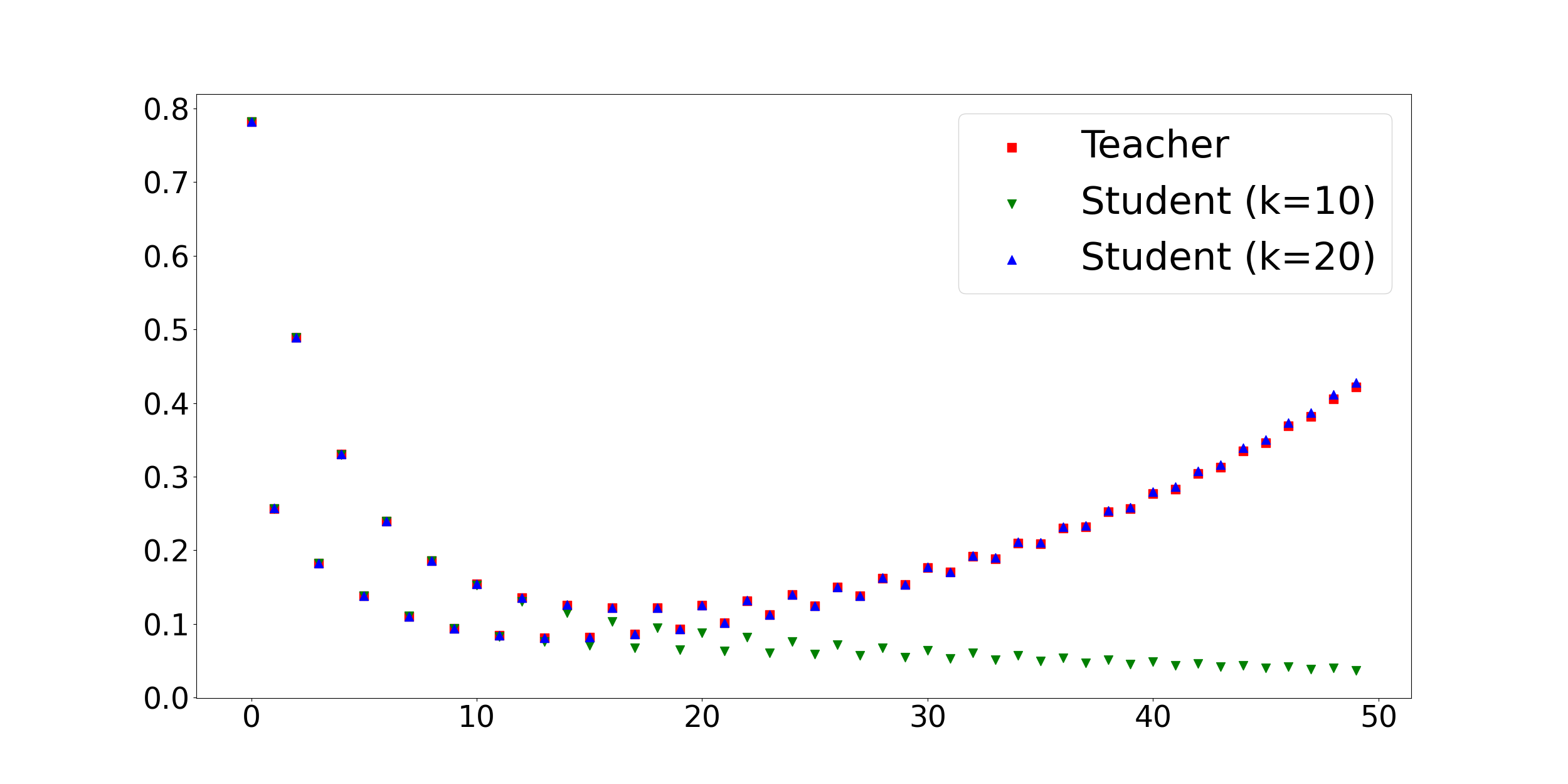}
    \caption{\textbf{Balanced teacher and student impulse response}. Students trained with: $k=10,20$ with respect to the balanced teacher described in \Secref{sec:apdx:balanced_teacher_generation}. As can be seen, both students track the teacher up to the $k$ used in training, for $k=10$ there is no extrapolation for larger values of $k$, whereas $k=20$ tracks the teacher well beyond the sequence length used in training.}
    \label{fig:apdx:balanced_ir}
\end{figure}

We can see that the student trained with $k=10$ tracks the teacher several steps beyond the $10th$ time step and then decays to zero. For $k=20$ we can see near perfect extrapolation for the horizon evaluated.

Next we turn to \Secref{sec:exp:step_func} and depict the average impulse responses (\Figref{fig:apdx:delay_teacher_ir}) of the ``delay teacher'' and the students trained with respect to the mentioned teacher.

\begin{figure}
    \centering
    \includegraphics[width=0.8\textwidth]{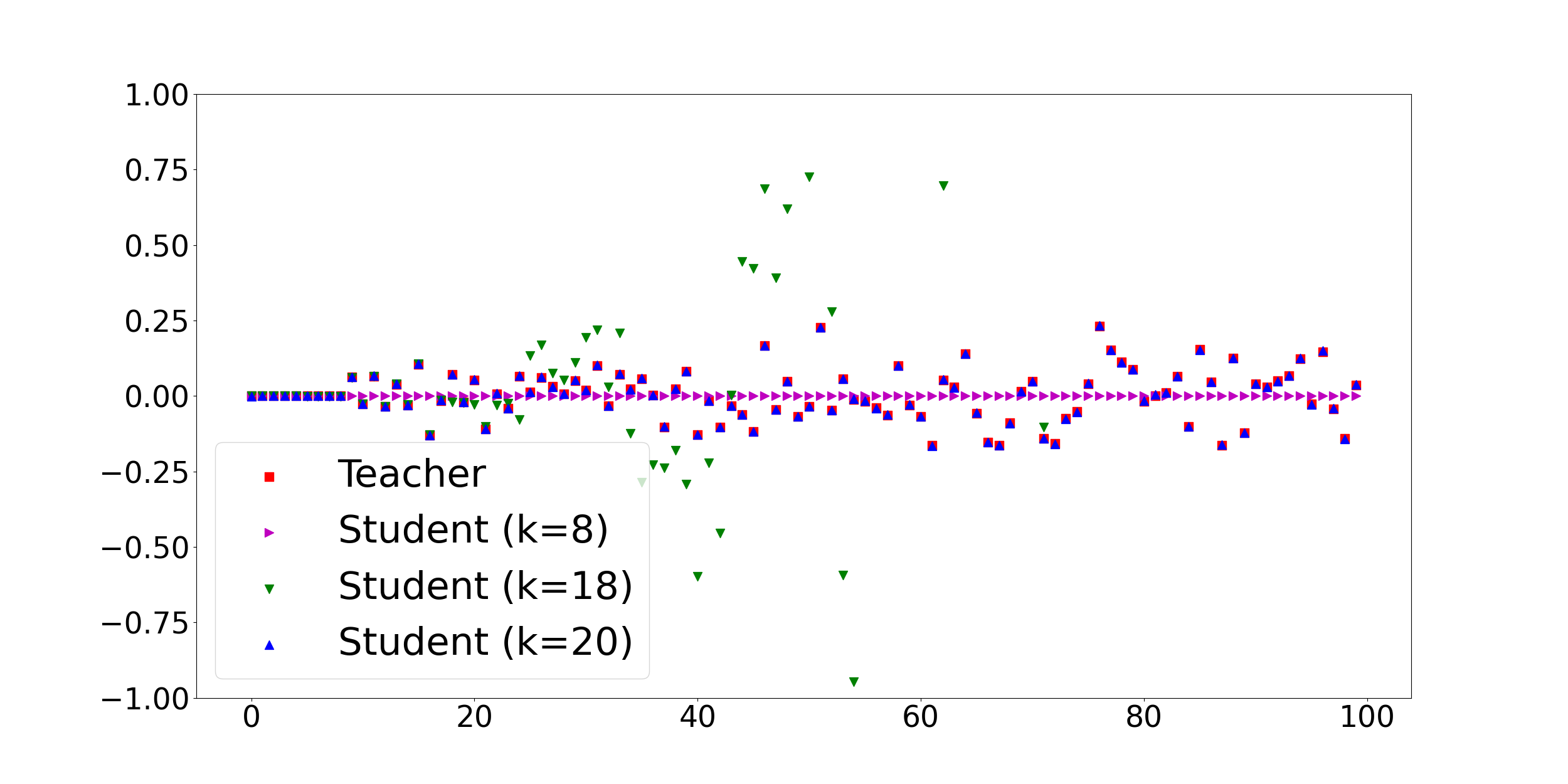}
    \caption{\textbf{Unbalanced teacher (delay) and student impulse response}. Students trained with: $k=8,18,20$ with respect to the unbalanced delay teacher described in \Secref{sec:apdx:unbalanced_teacher_generation}. We can see that for $k=18$ the student diverges for longer sequences while $k=20$ which is trained for merely two additional time steps extrapolates and tracks the teacher almost perfectly.}
    \label{fig:apdx:delay_teacher_ir}
\end{figure}

Since the teacher here has $\hat{d}=10$, a model trained with $k=8$ is trained with respect to the zero impulse response (see \Secref{sec:apdx:unbalanced_teacher_generation} for details on delay teacher), and as expected results with the `zero' solution. we can see that for $k=18$ the student diverges from the teacher shortly after the $18th$ time step. For $k=20$ we can see near perfect extrapolation up to the horizon considered.

\section{Implementation Details}\label{sec:apdx:impl_details}

All the experiments are implemented using PyTorch.

\subsection{Optimization}
In \Secref{sec:exp:sym_teacher} we optimize the population loss, which entails minimizing \Eqref{eq:population_loss} with respect to the parameters of the learned model. We use 15K optimization steps with Adam optimizer and a learning rate of $10^{-3}$. In this experiment, the results were not sensitive to the initialization scale of the (balanced) student.
In \Secref{sec:exp:step_func} and \Secref{sec:exp:non_linear_teacher} in the experiments that involve minimizing the empirical loss, we use 50K optimization steps with early stopping (most experiments required less than 10K steps). The batch size is set to $100$, data is sampled from a Gaussian with zero mean and scale of $1$. Experiments were not sensitive to most hyper-parameters other than learning rate and initialization scale. The examination of the effect of initialization scale presented in \Secref{sec:largeinit} is done with learning rate scheduler \verb|torch.optim.lr_scheduler.MultiStepLR| using milestones at $[5000,10000,15000,30000]$ and a decaying factor of $\gamma=0.1$.

\subsection{Teacher Generation}\label{sec:apdx:teacher_gen}
One of the main challenges in empirically evaluating extrapolation is that randomly sampling weights from a Gaussian distribution may result with an RNN of lower effective rank (i.e. the resulting RNN may be accurately approximated with another RNN with a smaller hidden dimension). We will now describe the teacher generation scheme for the different experiments.

\subsubsection{Balanced Teacher Generation}\label{sec:apdx:balanced_teacher_generation}
A balanced teacher consists of $d$ entries corresponding to the diagonal teacher and $d$ entries representing $\hat{\mB}=\hat{\mC}^\top$. In order to avoid cases of rapid decay in the impulse response on the one hand, and exponential growth on the other, we set the eigenvalues to distribute uniformly between $0.6$ and $1.05$. The values of $\hat{\mB}$ and $\hat{\mC}$ are randomly sampled from a Gaussian around 0.5 and scale 1 and then normalized such that $\hat{\mC}\hat{\mB}=1$.

\subsubsection{Unbalanced Teacher Generation}\label{sec:apdx:unbalanced_teacher_generation}
In this experiment, the teacher has a general (non-symmetrid) matrix $\hat{\mA}$ and $\hat{\mB}\neq\hat{\mC}^\top$. We  set the weights as described next.

\paragraph{Delay Teacher}
A `delay' teacher has an impulse response of $1$ at time step $i=\hat{d}-1$, that is, the teacher has an impulse response of $(0,\dots,0,1,0,\dots)$. In order to generate the mentioned impulse response we set the weights as follows,

\begin{equation}\label{eq:step_func}
    \mA=\begin{pmatrix}
    0 & 1 &   & 0 \\
      &   & \ddots &   \\
    0 & 0 &   & 1 \\
    0 & 0 &   & 0
    \end{pmatrix},\; \mB=\begin{pmatrix}
    0 \\
    \vdots\\
    0\\
    1
    \end{pmatrix}, \text{ and } \mC^\top=\begin{pmatrix}
    1 \\
    0 \\
    \vdots\\
    0
    \end{pmatrix}.
\end{equation}

Note that $\mB,\mC$ above are set to extract the last entry of the first row of $\mA^i$ and $\mA$ is a Nilpotent shift matrix. It is straightforward to verify that $\mC\mA^i\mB=1$ for $i=\hat{d}-1$ and $0$ otherwise.

\paragraph{Random Unbalanced Teacher}
The second unbalanced teacher is randomly generated. In order to avoid the caveats mentioned in \Secref{sec:apdx:balanced_teacher}, we randomly sample the diagonal (from a Gaussian with zero mean and scale $0.1$) and super diagonal (from a Gaussian with mean $0.7$ and scale $0.1$) of $A$. We set $B,C$ as in \eqref{eq:step_func}. The structure of $\mA$ ensures similar properties to that of the delayed teacher, specifically, that the first entries of the impulse response is zero and the teacher is `revealed' only after $\hat{d}$ time steps.

\subsubsection{Non-Linear Teacher Generation}\label{sec:apdx:gru_teacher_generation}
As opposed to the linear teacher discussed in previous sections, when the teacher is a Gated Recurrent Units (GRU), it is unclear how to generate a non-trivial teacher. When randomly generating a teacher GRU the result is either a trivial model that quickly decays to zero or a teacher with an exploding impulse response (depending on the scale of the initialization). In order to produce a teacher with interesting extrapolation behaviour, we initialize a model with an initialization scale of $10^{-6}$ and train for $1000$ step the model to mimic an arbitrarily chosen impulse response. The result of the mentioned procedure is a teacher GRU with non-trivial behaviour.  \figref{fig:gru_extrapolation}(b) shows that we get with this non-trivial teacher the phase transition phenomena as described in \Secref{sec:exp:non_linear_teacher}.

\subsection{Extrapolation Error}\label{sec:apdx:extrapolation_error}
The concept of extrapolation is very intuitive, and yet it does not admit any standard error measure. A proper extrapolation error measure should: (a) capture fine differences between two models with good extrapolation behaviour; and on the other hand, (b) be insensitive to the scale in which two non-extrapolating model explode. 
A natural approach which we take here is to report the $\ell_{\infty}$ norm difference on the tail of the impulse response. A model is considered non-extrapolating if the extrapolation error is worse than the extrapolation error of a trivial solution which has an impulse response of zeros.

\section{Accumulating Loss}
In the main paper the analysis is performed for the loss function defined in \Secref{sec:setup}, which corresponds to a regression problem over sequences. Another important and common loss function is an accumulating loss defined over the full output sequence. Specifically, the empirical loss of \Eqref{eq:empirical_loss} is replaced with,
\begin{equation}\label{eq:accumulating_loss}
    \mathcal{L}_S(\mA,\mB,\mC)=\frac{1}{N}\sum_{i=1}^N \sum_{j=0}^{k-1}\ell\left( RNN\left(x_j^{(i)}\right), y_j^{(i)}\right),
\end{equation}

In this section we discuss the adaptations required to accommodate our theory with the loss defined in \Eqref{eq:accumulating_loss}.

\subsection{Population Loss}
A similar derivation of the population loss described in Appendix \ref{apdx:pop_loss} can be applied to \Eqref{eq:accumulating_loss}. The difference is that an additional summation is introduced and is preserved throughout the analysis to result with,
\begin{equation}\label{eq:acc_loss_population}
    \E_{\vx\sim\mathcal{D}}\left[\sum_{j=0}^{k-1} \ell\left( RNN\left(x_j\right), y_j\right)\right]=  \sum_{j=0}^{k-1}\sum_{i=0}^{j} \left( \mC\mA^i\mB - w_i \right)^2.
\end{equation}

The loss above can be viewed as a different weighting of the original population loss, i.e. \Eqref{eq:acc_loss_population} can be written as
\begin{equation}\label{eq:weighted_acc_loss}
	\sum_{i=0}^{k-1} (k-i)(\mC\mA^i\mB-w_i)^2.
\end{equation}

It is clear that the minimizers of \Eqref{eq:acc_loss_population} are the same minimizers of \Eqref{eq:population_loss} (i.e. $\mC\mA^i\mB=w_i$ for $i=0,\dots,k-1$). Thus Lemma~\ref{lemma:exact_extrapolation} holds with no additional modifications.

\subsection{Approximate Extrapolation}
For Theorem~\ref{thm:main_result} the analysis in the proof makes use of the fact that the difference in the moments defined by the student and teacher is bounded by $\Oc(\sqrt{\delta})$. The same is true for the case of the weighted loss, specifically, if the loss $\le \delta$, then for all $i=0,\dots,k-1$, $\sqrt{(k-i)}(\mC\mA^i\mB-w_i)\le \sqrt{\delta}$. Since $k-i\ge 1$ for $i=0,\dots,k-1$ we have $(\mC\mA^i\mB-w_i)\le \sqrt{(k-i)}(\mC\mA^i\mB-w_i)\le \sqrt{\delta}$ and the remainder of the proof is the same.

\subsection{Implicit Bias for Balancedness}
The proof of the implicit bias for balancedness involve the gradients of the population loss defined in \Eqref{eq:population_loss}. For the weighted population loss the gradients differ, but the symmetries are all preserved (the gradient computation boils down to adding an external summation to the terms computed in \Secref{sec:grad_derivation}. The same steps described in \Secref{sec:apdx:bias_to_symmetry} apply for the weighted loss.

\section{Deferred Proofs}\label{apdx:a}
Here we provide complete proofs for the results in the paper.

\subsection{Auxilary Proofs}
In this section we provide missing proofs from the main paper and additional lemmas to be used in the main proofs.
\subsubsection{Population Loss}\label{apdx:pop_loss}

\begin{lemma}[\textbf{Proof of \Eqref{eq:population_loss}}]\label{lemma:expected_loss}
Assume $\vx\sim \mathcal{D}$ such that $\E_{\vx\sim\mathcal{D}}[\vx]=0, \E_{\vx\sim\mathcal{D}}[\vx\vx^\top]=\mI_k \in \R^{k,k}$, where $\mI_k$ is the identity matrix. $y$ is given by $y=\widehat{RNN}(\vx)$ where $\widehat{RNN}(\cdot)$ denotes the output of a teacher RNN, $\hat{\Theta}=(\hat{\mA}, \hat{\mB}, \hat{\mC})$. Denote $w_i=\hat{\mC}\hat{\mA}^i\hat{\mB}$, the loss for the student RNN satisfies:

\begin{equation}\label{eq:apdx:pop_loss}
    \E_{\vx\sim\mathcal{D}}\left[\ell\left( RNN\left(\vx\right), y\right)\right] = \sum_{i=0}^{k-1} \left( \mC\mA^i\mB - w_i \right)^2.
\end{equation}

\end{lemma}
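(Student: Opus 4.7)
The plan is to expand both the student and teacher outputs explicitly by unrolling the recurrence, so that the loss becomes a quadratic form in the input coordinates, and then to use the whitening hypothesis on $\vx$ to collapse that quadratic form into the stated sum of squares.

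First I would unroll the state recursion in \eqref{eq:lin_rnn} starting from $\vs_0 = \mathbf{0}$ to obtain the closed form $\vs_k = \sum_{j=0}^{k-1} \mA^{k-1-j}\mB\, x_j$, which yields $RNN(\vx) = y_k = \mC\vs_k = \sum_{j=0}^{k-1} \mC\mA^{k-1-j}\mB\, x_j$. Applying the same derivation to the teacher, and recalling the notation $w_i = \hat\mC\hat\mA^i\hat\mB$, gives $y = \widehat{RNN}(\vx) = \sum_{j=0}^{k-1} w_{k-1-j}\, x_j$. Subtracting the two expressions yields
\begin{equation*}
RNN(\vx) - y \;=\; \sum_{j=0}^{k-1} c_j\, x_j,
\qquad c_j := \mC\mA^{k-1-j}\mB - w_{k-1-j}.
\end{equation*}

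Next I would square this linear combination and take expectations. Expanding the square gives a double sum $\sum_{j,j'} c_j c_{j'} x_j x_{j'}$, so by linearity of expectation and the whitening assumption $\E[x_j x_{j'}] = \mathbf{1}\{j=j'\}$, the cross terms vanish and only the diagonal terms survive:
\begin{equation*}
\E_{\vx\sim\mathcal{D}}\!\left[\ell\bigl(RNN(\vx), y\bigr)\right]
\;=\;\sum_{j=0}^{k-1} c_j^{\,2}
\;=\;\sum_{j=0}^{k-1}\bigl(\mC\mA^{k-1-j}\mB - w_{k-1-j}\bigr)^{2}.
\end{equation*}

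Finally I would perform the change of index $i = k-1-j$, which merely reverses the order of summation and converts the sum into $\sum_{i=0}^{k-1}(\mC\mA^i\mB - w_i)^2$, establishing \eqref{eq:apdx:pop_loss}. None of the steps pose a genuine obstacle: the only subtlety is making sure that the unrolling of the recursion produces $\mA^{k-1-j}$ (rather than $\mA^{k-j}$ or $\mA^j$) and that the final reindexing is carried out correctly; both are purely bookkeeping, and the mean-zero assumption $\E[\vx]=0$ is not actually needed beyond what is already implied by $\E[\vx\vx^\top]=\mI_k$ giving zero cross-correlations.
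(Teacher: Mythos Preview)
Your proposal is correct and follows essentially the same approach as the paper: unroll the recurrence to write the error as a linear combination of the input coordinates, square, and use the whitening assumption $\E[x_j x_{j'}]=\mathbf{1}\{j=j'\}$ to kill the cross terms, then reindex. The only cosmetic difference is that the paper expands the square into the three cross-term groups before applying the indicator, whereas you first collect the difference into a single coefficient $c_j$; your version is arguably tidier, but the argument is the same.
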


\begin{proof}[\textbf{Proof of Lemma~\ref{lemma:expected_loss}}]
The population loss for training with sequences of length $k$ is

\begin{equation}
    \E_{\vx\sim\mathcal{D}}\left[\ell\left( RNN\left(\vx\right), y\right)\right]=\E_{\vx\sim\mathcal{D}}\left[ \left(\sum_{i=0}^{k-1}\mC\mA^{k-1-i}\mB \evx_i - \sum_{j=0}^{k-1}w_{k-1-j} \evx_j\right)^2 \right].
\end{equation}

Reversing the order of summation, expanding the terms,

\begin{align}
    \E_{\vx\sim\mathcal{D}}&\left[\ell\left( RNN\left(\vx\right), y\right)\right]=\E_{\vx\sim\mathcal{D}}\left[ \left(\sum_{i=0}^{k-1}\mC\mA^{i}\mB \evx_{k-1-i} - \sum_{j=0}^{k-1}w_{j} \evx_{k-1-j}\right)^2 \right] \\ &=  \sum_{i, j=0}^{k-1} \left[ \mC\mA^i\mB \mC\mA^j\mB -2\mC\mA^i\mB  w_j +  w_i w_j \right] \E_{\vx\sim\mathcal{D}}\left[\evx_{k-1-i} \evx_{k-1-j}\right] \\
    &=  \sum_{i, j=0}^{k-1} \left[ \mC\mA^i\mB \mC\mA^j\mB -2\mC\mA^i\mB  w_j +  w_i w_j \right] \1_\mathrm{k-1-i = k-1-j} \\ &=  \sum_{i, j=0}^{k-1} \left[ \mC\mA^i\mB \mC\mA^j\mB -2\mC\mA^i\mB  w_j +  w_i w_j \right] \1_\mathrm{i = j}
    \\ &=  \sum_{i =0}^{k-1} \left[ (\mC\mA^i\mB)^2 -2\mC\mA^i\mB  w_i +  w_i^2 \right] =  \sum_{i=0}^{k-1} \left( \mC\mA^i\mB - w_i \right)^2.
\end{align}
where the transition from the second to third rows is by our assumption that $\E_{\vx\sim\mathcal{D}}[\evx_i\evx_j]=\1_\mathrm{i=j}$.
Therefore we have,

\begin{equation}
    \E_{\vx\sim\mathcal{D}}\left[ \ell\left( RNN\left(\vx\right), y\right)\right]=  \sum_{i=0}^{k-1} \left( \mC\mA^i\mB - w_i \right)^2.
\end{equation}
concluding the proof.
\end{proof}

\subsubsection{Perfect Generalization and Failed Extrapolation}\label{sec:apdx:perfect_generalization_failed_extrapolation}

\begin{proposition}[Proposition~\ref{prop:symmetric_lds_expressivity} in main paper]
Assume $d > k$, and let $\epsilon \geq 0$ and $q \in \{ k + 1 , k + 2 , \ldots\}$.
Then, for any teacher parameters \smash{$\hat{\Theta}$}, there exist student parameters $\Theta$ with which the population loss in \Eqref{eq:population_loss} equals zero, and yet the student does \emph{not} $\epsilon$-extrapolate with horizon $q$.
\end{proposition}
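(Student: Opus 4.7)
The strategy is to exploit the overparameterization assumption $d > k$ to explicitly construct a student RNN whose impulse response agrees with the teacher's for the first $k$ time steps (making the population loss in \Eqref{eq:population_loss} vanish) while disagreeing by more than $\epsilon$ at the time index $j = k$ (violating $\epsilon$-extrapolation with horizon~$q$, since $q \geq k+1$ ensures $k \in \{0,\ldots,q-1\}$). The guiding observation, hinted at in the proof sketch, is that for a $d$-dimensional linear RNN the first $d$ entries of the impulse response can be prescribed arbitrarily via a Vandermonde-type argument.

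Concretely, I would first take the student's transition matrix $\mA$ to be diagonal (which is automatically symmetric), with $d$ pairwise distinct real diagonal entries $\lambda_1,\ldots,\lambda_d$. For such $\mA$ one has $\mC \mA^j \mB = \sum_{i=1}^d C_i B_i \lambda_i^{\,j}$ for every $j \in \mathbb{N}$. Setting $w_i := C_i B_i$, the linear map $(w_1,\ldots,w_d) \mapsto \bigl(\sum_{i=1}^d w_i \lambda_i^{\,j}\bigr)_{j=0}^{d-1}$ is given by a $d \times d$ Vandermonde matrix, which is invertible because the $\lambda_i$ are distinct. Hence, for any target vector $(v_0,\ldots,v_{d-1}) \in \mathbb{R}^d$ I can solve uniquely for weights $w_1,\ldots,w_d$ realizing the prescribed impulse response on indices $0,\ldots,d-1$, and then absorb them into $\mB,\mC$ in any convenient way (for instance, $B_i = 1$ and $C_i = w_i$ for every $i$).

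To finish, I would pick the target vector as $v_j = \hat{\mC}\hat{\mA}^j\hat{\mB}$ for $j = 0,\ldots,k-1$ and $v_k = \hat{\mC}\hat{\mA}^k\hat{\mB} + 2\epsilon$ (with the remaining entries $v_{k+1},\ldots,v_{d-1}$ arbitrary); this is legitimate because $d > k$ guarantees that index~$k$ lies within the first $d$ freely controllable positions. By \Eqref{eq:population_loss} the population loss of the resulting student is zero, while at index $j = k \in \{0,\ldots,q-1\}$ the impulse response differs from the teacher's by $2\epsilon > \epsilon$, so the student fails to $\epsilon$-extrapolate with horizon~$q$ (Definition~\ref{def:extrapolation}).

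There is no genuine obstacle in this plan: the only non-trivial ingredient is the invertibility of the Vandermonde matrix associated with distinct nodes, which is classical. The one point that deserves a sentence of justification is that the symmetry constraint on $\mA$ does not reduce expressive power below what is needed here, which is transparent once $\mA$ is taken diagonal. Should a fully unified statement (covering $\epsilon = 0$ as a degenerate case) be desired, the same construction works verbatim, since any nonzero deviation at index~$k$ already precludes $0$-extrapolation with horizon~$q$.
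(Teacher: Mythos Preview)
Your approach is essentially identical to the paper's: both take $\mA$ diagonal (hence symmetric) with distinct eigenvalues and invoke the invertibility of the associated $d\times d$ Vandermonde matrix to freely prescribe the first $d$ impulse-response entries, matching the teacher on indices $0,\ldots,k-1$ and deviating at index~$k$. One small slip: when $\epsilon = 0$ your choice $v_k = \hat{\mC}\hat{\mA}^k\hat{\mB} + 2\epsilon$ yields zero deviation, so the construction does \emph{not} ``work verbatim'' as your final sentence asserts; simply replace $2\epsilon$ by any quantity strictly exceeding $\epsilon$ (e.g., $\epsilon+1$) to cover all $\epsilon \geq 0$ uniformly.
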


\begin{proof}
    Consider a student, $\Theta$, such that $\mA$ is symmetric (and therefore has an orthogonal eigendecomposition). Denote $\mA=\mU\mLambda \mU^\top$. The impulse response at time step $i$ can be expressed as $\mC\mA^i\mB=\mC\mU\mLambda^i \mU^\top \mB$. The latter can be written compactly in matrix form as $\mV\vg$ where $\mV$ is the Vandermonde matrix with $diag(\mLambda)$ as its values,
    \begin{equation*}
        \mV=\begin{pmatrix}
            1 & 1 & \dots & 1\\
            \lambda_1 & \lambda_2 & \dots & \lambda_d\\
            \lambda_1^2 & \lambda_2^2 & \dots & \lambda_d^2\\
            \vdots & \vdots & & \vdots \\
            \lambda_1^{d-1} & \lambda_2^{d-1} & \dots & \lambda_d^{d-1}\\
        \end{pmatrix},
    \end{equation*}
    and $\vg$ is defined as $\vg\equiv (\mC\mU)^\top\odot \mU^\top \mB$.\footnote{Here $\odot$ denotes the Hadamard (elementwise) product.} A known result on square Vandermonde matrices is that they are invertible if and only if $\lambda_i\neq \lambda_j,\; \forall i\neq j$. Given a fixed set of distinct values $(\lambda_1,\dots,\lambda_d)$ and an arbitrary impulse response $\vr\in\mathbb{R}^d$, in order for the student to generate the impulse response $\vr$ (i.e. $\mV\vg=\vr$), one can set the coefficient vector, $\vg=\mV^{-1}\vr$ and end up with a symmetric student with $\vr$ as its impulse response of length $d$.
    
    Consider a teacher RNN, $\hat{\Theta}=\left(\mA,\mB,\mC\right)$, we can set and the first $k$ entries of $\vr$ to $ \evr_i=\hat{\mC}\hat{\mA}^{i-1}\hat{\mB},\; \forall i=\{1,\dots,k\}$. We are therefore left with $d-k$ degrees of freedom which yields many different students that correspond to the first $k$ entries of the teacher while fitting arbitrary values beyond the $k$ considered.
\end{proof}

\subsubsection{Equivalence Between Balanced RNNs with Symmetric and Diagonal Transition Matrices}
\begin{lemma}\label{lemma:equiv_diagonal_rnn}
A balanced RNN, $\Theta=(\mA,\mB,\mC)$, with a symmetric transition matrix (i.e. $\mB=\mC^\top$ and $\mA=\mA^\top$) has an equivalent (i.e. generating the same impulse response) RNN, $\Theta'=(\mA',\mB',\mC')$, which is balanced and its transition matrix is diagonal.
\end{lemma}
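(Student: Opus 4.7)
The plan is to exploit the orthogonal eigendecomposition of the symmetric transition matrix~$\mA$ and absorb the eigenvectors into $\mB$ and~$\mC$. Since $\mA = \mA^\top$, the spectral theorem guarantees an orthogonal matrix~$\mU$ and a diagonal matrix~$\mLambda$ such that $\mA = \mU \mLambda \mU^\top$. I would then define the candidate equivalent parameters
\begin{equation*}
\mA' := \mLambda, \qquad \mB' := \mU^\top \mB, \qquad \mC' := \mC \mU,
\end{equation*}
and verify that $\Theta' = (\mA', \mB', \mC')$ meets all three requirements.

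First I would check that the impulse response is preserved. For every $j \in \mathbb{N}$, a direct computation gives
\begin{equation*}
\mC' (\mA')^j \mB' = (\mC \mU) \mLambda^j (\mU^\top \mB) = \mC (\mU \mLambda^j \mU^\top) \mB = \mC \mA^j \mB,
\end{equation*}
where the last equality uses $\mU \mLambda^j \mU^\top = (\mU \mLambda \mU^\top)^j = \mA^j$, which follows from the orthogonality $\mU^\top \mU = \mI$. By Definition~\ref{def:impulse_response}, this establishes that $\Theta$ and~$\Theta'$ realize the same input-output mapping.

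Next I would verify that $\Theta'$ is balanced in the sense of Definition~\ref{def:balanced}. Using the hypothesis $\mB = \mC^\top$ and the definition of~$\mB'$,
\begin{equation*}
(\mC')^\top = (\mC \mU)^\top = \mU^\top \mC^\top = \mU^\top \mB = \mB',
\end{equation*}
so indeed $\mB' = (\mC')^\top$. Finally, $\mA' = \mLambda$ is diagonal by construction.

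This argument is largely mechanical, so I do not expect a genuine obstacle; the only subtlety is ensuring that the preserved quantity is really the full impulse response rather than just a single power, which is handled by the identity $\mU \mLambda^j \mU^\top = \mA^j$ valid for all~$j$. Hence a brief proof along these lines suffices to establish the lemma.
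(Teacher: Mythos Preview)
Your proposal is correct and mirrors the paper's own proof almost exactly: both diagonalize $\mA = \mU\mLambda\mU^\top$, set $\mA' = \mLambda$, $\mB' = \mU^\top\mB$, $\mC' = \mC\mU$, and check $\mC'(\mA')^j\mB' = \mC\mA^j\mB$. If anything, your write-up is slightly more thorough, since you explicitly verify balancedness of~$\Theta'$, which the paper leaves implicit.
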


Lemma~\ref{lemma:equiv_diagonal_rnn} allows alternating between systems with symmetric and diagonal matrices. This is useful to simplify the analysis in \Secref{sec:analysis}.

\begin{proof}[\textbf{Proof of Lemma~\ref{lemma:equiv_diagonal_rnn}}]
    Any symmetric matrix admits an orthogonal eigendecomposition with real (non-imaginary) eigenvalues. Denote $\mA=\mU\mLambda\mU^\top$. We can define
\begin{equation*}
    \mA'=\mLambda,
    \quad
    \mB'=\mU^\top \mB
    \quad
    \text{ and }
    \quad
    \mC'=\mC\mU,
\end{equation*}
The $i^{th}$ index of the impulse response is given by  
\begin{equation*}
    \mC\mA^i\mB=\mC\mU\mLambda^i\mU^\top\mB=\mC'\left(\mA'\right)^i\mB'
\end{equation*}
concluding that $\Theta$ and $\Theta'$ have the same impulse response of any length.
\end{proof}

\subsubsection{Gradient Derivation}\label{sec:grad_derivation}
For completeness and \Secref{sec:apdx:bias_to_symmetry}, we compute the gradients for the general setting.
\begin{lemma} \label{lemma:gradients}
Given the population loss
\begin{equation}
\mathcal{L}(\mA,\mB,\mC)=\sum_{j=0}^{k-1} \left(\mC\mA^j \mB-\hat{\mC}\hat{\mA}^j\hat{\mB}\right)^2
    \text{\,.}\tag{\ref{eq:population_loss} revisited}
\end{equation}
Denote $\nabla \ell_i = \mC\mA^i\mB-w_i$, the derivatives of the loss with respect to $\mB$, and $\mC$ satisfy:


\begin{equation}
    \frac{\partial \mathcal{L}}{\partial \mB} = \sum_{i=0}^{k-1} \nabla \ell_i (\mA^i)^\top \mC^\top,
\end{equation}

\begin{equation}
    \frac{\partial \mathcal{L}}{\partial \mC} = \sum_{i=0}^{k-1} \nabla \ell_i \mB^\top \left(\mA^i\right)^\top.
\end{equation}

\end{lemma}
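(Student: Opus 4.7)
The plan is a direct application of the chain rule together with two elementary matrix calculus identities. I would begin by writing $\mathcal{L}(\mA,\mB,\mC) = \sum_{i=0}^{k-1} f_i(\mA,\mB,\mC)^2$, where $f_i := \mC\mA^i\mB - w_i$ is scalar-valued (since $\mC \in \mathbb{R}^{1\times d}$, $\mA^i \in \mathbb{R}^{d\times d}$, $\mB \in \mathbb{R}^{d\times 1}$, and $w_i \in \mathbb{R}$). Differentiating each summand by the chain rule gives a contribution of $2 f_i \cdot \partial f_i / \partial(\cdot)$, and the factor of $2$ is absorbed into the shorthand $\nabla \ell_i := \mC\mA^i\mB - w_i$ used in the statement (equivalently, one may view $\mathcal{L}$ as implicitly scaled by $1/2$).

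Next I would compute the two inner matrix derivatives. Since $f_i = \mC\mA^i\mB$ is linear in $\mB$ with row coefficient $\mC\mA^i$, the standard identity $\partial(\vu^\top \vx)/\partial \vx = \vu$ applied with $\vu = (\mC\mA^i)^\top$ gives
\begin{equation*}
\frac{\partial (\mC\mA^i\mB)}{\partial \mB} = (\mC\mA^i)^\top = (\mA^i)^\top \mC^\top \in \mathbb{R}^{d\times 1},
\end{equation*}
which is of the right shape to match $\mB$. Similarly, $f_i$ is linear in $\mC$ with column coefficient $\mA^i\mB$, so
\begin{equation*}
\frac{\partial(\mC\mA^i\mB)}{\partial \mC} = (\mA^i\mB)^\top = \mB^\top (\mA^i)^\top \in \mathbb{R}^{1\times d},
\end{equation*}
of the right shape to match $\mC$.

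Summing these contributions over $i = 0, \ldots, k-1$ yields the two formulas as stated. There is no real obstacle here; this is a purely mechanical computation. The only point requiring care is the bookkeeping of transposes and shapes under the paper's convention that $\mB$ is a column and $\mC$ is a row, together with the convention that the gradient of a scalar with respect to a matrix inherits that matrix's shape. (A derivation of $\partial \mathcal{L}/\partial \mA$ would be slightly more involved, as the power $\mA^i$ would need to be expanded via the product rule as $\sum_{j=0}^{i-1} (\mA^j)^\top \mC^\top \mB^\top (\mA^{i-1-j})^\top$, but this is not required by the statement.)
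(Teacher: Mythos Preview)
Your proposal is correct and follows essentially the same route as the paper's own proof: both compute the inner derivatives $\partial(\mC\mA^i\mB)/\partial \mB = (\mA^i)^\top\mC^\top$ and $\partial(\mC\mA^i\mB)/\partial \mC = \mB^\top(\mA^i)^\top$ directly from linearity, then apply the chain rule to each squared term and sum over $i$. Your remark about the factor of $2$ being absorbed into $\nabla\ell_i$ mirrors the paper's own (silent) convention.
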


\begin{proof}[\textbf{Proof of Lemma~\ref{lemma:gradients}}]
Here, we will compute the gradient of the population loss.

Note that for $j\ge 0$, the derivative of $\mC\mA^j\mB$ with respect to to $\mB$ is given by
\begin{equation}\label{eq:apdx:dB}
    \frac{\partial (\mC\mA^j\mB)}{\partial \mB}=(\mA^j)^\top\mC^\top.
\end{equation}
Similarly, the derivative of $\mC\mA^j\mB$ with respect to to $\mC$ is given by
\begin{equation}\label{eq:apdx:dC}
    \frac{\partial (\mC\mA^j\mB)}{\partial \mC}=\mB^\top(\mA^j)^\top.
\end{equation}
%
%
Using these derivatives, we can calculate the derivative of the population loss, (assigning $w_i=\hat\mB\hat{\mA}^i\hat{\mC}$),
\begin{equation}
    \mathcal{L}(\mA,\mB,\mC) = \E_{\vx\sim\mathcal{D}}\left[ \ell\left( RNN\left(\vx\right), y\right)\right]=  \sum_{i=0}^{k-1} \left( \mC\mA^i\mB - w_i \right)^2.
\end{equation}
Denoting $\nabla \ell_i = \mC\mA^i\mB-w_i$, and noting that $w_i$ is constant (depends on \smash{$\hat{\Theta}$}), we have for $\mX \in \{\mB, \mC\}$:
\begin{equation}
    \frac{\partial \mathcal{L}}{\partial \mX} =  \sum_{i=0}^{k-1} \frac{\partial\left( \mC\mA^i\mB - w_i \right)^2}{\partial \mX} = \sum_{i=0}^{k-1} \nabla \ell_i \frac{\partial\left( \mC\mA^i\mB - w_i \right)}{\partial \mX} = \sum_{i=0}^{k-1} \nabla \ell_i \frac{\partial\left( \mC\mA^i\mB \right)}{\partial \mX}.
\end{equation}

Plugging in \Eqref{eq:apdx:dB} and \Eqref{eq:apdx:dC}, we have:
\begin{equation}
    \frac{\partial \mathcal{L}}{\partial \mB} = \sum_{i=0}^{k-1} \nabla \ell_i \frac{\partial\left( \mC\mA^i\mB \right)}{\partial \mB} = \sum_{i=0}^{k-1} \nabla \ell_i (\mA^i)^\top \mC^\top,
\end{equation}

\begin{equation}
    \frac{\partial \mathcal{L}}{\partial \mC} = \sum_{i=0}^{k-1} \nabla \ell_i \frac{\partial\left( \mC\mA^i\mB \right)}{\partial \mC} = \sum_{i=0}^{k-1} \nabla \ell_i \mB^\top(\mA^i)^\top,
\end{equation}


\end{proof}

\subsubsection{Lemma~\ref{lemma:balanced} (Conservation of Balancedness)}\label{sec:lemma_balanced_proof}

\begin{lemma} \textbf{[Lemma \ref{lemma:balanced} in main paper]}
When optimizing \eqref{eq:population_loss} with GF emenating from a balanced initialization $\Theta(0)$, the parameters $\Theta(\tau)$ are balanced for all $\tau \in \mathbb{R}_+$.
\end{lemma}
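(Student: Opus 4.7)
The plan is to track the evolution of the difference $\mD(\tau) := \mB(\tau) - \mC^\top(\tau)$ under GF and show it satisfies a linear matrix ODE whose unique solution starting from $\mD(0) = 0$ is identically zero. The key leverage comes from the symmetric roles of $\mB$ and $\mC^\top$ in the loss together with the assumption that $\mA$ is symmetric (so $(\mA^i)^\top = \mA^i$).

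First, I would invoke Lemma \ref{lemma:gradients}, which gives, writing $\nabla \ell_i(\tau) := \mC(\tau) \mA(\tau)^i \mB(\tau) - \hat{\mC}\hat{\mA}^i\hat{\mB}$,
\[
\dot{\mB}(\tau) = -\sum_{i=0}^{k-1} \nabla \ell_i(\tau)\, (\mA(\tau)^i)^\top \mC(\tau)^\top,
\qquad
\dot{\mC}(\tau) = -\sum_{i=0}^{k-1} \nabla \ell_i(\tau)\, \mB(\tau)^\top (\mA(\tau)^i)^\top.
\]
Transposing the second equation and using $\mA(\tau) = \mA(\tau)^\top$ (a standing assumption of the setup in \Secref{sec:setup}), these become
\[
\dot{\mB}(\tau) = -\sum_{i=0}^{k-1} \nabla \ell_i(\tau)\, \mA(\tau)^i\, \mC(\tau)^\top,
\qquad
\dot{\mC}^\top(\tau) = -\sum_{i=0}^{k-1} \nabla \ell_i(\tau)\, \mA(\tau)^i\, \mB(\tau).
\]

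Subtracting the two equations yields
\[
\dot{\mD}(\tau) = -\left(\sum_{i=0}^{k-1} \nabla \ell_i(\tau)\, \mA(\tau)^i\right) \mD(\tau) =: \mM(\tau)\, \mD(\tau),
\]
a homogeneous linear ODE in $\mD$ with time-varying coefficient matrix $\mM(\tau)$. The coefficient $\mM(\tau)$ is continuous in $\tau$ along the GF trajectory (it is a polynomial expression in the entries of $\Theta(\tau)$, and $\Theta(\tau)$ is continuous as the solution of an ODE with smooth right-hand side). The balanced-initialization hypothesis gives $\mD(0) = 0$, and the constant-zero curve is trivially a solution. By the standard uniqueness theorem for linear ODEs with continuous coefficients, $\mD(\tau) \equiv 0$, i.e.\ $\mB(\tau) = \mC(\tau)^\top$ for all $\tau \geq 0$.

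I do not anticipate a serious obstacle here: the argument is essentially an instance of a now-standard conservation-law calculation for GF on multilinear objectives (cf.\ the literature on linear feed-forward networks). The only substantive input specific to the RNN setting is the symmetry of $\mA$, which is what allows the factor $(\mA^i)^\top$ appearing in $\partial \mathcal{L}/\partial \mB$ to match the factor $(\mA^i)^\top$ appearing in $\partial \mathcal{L}/\partial \mC$ after a transpose, thereby producing the same $\mM(\tau)$ for both equations and enabling the cancellation that yields the linear ODE for $\mD$.
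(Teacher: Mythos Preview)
Your argument is correct and is in fact the more direct route. The paper's appendix proof takes a detour: it first establishes the invariance for discrete-time GD by an induction on iterations (showing $\partial\mathcal{L}/\partial\mB_t = (\partial\mathcal{L}/\partial\mC_t)^\top$ whenever $\mB_t=\mC_t^\top$), and then transfers the conclusion to GF by appealing to the GD$\to$GF approximation result of \cite{elkabetz2021continuous}. Your approach instead stays entirely in continuous time, derives the homogeneous linear ODE $\dot{\mD}=\mM(\tau)\mD$ for the defect $\mD=\mB-\mC^\top$, and invokes uniqueness for linear ODEs with continuous coefficients. This is cleaner (no limiting argument needed) and makes the role of the symmetry of $\mA$ fully explicit, exactly as the paper's one-line proof sketch in the main text suggests but its appendix does not carry out.

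One cosmetic slip: your subtraction gives the wrong sign. With $\mD=\mB-\mC^\top$ one gets
\[
\dot{\mD}=\dot{\mB}-\dot{\mC}^\top
=-\sum_i \nabla\ell_i\,\mA^i\mC^\top+\sum_i \nabla\ell_i\,\mA^i\mB
=\Big(\sum_i \nabla\ell_i\,\mA^i\Big)\mD,
\]
i.e.\ $+\mM\mD$ rather than $-\mM\mD$. This has no bearing on the conclusion, since the only property used is that the ODE for $\mD$ is linear and homogeneous with $\mD(0)=0$.
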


We prove the above result by first showing it for GD and then translating the result to GF. The GD result is stated below, and generalizes a result that was shown in \cite{cohen2022extrapolation} for the memoryless case.

\begin{lemma}\label{lemma:gd_invariance}
When optimizing \eqref{eq:population_loss} with GD with balanced initial conditions, then $\forall t\in\mathbb{N}$, $\Theta$ has a balanced weight configuration, i.e. $\mB_t=\mC_t^\top$.
\end{lemma}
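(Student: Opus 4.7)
The plan is to proceed by induction on the GD iterate $t$, using the explicit gradient formulas provided by Lemma~\ref{lemma:gradients} together with the fact that $\mA$ is constrained to be symmetric (since it is parameterized through its $d(d+1)/2$ upper triangular entries, as described in Section~\ref{sec:setup}). The base case $t = 0$ is exactly the balanced initialization hypothesis, so the work lies entirely in the inductive step.

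For the inductive step, assume $\mB_t = \mC_t^\top$ and $\mA_t = \mA_t^\top$. Lemma~\ref{lemma:gradients} gives
\begin{equation*}
\left.\frac{\partial \mathcal{L}}{\partial \mB}\right|_t = \sum_{i=0}^{k-1} \nabla \ell_i \, (\mA_t^i)^\top \mC_t^\top, \qquad \left.\frac{\partial \mathcal{L}}{\partial \mC}\right|_t = \sum_{i=0}^{k-1} \nabla \ell_i \, \mB_t^\top (\mA_t^i)^\top.
\end{equation*}
Taking the transpose of the second expression and using that $\nabla \ell_i$ is a scalar yields $\bigl(\partial \mathcal{L}/\partial \mC\bigr)_t^\top = \sum_i \nabla \ell_i \, \mA_t^i \mB_t$. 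By symmetry of $\mA_t$ we have $\mA_t^i = (\mA_t^i)^\top$, and by the inductive hypothesis $\mB_t = \mC_t^\top$, so this equals $\sum_i \nabla \ell_i \, (\mA_t^i)^\top \mC_t^\top = \bigl(\partial \mathcal{L}/\partial \mB\bigr)_t$. Thus the two gradients are transposes of each other. Applying the GD update rule with step size $\eta$,
\begin{equation*}
\mC_{t+1}^\top = \mC_t^\top - \eta \left(\frac{\partial \mathcal{L}}{\partial \mC}\bigg|_t\right)^\top = \mB_t - \eta \frac{\partial \mathcal{L}}{\partial \mB}\bigg|_t = \mB_{t+1},
\end{equation*}
which closes the induction. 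Note that $\mA_{t+1} = \mA_{t+1}^\top$ requires no work, since the symmetric parameterization of $\mA$ enforces this at every iterate by construction.

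Having proved the discrete-time statement (Lemma~\ref{lemma:gd_invariance}), Lemma~\ref{lemma:balanced} for GF follows by a standard continuous-time argument: the map $\tau \mapsto \mB(\tau) - \mC(\tau)^\top$ satisfies an ODE of the form $\frac{d}{d\tau}(\mB - \mC^\top) = -\bigl(\tfrac{\partial \mathcal{L}}{\partial \mB} - (\tfrac{\partial \mathcal{L}}{\partial \mC})^\top\bigr)$, and the computation above shows that the right-hand side vanishes identically whenever $\mB = \mC^\top$, so $\mB(\tau) - \mC(\tau)^\top \equiv 0$ by uniqueness of solutions to the GF ODE with balanced initial condition. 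Alternatively one may take a limit of the GD statement as the step size tends to zero, invoking a standard GD-to-GF convergence result.

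I do not expect any serious obstacle here; the entire content of the argument is the algebraic identity $\bigl(\partial \mathcal{L}/\partial \mC\bigr)^\top = \partial \mathcal{L}/\partial \mB$ at balanced points, which reduces to the symmetry of $\mA$ combined with the hypothesis $\mB = \mC^\top$. The one subtlety worth stating carefully is that this identity genuinely requires $\mA$ symmetric --- otherwise the middle factor $(\mA^i)^\top$ in the $\mB$-gradient would not match the $(\mA^i)^\top$ in the transposed $\mC$-gradient --- which is why the restriction to symmetric transition matrices from Section~\ref{sec:setup} is essential at this point.
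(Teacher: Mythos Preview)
Your proof is correct and follows essentially the same approach as the paper: induction on $t$, using the gradient formulas from Lemma~\ref{lemma:gradients} together with the symmetry of $\mA_t$ and the inductive hypothesis $\mB_t=\mC_t^\top$ to show that $\bigl(\partial\mathcal{L}/\partial\mC\bigr)^\top=\partial\mathcal{L}/\partial\mB$, and then concluding via the GD update. Your added remarks on why $\mA_{t+1}$ remains symmetric and on passing to GF are fine and match the paper's treatment.
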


\begin{proof}[\textbf{Proof of Lemma~\ref{lemma:gd_invariance}}]
We prove by induction. By our assumption, the condition holds for $t=0$. Assume $\mB_t=\mC_t^\top$, our goal is to show the conditions hold for $(\mB_{t+1},\mC_{t+1})$.
In order to show that $\mB_{t+1}=\mC_{t+1}^\top$, we only need to show that $\frac{\partial \mathcal{L}}{\partial \mB_t}=\left( \frac{\partial \mathcal{L}}{\partial \mC_t} \right)^\top$. Writing the gradients (Lemma~\ref{lemma:gradients}), we have
\begin{equation}
    \left(\frac{\partial\mathcal{L}}{\partial \mC_t}\right)^\top = \sum_{i=0}^{k-1}\nabla\ell_i \mA_t^i\mB_t =\sum_{i=0}^{k-1}\nabla\ell_i (\mA_t^\top)^i \mC_t^\top= \frac{\partial\mathcal{L}}{\partial \mB_t},
\end{equation}
where the inequality follows from the induction assumption and the symmetric structure of $\mA_t$. To conclude, the gradients at time $t$ are the same and $\mB_t=\mC_t^\top$ by the induction assumption, arriving at
\begin{equation}
    \mB_{t+1}=\mB_t-\eta\frac{\partial\mathcal{L}}{\partial \mB_t}=\mC_t^\top - \eta \left(\frac{\partial\mathcal{L}}{\partial \mC_t}\right)^\top=\mC_{t+1}^\top
\end{equation} 
\end{proof}

The proof of Lemma \ref{lemma:balanced} follows from Lemma \ref{lemma:gd_invariance} and the fact that for sufficiently small step size GD approximates GF with arbitrary precision  \citep[see Theorem 3 in][]{elkabetz2021continuous}.

\subsubsection{Conservation of difference of norms}\label{sec:apdx:conservation_of_norm_diff}

\Appref{sec:lemma_balanced_proof} shows that if weights are initialized to be balanced, this property is conserved throughout optimization. Here we show under standard initialization schemes, the difference between the norms of $\mB$ and $\mC$ is also conserved.

\begin{lemma}\label{lemma:preserve_norm_diff}
When optimizing \eqref{eq:population_loss} with GF the difference between the norms of $\mB$, $\mC$ is conserved throughout GF, i.e.,
\begin{equation}
    \frac{d}{dt}\left(\|\mB\|_F^2 -  \|\mC\|_F^2 \right) = 0.
\end{equation}
\end{lemma}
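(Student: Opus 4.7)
The plan is to compute $\frac{d}{d\tau}\|\mB\|_F^2$ and $\frac{d}{d\tau}\|\mC\|_F^2$ separately under the gradient flow dynamics and show that both quantities equal the same expression, so their difference has zero time derivative. Note that symmetry of $\mA$ is not needed for this conservation law, and in fact the argument does not even require balancedness of the initialization.

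First I would invoke Lemma~\ref{lemma:gradients} to write the gradient flow equations explicitly:
\begin{equation*}
\dot{\mB}(\tau) = -\sum_{i=0}^{k-1} \nabla\ell_i\,(\mA^i)^\top \mC^\top,
\qquad
\dot{\mC}(\tau) = -\sum_{i=0}^{k-1} \nabla\ell_i\,\mB^\top (\mA^i)^\top,
\end{equation*}
where $\nabla\ell_i = \mC\mA^i\mB - w_i$ is a scalar. By the chain rule,
\begin{equation*}
\tfrac{d}{d\tau}\|\mB\|_F^2 = 2\,\Tr\!\big(\mB^\top \dot{\mB}\big) = -2\sum_{i=0}^{k-1} \nabla\ell_i\,\Tr\!\big(\mB^\top (\mA^i)^\top \mC^\top\big),
\end{equation*}
and likewise
\begin{equation*}
\tfrac{d}{d\tau}\|\mC\|_F^2 = 2\,\Tr\!\big(\dot{\mC}\,\mC^\top\big) = -2\sum_{i=0}^{k-1} \nabla\ell_i\,\Tr\!\big(\mB^\top (\mA^i)^\top \mC^\top\big).
\end{equation*}

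The key observation, which removes any apparent asymmetry between the two expressions, is that in the SISO setup $\mC\mA^i\mB$ is a scalar, so $\Tr(\mB^\top(\mA^i)^\top\mC^\top) = (\mC\mA^i\mB)^\top = \mC\mA^i\mB$; applying this identity (or equivalently, using the cyclic property of the trace) to both derivatives shows they are identical. Subtracting yields $\tfrac{d}{d\tau}(\|\mB\|_F^2 - \|\mC\|_F^2) = 0$, which is the claim. There is no real obstacle here; the only subtlety is verifying that the two gradient expressions from Lemma~\ref{lemma:gradients}, which appear ``transposed'' with respect to each other, contract against $\mB$ and $\mC$ to produce exactly the same scalar. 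For the MIMO generalization one would need to redo the trace manipulation with $\mC\mA^i\mB$ a matrix rather than a scalar, but this lies outside the lemma's scope.
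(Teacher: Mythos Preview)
Your proposal is correct and follows essentially the same approach as the paper: compute the time derivatives of $\|\mB\|_F^2$ and $\|\mC\|_F^2$ via the gradient formulas of Lemma~\ref{lemma:gradients}, and observe that both reduce to the same scalar expression $-2\sum_i \nabla\ell_i\,\mB^\top(\mA^i)^\top\mC^\top$. The paper dispenses with trace notation by noting up front that $\mB^\top\mB$ and $\mC\mC^\top$ are scalars in the SISO case, but the substance of the argument is identical, and your remark that neither symmetry of $\mA$ nor balancedness is required matches the paper's proof as well.
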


\begin{proof}[\textbf{Proof of Lemma~\ref{lemma:preserve_norm_diff}}]
We wish to prove that the difference between the norms is conserved over time.
Consider the following expression:\footnote{The last equality follows since in the SISO setup, $\mB^\top\mB$ and $\mC\mC^\top$ are scalars and therefore the trace operator can be omitted.}
\begin{equation}
    \alpha \equiv \|\mB\|_F^2-\|\mC\|_F^2 = Tr(\mB^\top \mB) - Tr(\mC\mC^\top)=\mB^\top \mB - \mC\mC^\top.
\end{equation}
With this notation, we just need to prove that $\dot{\alpha} = 0$.
The derivative of $\mB$, $\mC$ with respect to time is given by,
\begin{equation}
\label{eq:b_dot_def}
    \dot{\mB}=-\sum_{i=0}^{k-1}\nabla \ell_i (\mA^\top)^i \mC^\top,
\end{equation}
\begin{equation}
\label{eq:c_dot_def}
\dot{\mC} = -\sum_{i=0}^{k-1}\nabla \ell_i \mB^\top (\mA^\top)^i.
\end{equation}
Using the interchangeability of derivative and transpose, we have:
\begin{equation}
    \dot{\alpha} = \dot{\mB^\top}\mB + \mB^\top \dot{\mB} - \dot{\mC}\mC^\top - \mC \dot{\mC^\top} = 2\mB^\top \dot{\mB} -2\dot{\mC} \mC^\top.
\end{equation}
Plugging \eqref{eq:b_dot_def} and \eqref{eq:c_dot_def}, we get
\begin{align}
\dot{\alpha} = 2 \mB^\top \left(- \sum_{i=0}^{k-1} \nabla \ell_i (\mA^\top)^i \mC^\top \right) -2\left(-\sum_{i=0}^{k-1}\nabla \ell_i \mB^\top (\mA^\top)^i\right) \mC^\top \\
    = -2\left[ \mB^\top \left( \sum_{i=0}^{k-1} \nabla \ell_i (\mA^\top)^i \right)\mC^\top -\mB^\top\left(\sum_{i=0}^{k-1}\nabla \ell_i  (\mA^\top)^i\right) \mC^\top \right] = 0.
\end{align}
establishing that $\frac{d}{dt}\left(\|\mB\|_F^2-\|\mC\|_F^2\right)=0$.

\end{proof}

\subsection{Lemma~\ref{lemma:exact_extrapolation} (Exact Extrapolation)}\label{sec:exact_extrapolation_proof}

\begin{lemma}\textbf{[Lemma \ref{lemma:exact_extrapolation} in main paper]} 
\label{lemma:exact_extrapolation_appendix}
Suppose that $d > k > 2\hat{d}$, the teacher is balanced, and that the student parameters $\Theta$ are balanced and satisfy $\mathcal{L} ( \Theta ) = 0$. Then $\Theta$ extrapolates.
\end{lemma}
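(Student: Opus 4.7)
My plan is to follow the roadmap laid out in the proof sketch, which reduces the statement to a moment-matching question for discrete distributions and then cites the key uniqueness result.

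First, I would perform two normalizations that preserve both balancedness and all impulse-response entries. Since $\hat{\mA}$ is symmetric, it admits an orthogonal eigendecomposition $\hat{\mA} = \hat{\mU}\hat{\mLambda}\hat{\mU}^\top$, and replacing $(\hat{\mA},\hat{\mB},\hat{\mC})$ by $(\hat{\mLambda},\hat{\mU}^\top\hat{\mB},\hat{\mC}\hat{\mU})$ yields an equivalent balanced system with diagonal transition matrix (this is exactly Lemma~\ref{lemma:equiv_diagonal_rnn}). The same reduction applies to the student. By assumption $\mathcal{L}(\Theta)=0$, so in particular $\mC\mB=\hat{\mC}\hat{\mB}$, and non-degeneracy of the construction allows me to rescale so that $\hat{\mC}\hat{\mB}=\mC\mB=1$ (if this common value is zero, a short separate argument handles the degenerate case, e.g.\ $\hat{\mC}\hat{\mB}=0$ forces $\hat{\mB}=0$ by balancedness so the teacher's impulse response vanishes identically, and the same argument with the student closes the case).

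Next, I would set up the probabilistic interpretation. Balancedness gives $\hat{C}_i = \hat{B}_i$, so $\hat{p}_i := \hat{C}_i\hat{B}_i = \hat{B}_i^2 \ge 0$, and $\sum_i \hat{p}_i = \hat{\mC}\hat{\mB} = 1$; hence $\hat{\vp}\in\mathbb{R}^{\hat d}$ is a probability vector. Let $\hat{Z}$ be the discrete random variable taking the value $\hat{A}_{i,i}$ with probability $\hat{p}_i$. The direct computation
\begin{equation}
\hat{\mC}\hat{\mA}^j\hat{\mB} \;=\; \sum_{i=1}^{\hat d}\hat{C}_i \hat{A}_{i,i}^{\,j}\hat{B}_i \;=\; \sum_{i=1}^{\hat d}\hat{p}_i \hat{A}_{i,i}^{\,j} \;=\; \mathbb{E}[\hat Z^{\,j}]
\end{equation}
identifies the $j$th entry of the teacher's impulse response with the $j$th moment of $\hat{Z}$. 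The identical construction, using the student's balancedness plus $\mC\mB=1$, produces a probability vector $\vp\in\mathbb{R}^d$ and a random variable $Z$ on $\{A_{1,1},\ldots,A_{d,d}\}$ whose moments are exactly the student's impulse-response entries.

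The hypothesis $\mathcal{L}(\Theta)=0$ now translates to the statement $\mathbb{E}[Z^{\,j}]=\mathbb{E}[\hat Z^{\,j}]$ for $j=0,1,\ldots,k-1$. Since $k-1\ge 2\hat d$, the random variables $Z$ and $\hat Z$ agree on their first $2\hat d$ moments. At this point I would invoke the moment-uniqueness theorem from \cite{cohen2011use}: any discrete distribution supported on at most $\hat d$ atoms is uniquely determined among all discrete distributions by its first $2\hat d$ moments. Since $\hat Z$ is supported on $\hat d$ atoms, this forces $Z \stackrel{d}{=}\hat Z$, hence $\mathbb{E}[Z^{\,j}]=\mathbb{E}[\hat Z^{\,j}]$ for every $j\in\mathbb{N}$, i.e.\ $\mC\mA^j\mB=\hat{\mC}\hat{\mA}^j\hat{\mB}$ for all $j$, which is extrapolation.

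The main obstacle is the moment-uniqueness step itself: the subtlety is that the student's distribution $Z$ is supported on a potentially larger set of atoms than $\hat Z$ (indeed $d$ can be much larger than $\hat d$), so classical Hamburger/Stieltjes moment-uniqueness theorems~---~which typically assume a known or common support~---~do not apply directly. This is precisely why the result of \cite{cohen2011use}, which handles arbitrary discrete competitors of a $\hat d$-atomic target, is essential and cannot be replaced by a routine argument. Everything else is bookkeeping around diagonalization, rescaling, and verifying that the constructed $\vp,\hat\vp$ are genuine probability vectors, none of which should present difficulty.
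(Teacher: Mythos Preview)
Your proposal is correct and follows essentially the same approach as the paper's proof: diagonalize via Lemma~\ref{lemma:equiv_diagonal_rnn}, normalize to $\hat{\mC}\hat{\mB}=1$ (with the degenerate zero case handled separately), reinterpret impulse-response entries as moments of discrete distributions built from the balancedness condition, and then invoke the moment-uniqueness result of \cite{cohen2011use} (the paper also cites \cite[Lemma~4]{wu2020optimal}) to conclude. Your explicit remark that the student distribution may have many more atoms than the teacher, and that this is precisely why the \cite{cohen2011use} result is needed, matches the paper's emphasis.
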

\begin{proof}[\textbf{Proof of Lemma~\ref{lemma:exact_extrapolation_appendix}}]
By Lemma~\ref{lemma:equiv_diagonal_rnn}, a balanced RNN with symmetric transition matrix has an equivalent (generating the same impulse response) balanced RNN with a diagonal transition matrix. We will continue under the assumption of diagonal transition matrices. 

Without loss of generality we assume $\hat{\mC}\hat{\mB}=1$. Otherwise, the problem can be rescaled by $\hat{\mC}\hat{\mB}$, which is equivalent to rescaling the initial conditions, and providing no additional information.\footnote{The case for which $\hat{\mC}\hat{\mB}=0$ is handled separately.}

From the balanced assumption, we have $\hat{\mC}^\top=\hat{\mB}$. Denote $\hat{\vp} = \hat{\mC}^\top \odot \hat{\mB}=\hat{\mB}\odot \hat{\mB}$, and we get $\hat{\evp}_i\geq 0$ and $\sum_i \hat{\evp}_i=1$,  and therefore $\hat{\vp}$ may be interpreted as a distribution over a random variable with $\hat{d}$ possible values. We shall assume that these values are $\hat{\emA}_{1,1},\ldots,\hat{\emA}_{\hat{d},\hat{d}}$, and denote the corresponding random variable by $\hat{Z}$.

Furthermore, we can also interpret elements of the impulse response of $\hat{\Theta}$ as moments of this distribution.
Let us write the $n^{th}$ element of the impulse response as:
\begin{equation}
\hat{\mC} \hat{\mA}^n \hat{\mB} = \sum_i \hat{\evp}_i \hat{a}_i ^n = \E_{\hat{\vp}}[\hat{Z}^n],
\end{equation}
where $\E_{\vp}[Z]$ is the expected value of a random variable $Z$ under the distribution $\vp$.
In the same way, we can define for the learned model $\Theta$, a distribution $p_i = {C}_i {B}_i$, and write the learned impulse response as:
\begin{equation}
\mC \mA^n \mB = \sum_i {p}_i a_i ^n = \mathbb{E}_{\vp}[{Z}^n].
\end{equation}
This view provides us with a moment matching interpretation of the learning problem. Namely, the fact that $\Theta$ matches the first $k$ elements of the teacher impulse response, is the same as saying they agree on the first $k-1$ moments $\mathbb{E}_{\vp}[{Z}^j]$ for $j\in\{1,\ldots, k-1\}$.\footnote{Equality of the $0^{th}$ moment ensures the student induces a valid probability, i.e. $\sum_{i}\emC_i\emB_i$=$\sum_{i}\hat{\emC}_i\hat{\emB}_i$=1.} The question of extrapolation is whether equality in the first $k-1$ moments implies an equality in all other moments.

 In \cite[Theorem 1]{cohen2011use}  and in \cite[Lemma 4]{wu2020optimal} it is shown that the first $2\hat{d}$ moments of a discrete random variable taking at most $\hat{d}$ different values uniquely define this random variable. Therefore, any other discrete random variable identifying with the teacher on $2\hat{d}$ moments must be the same random variable and therefore identifies on higher moments as well. Since we assumed $k > 2\hat{d}$, this result immediately implies that equality in the first $k-1$ moments implies equality in all other moments.

 For the case $\hat{\mC}\hat{\mB}=0$, from our assumption that the teacher is balanced, we have that the condition is met only if $\hat{\emC}_i=\hat{\emB}_i=0$ for $i=1,\dots,\hat{d}$. Such a teacher has an impulse response of zeros, for $k\ge 1$, a student minimizing the loss must also satisfy $\mC\mB=0$ and therefore has the zeros as its impulse response (recall the student is balanced) thus extrapolating with respect to the said teacher.

\end{proof}

\subsection{Theorem~\ref{thm:main_result} (Approximate Extrapolation)}\label{sec:apdx:approx_extrapolation}
This section is devoted to the proof of Theorem~\ref{thm:main_result} which ties the approximation error of optimization to that of extrapolation. 

\begin{theorem}\textbf{[Theorem~\ref{thm:main_result} in main paper]}
\label{thm:approx_extrapolation}
Consider the minimization of \Eqref{eq:population_loss} and assume: (i) $d>k>2\hat{d}$; (ii) the teacher is balanced and stable (i.e. the eigenvalues of $\hat{\mA}$ are in $[-1,1]$); (iii) the teacher is non-degenerate, i.e. the input output mapping they realize is not identically zero; (iv) the student parameters are learned by applying GF to the loss $\mathcal{L}(\cdot)$, starting from a balanced initialization; (v) the student parameters $\Theta$ are bounded.

Then, for any $\epsilon>0$ and $q\in\mathbb{N}$, there exists $\delta(\epsilon,q)>0$ such that whenever $\mathcal{L}(\Theta)\le \delta(\epsilon,q)$, the student $\epsilon$-extrapolates with horizon~$q$.

\end{theorem}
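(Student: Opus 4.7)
The plan is to follow the roadmap outlined in the proof sketch: promote the exact moment-matching argument of Lemma~\ref{lemma:exact_extrapolation} into a quantitative one via Wasserstein-distance stability for the truncated moment problem. Lemma~\ref{lemma:balanced} first reduces everything to the balanced case along the trajectory, so the point $\Theta$ at which the loss is small may be assumed balanced. Following the proof of Lemma~\ref{lemma:exact_extrapolation}, I would associate to $\hat{\Theta}$ an atomic probability distribution $\hat{\mu}$ supported on the (at most $\hat{d}$) eigenvalues of $\hat{\mA}$, with raw moments $\hat{m}_j = (\hat{\mC}\hat{\mB})^{-1}\hat{\mC}\hat{\mA}^j\hat{\mB}$, and to $\Theta$ a $d$-atomic distribution $\mu$ with $m_j = (\mC\mB)^{-1}\mC\mA^j\mB$. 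Teacher balancedness plus non-degeneracy ensures $\hat{\mC}\hat{\mB} = \|\hat{\mB}\|^2 > 0$, so these normalizations are meaningful.

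The second step is to translate the loss bound into approximate moment equality. From $\mathcal{L}(\Theta) \leq \delta$ and \eqref{eq:population_loss}, every squared summand is at most $\delta$, so
\begin{equation}
|\mC\mA^j\mB - \hat{\mC}\hat{\mA}^j\hat{\mB}| \leq \sqrt{\delta}
\quad \forall j \in \{0,\ldots,k-1\}.
\end{equation}
Taking $j = 0$ and restricting $\delta$ to be small compared to $(\hat{\mC}\hat{\mB})^2$ keeps $\mC\mB \geq \hat{\mC}\hat{\mB}/2 > 0$, so division by $\mC\mB$ is stable and $|m_j - \hat{m}_j| \leq C_1\sqrt{\delta}$ for $j = 0,\ldots,k-1$ with $C_1$ depending only on $\hat{\mC}\hat{\mB}$. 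Since $k - 1 \geq 2\hat{d}$, this in particular controls the first $2\hat{d}$ moments of $\hat{\mu}$.

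The core of the argument is then the Wasserstein chain. Because $\hat{\mu}$ has at most $\hat{d}$ atoms, teacher stability confines the support of $\hat{\mu}$ to $[-1,1]$, and the student-boundedness hypothesis supplies a fixed interval $I \supset [-1,1]$ containing the eigenvalues of $\mA$, Proposition~2 of \cite{wu2020optimal} yields
\begin{equation}
\mathcal{W}_1(\mu,\hat{\mu}) \leq C_2\,\delta^{1/(4\hat{d})}.
\end{equation}
On the bounded set $I$ one has the interpolation inequality $\mathcal{W}_p \leq C_3\,\mathcal{W}_1^{1/p}$ (Section~2.3 of \cite{panaretos2019statistical}) and the moment-Wasserstein bound $|m_p - \hat{m}_p| \leq C_4\,\mathcal{W}_p$ (Section~1.2 of \cite{biswas2021bounding}), giving
\begin{equation}
|m_p - \hat{m}_p| \leq C_5\,\delta^{1/(4\hat{d}p)}
\quad \forall p \in \mathbb{N}.
\end{equation}
Multiplying by the bounded scalar $\mC\mB$ converts this into a bound on $|\mC\mA^p\mB - \hat{\mC}\hat{\mA}^p\hat{\mB}|$, and setting $\delta(\epsilon,q)$ small enough that $C_6\,\delta^{1/(4\hat{d}(q-1))} \leq \epsilon$ establishes $\epsilon$-extrapolation with horizon $q$.

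The main technical subtlety, which the boundedness assumption on the student is designed to handle, is controlling the supports of $\mu$ and $\hat{\mu}$: the inequalities $\mathcal{W}_p \lesssim \mathcal{W}_1^{1/p}$ and $|m_p - \hat{m}_p| \lesssim \mathcal{W}_p$ degrade as the support widens, so a fixed ambient interval for the eigenvalues of $\mA$ is essential for $C_2, C_3, C_4$ to be independent of $\Theta$. The only other delicate point is ruling out $\mC\mB \to 0$, which would make the probabilistic identification $p_i = C_i B_i$ meaningless; this is taken care of by the preliminary shrinking of $\delta$ using non-degeneracy of the teacher. All remaining steps are quantitative versions of manipulations already used in Lemma~\ref{lemma:exact_extrapolation}, and the bookkeeping of constants $C_1,\ldots,C_6$ is routine.
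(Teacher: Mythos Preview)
Your proposal is correct and follows essentially the same route as the paper: reduce to balanced $\Theta$ via Lemma~\ref{lemma:balanced}, identify teacher and student with atomic distributions as in Lemma~\ref{lemma:exact_extrapolation}, use the loss bound to control the first $2\hat d$ moment gaps, then chain Proposition~2 of \cite{wu2020optimal} with the $\mathcal{W}_p\lesssim\mathcal{W}_1^{1/p}$ bound from \cite{panaretos2019statistical} and the moment--Wasserstein inequality from \cite{biswas2021bounding}. Your handling of the two delicate points (bounded support via assumption~(v), and $\mC\mB$ bounded away from zero via non-degeneracy) matches the paper's, and you are in fact slightly more explicit than the paper about the final conversion from $|m_p-\hat m_p|$ back to $|\mC\mA^p\mB-\hat\mC\hat\mA^p\hat\mB|$.
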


\begin{proof}[\textbf{Proof of Theorem~\ref{thm:approx_extrapolation}}]

Let $\delta > 0$ be a constant whose value will be chosen later, and suppose GF reached a point~$\Theta$ satisfying $\mathcal{L} ( \Theta ) \leq \delta$.
Following the proof of Lemma~\ref{lemma:exact_extrapolation}, \smash{$\hat{\Theta}$}~is identified with a distribution supported on the eigenvalues of~\smash{$\hat{\mA}$}, whose $j$'th moment is \smash{$\hat{m}_j := \hat{\mC}\hat{\mA}^j\hat{\mB} ( \hat{\mC} \hat{\mB} )^{-1}$} for every $j \in \mathbb{N}$.
Similarly, $\Theta$~is identified with a distribution supported on the eigenvalues of~$\mA$, whose $j$'th moment is $m_j := \mC \mA^j \mB ( \mC \mB )^{-1}$ for every $j \in \mathbb{N}$. 
From our assumption that $\mathcal{L}(\Theta)\leq \delta$,
\begin{equation}\label{eq:loss_smaller_than_delta}
    \mathcal{L} (\Theta)=\sum_{j=0}^{k-1} \left( \mC\mA^j\mB -\hat{\mC}\hat{\mA}^j\hat{\mB}\right)^2\leq \delta.
\end{equation}
and specifically, each term satisfies $(\mC\mA^j\mB-\hat{\mC}\hat{\mA}^j\hat{\mC})^2\le \delta$ for $j=0,\dots,k-1$. In particular, $(\mC\mB-\hat{\mC}\hat{\mB})^2\le \delta$. Denote $\beta=\hat{\mC}\hat{\mB}-\mC\mB$, then $\beta\in[-\sqrt{\delta},\sqrt{\delta}]$. 
Note that $\hat{\mC}\hat{\mB}$ is a (positive) constant, multiplying the loss by $(\hat{\mC}\hat{\mB})^{-2}$ we have that each term $\leq\delta(\hat{\mC}\hat{\mB})^{-2}$. We can write for each $j=0,\dots,k-1$,

\begin{align}
    \left( \frac{\mC\mA^j\mB}{\hat{\mC}\hat{\mB}} -\frac{\hat{\mC}\hat{\mA}^j\hat{\mB}}{\hat{\mC}\hat{\mB}}\right)^2 & = \left( \frac{\mC\mA^j\mB}{\hat{\mC}\hat{\mB}} - \frac{\mC\mA^j\mB}{\mC\mB} + \frac{\mC\mA^j\mB}{\mC\mB} -\frac{\hat{\mC}\hat{\mA}^j\hat{\mB}}{\hat{\mC}\hat{\mB}}\right)^2\\
    & = \left( \mC\mA^j\mB\left(\frac{1}{\hat{\mC}\hat{\mB}} - \frac{1}{\mC\mB}\right) + \underbrace{\frac{\mC\mA^j\mB}{\mC\mB} -\frac{\hat{\mC}\hat{\mA}^j\hat{\mB}}{\hat{\mC}\hat{\mB}}}_{m_j-\hat{m}_j}\right)^2 
\end{align}
We can further expand the term on the left,
\begin{equation}
    \frac{1}{\hat{\mC}\hat{\mB}} - \frac{1}{\mC\mB}=\frac{1}{\hat{\mC}\hat{\mB}}-\frac{1}{\hat{\mC}\hat{\mB}-\beta}=\frac{\hat{\mC}\hat{\mB}-\beta - \hat{\mC}\hat{\mB}}{\hat{\mC}\hat{\mB}(\hat{\mC}\hat{\mB}-\beta)}=\frac{\beta}{\hat{\mC}\hat{\mB}(\beta-\hat{\mC}\hat{\mB})}
\end{equation}
Plugging back to the above, we have
\begin{align}
    \delta(\hat{\mC}\hat{\mB})^{-2}\geq\left( \frac{\mC\mA^j\mB}{\hat{\mC}\hat{\mB}} -\frac{\hat{\mC}\hat{\mA}^j\hat{\mB}}{\hat{\mC}\hat{\mB}}\right)^2 & =\left(\frac{\beta\mC\mA^j\mB}{\hat{\mC}\hat{\mB}(\beta-\hat{\mC}\hat{\mB})} + (m_j-\hat{m}_j) \right)^2\\
    & = \beta^2\kappa^2+2\beta\kappa(m_j-\hat{m}_j) + (m_j-\hat{m}_j)^2\\
    & \geq 2\beta\kappa(m_j-\hat{m}_j) + (m_j-\hat{m}_j)^2\\
    & \geq -2|\delta\kappa(m_j-\hat{m}_j)| + (m_j-\hat{m}_j)^2
\end{align}
where $\kappa\equiv \frac{\mC\mA^j\mB}{\hat{\mC}\hat{\mB}(\beta-\hat{\mC}\hat{\mB})}$. From assumption (ii), the teacher is stable and therefore $\hat{m}_j\le \hat{\mC}\hat{\mB}$ for all $j=0,\dots,k-1$. Similarly, from assumption (v) the student parameters are bounded and therefore $\mC\mA^j\mB$ is bounded by $\tau^{j+2}$ (where $\tau\equiv \max\lbrace 1, \eta \rbrace$ and $\eta$ is a bound on the Frobenous norm of $\mA,\mB,\mC$). $m_j$ is bounded in a similar fashion by $\tau^j$.

Combining the above, for $\delta<\hat{\mC}\hat{\mB}$ we have,
\begin{equation}
    \frac{\delta}{(\hat{\mC}\hat{\mB})^2}\geq \frac{-2\delta \tau^{j+2}(\tau^j+1)}{2(\hat{\mC}\hat{\mB})^2}+(m_j-\hat{m}_j)^2
\end{equation}
Setting $\delta'<\frac{\delta(\hat{\mC}\hat{\mB})^2}{1+\tau^{k+1}(\tau^{k-1}+1)}$, if $\mathcal{L}(\Theta)\leq\delta'$ then $|m_j-\hat{m}_j|\le \sqrt{\delta}$ for $j=1,\dots,k-1$. Proposition~2 in \cite{wu2020optimal} then implies $\Wc_1(\Theta,\hat{\Theta}) \leq \Oc ( \delta^{1 / 4 \hat{d}} )$.\footnote{Here we overload notations and denote the distributions of the teacher and student by $\Theta$ and $\hat{\Theta}$ respectively}

Denote $\Omega\equiv \left(\bigcup_{i=1}^{d} A_{ii}\right) \bigcup \left(\bigcup_{j=1}^{\hat{d}}\hat{A}_{jj}\right)$ (the union of the supports of $\Theta$ and $\hat{\Theta}$), from Section~2.3 in~\cite{panaretos2019statistical}, for $q>p$ the $q^{th}$ and $p^{th}$ Wasserstein distances satisfy $\Wc_q^q(\Theta,\hat{\Theta})\le \Wc_p^p(\Theta,\hat{\Theta}) \gamma^{q-p}$ where $\gamma=\max_{x,y\in\Omega} |x-y|$. In particular, for $p=1$, $\Wc_q(\Theta,\hat{\Theta})\le \left(\Wc_1(\Theta,\hat{\Theta}) \gamma^{q-1}\right)^{1/q}$.
Note that $\gamma$ can is bounded by $\gamma\le\tau+1\le 2\tau$ (recall the student is bounded and teacher is stable).

Finally, $|m_q-\hat{m}_q|\le  \Wc_q(\Theta,\hat{\Theta})$  (see Section~1.2 in \cite{biswas2021bounding}).
Combining the steps above, for all $q\in\mathbb{N}$,

\begin{equation}
    |m_q-\hat{m}_q|\le \Wc_q(\Theta,\hat{\Theta})\le \left(\Wc_1(\Theta,\hat{\Theta}) (2\tau)^{q-1}\right)^{1/q}\le \left(\rho \delta^{1/4\hat{d}}(2\tau)^{q-1}\right)^{1/q}
\end{equation}
where $\rho$ is a constant satisfying $\Wc_1(\Theta,\hat{\Theta})\le \rho\delta^{1/4\hat{d}}$. To achieve $|m_j-\hat{m}_j|<\epsilon$ for any $\epsilon>0$, we can set $\delta(\epsilon,q)<\left(\frac{\epsilon^q}{\rho\gamma^{q-1}}\right)^{4\hat{d}}$ concluding the proof.

\end{proof}

\subsection{Proposition~\ref{prop:implicit_bias_for_balanced_init} (Implicit Bias for Balancedness)}
\label{sec:apdx:bias_to_symmetry} 

The proof of Proposition~\ref{prop:implicit_bias_for_balanced_init} consists of several steps. First, we bound with high probability the norms of $\mB$ and $\mC$ at initialization (Lemma~\ref{lemma:bound_init}). We then derive bounds on the differential equations of $\frac{d}{dt}(\mB(t)+\mC^\top(t))$ and $\frac{d}{dt}(\mB(t)-\mC^\top(t))$ (Lemma~\ref{lemma:Y_and_W_ode}). We show that when the initialization scale tends to zero, the ratio between the differential equations tends to zero. (Lemma~\ref{lemma:upper_bound_in_time}). 

Before we turn to prove Proposition~\ref{prop:implicit_bias_for_balanced_init}, we first need to bound the initial values for a vector $\vv\in\R^n$ initialized with $\mathcal{N}(0,\frac{\epsilon^2}{n})$.

\begin{lemma}\label{lemma:bound_init}
    Assume a vector $\vv \in \R^n$ with $\mathcal{N}(0,\frac{\epsilon^2}{n})$ per coordinate. Then:

\begin{equation}\label{eq:init_bound}
    Pr\left(\frac{\epsilon}{2}< \|\mathbf{v}\| < \frac{3\epsilon}{2} \right) \geq 1- 2 \exp(-9n/64 ).
\end{equation}
\end{lemma}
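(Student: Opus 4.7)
The plan is to recognize that $\|\vv\|^2$ is (a scaling of) a chi-squared random variable with $n$ degrees of freedom and then invoke standard concentration of measure. Specifically, writing $v_i = (\epsilon/\sqrt{n}) g_i$ with $g_i \sim \mathcal{N}(0,1)$ i.i.d., we have $\|\vv\|^2 = (\epsilon^2/n) Q$ where $Q := \sum_{i=1}^n g_i^2 \sim \chi^2_n$, so $\mathbb{E}[\|\vv\|^2] = \epsilon^2$ and the event $\{\epsilon/2 < \|\vv\| < 3\epsilon/2\}$ is precisely $\{n/4 < Q < 9n/4\}$. It therefore suffices to bound $\Pr(Q \leq n/4)$ and $\Pr(Q \geq 9n/4)$ by $\exp(-9n/64)$ each and apply a union bound.

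For both tails I would cite the Laurent--Massart bounds: for every $x \geq 0$,
\begin{equation*}
\Pr\bigl(Q - n \geq 2\sqrt{nx} + 2x\bigr) \leq e^{-x}, \qquad \Pr\bigl(n - Q \geq 2\sqrt{nx}\bigr) \leq e^{-x}.
\end{equation*}
Plug in $x = 9n/64$. For the lower tail this gives $\Pr(n - Q \geq 2\sqrt{n \cdot 9n/64}) = \Pr(Q \leq n/4) \leq e^{-9n/64}$ exactly. For the upper tail it gives $\Pr(Q \geq n + 33n/32) \leq e^{-9n/64}$; since $n + 33n/32 = 65n/32 \leq 72n/32 = 9n/4$, this implies $\Pr(Q \geq 9n/4) \leq e^{-9n/64}$. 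A union bound then yields the claim.

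There is no real obstacle here; the only minor bookkeeping is checking that the chosen $x = 9n/64$ actually matches the requested threshold $3\epsilon/2$ on the upper side (it does, with slack), so the exponent $9n/64$ in the statement is the one extracted from matching the lower tail $1/4$ exactly. If the paper prefers a self-contained derivation instead of citing Laurent--Massart, I would replace the citation with a three-line Chernoff argument: use the moment generating function $\mathbb{E}[e^{\lambda g_i^2}] = (1-2\lambda)^{-1/2}$ for $\lambda < 1/2$, optimize $\lambda$ in $\Pr(Q \geq 9n/4) \leq e^{-9n\lambda/4}(1-2\lambda)^{-n/2}$ and symmetrically for the lower tail; both optima land at exponents at least $9n/64$, giving the stated bound.
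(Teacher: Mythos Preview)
Your proposal is correct and is essentially identical to the paper's own proof: both rescale to standard Gaussians so that $\|\vv\|^2 = (\epsilon^2/n)Q$ with $Q\sim\chi^2_n$, apply the Laurent--Massart bounds with $x=9n/64$ to get $\Pr(Q\le n/4)\le e^{-9n/64}$ and $\Pr(Q\ge 65n/32)\le e^{-9n/64}$ (the latter implying $\Pr(Q\ge 9n/4)\le e^{-9n/64}$), and combine via a union bound. Your presentation is if anything slightly cleaner in spelling out the arithmetic on the upper-tail slack.
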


\begin{proof}[\textbf{Proof of Lemma~\ref{lemma:bound_init}}]
The proof of \ref{lemma:bound_init} uses known results on the Chi-square distribution \cite{laurent2000adaptive}, applied to our specific setting to achieve the desired bounds.
We will begin by changing variables, $\tilde{\evv}_i = \evv_i\cdot \frac{\sqrt{n}}{\epsilon}$. The entries $\tilde{\evv}_i$, are standard Gaussian variables. The squared norm of $\tilde{\vv}$ distributes according to the $\chi$-squared distribution.

By \cite[Lemma 1]{laurent2000adaptive}, in our case (assigning $x= 9n/64$) the following inequalities hold:

\begin{align}
    &Pr\left(\|\tilde{\vv}\|^2 \geq (1.75 +9/32) n\right) \leq \exp(-9n/64),\\
    &Pr\left(\|\tilde{\vv}\|^2 \leq n/4\right) \leq \exp(-9n/64).
\end{align}

In particular,
\begin{equation}
    Pr\left(\|\tilde{\vv}\|^2 \geq 2.25 n\right) \leq \exp(-9n/64) \Longrightarrow Pr\left(\|\tilde{\vv}\| \geq 1.5 \sqrt{n}\right) \leq \exp(-9n/64).
\end{equation}

Changing variables back to $\vv$,
\begin{equation}
    Pr\left(\|\vv\| \geq 1.5 \epsilon\right) \leq \exp(-9n/64).
\end{equation}

Similarly, for the second bound:
\begin{align}
    Pr\left(\|\tilde{\vv}\| \leq \sqrt{n}/2\right) \leq \exp(-9n/64)
    \Longrightarrow Pr\left(\|\vv\| \leq \epsilon/2\right) \leq \exp(-9n/64).
\end{align}

Taking the complementary probability, we have the desired result of
\begin{equation}
Pr\left(\frac{\epsilon}{2}< \|\vv\| < \frac{3\epsilon}{2}\right) \geq 1- 2 \exp(-9n/64 ).
\end{equation}
\end{proof}

Note that for a matrix $\mX\in\R^{m\times p}$, Lemma~\ref{lemma:bound_init} bounds its Frobenius norm, $Pr\left(\frac{\epsilon}{2}< \|\mX\|_F<\frac{3\epsilon}{2} \right)\ge 1-2exp\left(-9mp/64\right)$. The result is straight forward by applying the lemma to $\mX$'s vectorized form.


\begin{proposition}\textbf{[Proposition~\ref{prop:implicit_bias_for_balanced_init} in main paper]} 
Suppose that:
\emph{(i)}~$d > 4$;
\emph{(ii)}~the teacher parameters~$\hat{\Theta}$ are balanced and are non-degenerate, in the sense that the input-output mapping they realize is not identically zero;
and 
\emph{(iii)}~the student parameters are learned by applying GF to the loss~$\mathcal{L} ( \cdot )$.
Let $\tilde{\Theta}$ be a random point in parameter space, with entries drawn independently from the standard normal distribution.
For $\epsilon > 0$, consider the case where GF emanates from the initialization~$\epsilon \tilde{\Theta}$, and denote the resulting curve by $\Theta_\epsilon ( \tau ) = ( \mA_\epsilon ( \tau ) , \mB_\epsilon ( \tau ) , \mC_\epsilon ( \tau ) )$, with $\tau \geq 0$.
Then, w.p. at least~$0.75$, for every $\epsilon > 0$ there exists $\tau_\epsilon \geq 0$ such that:
\begin{equation}
    \lim_{\epsilon\rightarrow 0^+}\frac{||\mB_{\epsilon}(\tau_\epsilon) - \mC_{\epsilon}^\top(\tau_\epsilon) ||_F}{||\mB_{\epsilon}(\tau_\epsilon) + \mC_{\epsilon}^\top(\tau_\epsilon) ||_F}=0
    \text{\,.}
\end{equation}

\end{proposition}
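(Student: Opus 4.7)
The plan is to exploit two structural features. First, the exact Gradient Flow dynamics of the antisymmetric and symmetric combinations $\mY_\epsilon := \mB_\epsilon - \mC_\epsilon^\top$ and $\mW_\epsilon := \mB_\epsilon + \mC_\epsilon^\top$ decouple, in the sense that each satisfies a linear ODE in itself with a time-varying matrix coefficient. Second, near the origin the coefficient is close to $\pm \hat c\,\mI$, where $\hat c := \hat{\mC}\hat{\mB} > 0$, so $\mY_\epsilon$ contracts while $\mW_\epsilon$ expands at essentially opposite exponential rates. Since both start at size proportional to $\epsilon$, their ratio becomes polynomially small in $\epsilon$ after a window of length on the order of $\log(1/\epsilon)$.

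I would begin with the initialization step. Applying Lemma~\ref{lemma:bound_init} to the vectorisations of $\mA_\epsilon(0)$, $\mB_\epsilon(0)$, $\mC_\epsilon(0)$, and (separately) to $\mB_\epsilon(0) + \mC_\epsilon^\top(0)$, together with a union bound, I would establish that with probability at least $0.75$ each of these Frobenius norms lies between positive and finite constant multiples of $\epsilon$. The dimensional hypothesis on $d$ enters here in order to make the Chi-square tails from Lemma~\ref{lemma:bound_init} small enough that the union bound still meets the $0.75$ threshold. The balancedness and non-degeneracy of the teacher give $\hat c = \|\hat{\mB}\|^2 > 0$.

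The central algebraic step is the derivation of the key identities. From Lemma~\ref{lemma:gradients} and the symmetry $(\mA_\epsilon^i)^\top = \mA_\epsilon^i$, the cancellation $(\mA^i)^\top \mC^\top \mp \mA^i \mB = \pm \mA^i (\mC^\top \mp \mB)$ gives the \emph{exact} ODEs
\begin{align*}
\dot\mY_\epsilon(\tau) &= -\mN(\tau)\,\mY_\epsilon(\tau),\\
\dot\mW_\epsilon(\tau) &= \phantom{-}\mN(\tau)\,\mW_\epsilon(\tau),
\end{align*}
where $\mN(\tau) := -\sum_{i=0}^{k-1}\nabla\ell_i(\tau)\,\mA_\epsilon(\tau)^i$. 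Expanding summand by summand, whenever $\rho(\tau) := \max(\|\mA_\epsilon(\tau)\|_F, \|\mB_\epsilon(\tau)\|_F, \|\mC_\epsilon(\tau)\|_F) \leq \eta$, one obtains $\|\mN(\tau) - \hat c\,\mI\|_F = O(\eta)$, with constants depending only on $k$ and the (finite) teacher impulse-response entries $\{\hat{\mC}\hat{\mA}^i \hat{\mB}\}_{i=0}^{k-1}$.

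The final step is a continuous-induction (bootstrap) on the window $[0,\tau_\epsilon]$ with $\tau_\epsilon := \tfrac{1}{4\hat c}\log(1/\epsilon)$. Under the inductive hypothesis $\rho(\tau) \leq \epsilon^{1/2}$, solving the two ODEs gives $\|\mW_\epsilon(\tau_\epsilon)\|_F$ of order $\epsilon^{3/4}$ (from initial size of order $\epsilon$ and expansion factor $e^{\hat c \tau_\epsilon} = \epsilon^{-1/4}$) and $\|\mY_\epsilon(\tau_\epsilon)\|_F$ of order at most $\epsilon^{5/4}$; an analogous gradient computation for $\mA$ yields $\|\dot\mA_\epsilon\|_F = O(\rho^2)$, hence $\|\mA_\epsilon(\tau)\|_F = O(\epsilon\log(1/\epsilon))$ throughout. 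For $\epsilon$ small enough these are all bounded by $\epsilon^{1/2}$, closing the bootstrap, and the ratio $\|\mY_\epsilon(\tau_\epsilon)\|_F / \|\mW_\epsilon(\tau_\epsilon)\|_F$ is then of order $\epsilon^{1/2} \to 0$. The principal obstacle is the bootstrap itself: since $\mW_\epsilon$ is \emph{actively expanding}, the window length $\tau_\epsilon$ must be large enough for $\mY_\epsilon$ to contract meaningfully, yet small enough that the parameters remain in the small-$\rho$ regime where $\mN(\tau) \approx \hat c\,\mI$; this balance is what pins down $\tau_\epsilon \asymp \log(1/\epsilon)$. A secondary technical point is the need to control $\mA_\epsilon$ separately (its GF evolution is not directly part of the symmetry argument) to validate the $O(\rho)$ estimate on $\mN - \hat c\,\mI$, and the probability bookkeeping needed to achieve the stated $0.75$ threshold.
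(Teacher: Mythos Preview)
Your proposal is correct and follows essentially the same strategy as the paper: bound the Gaussian initialization via Lemma~\ref{lemma:bound_init}, introduce the combinations $\mY=\mB-\mC^\top$ and $\mW=\mB+\mC^\top$, show they contract/expand at rates $\mp\hat c$ to leading order, maintain norm bounds on a window of length $\asymp\log(1/\epsilon)$ via a bootstrap, control $\mA$ separately, and evaluate the ratio at the end of the window.

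One difference is worth flagging. You exploit the symmetry $\mA_\epsilon^\top=\mA_\epsilon$ (which is part of the paper's standing setup, since $\mA$ is parameterized by its upper triangle) to obtain \emph{exactly} decoupled linear ODEs $\dot\mY=-\mN\mY$, $\dot\mW=\mN\mW$. The paper instead proves a strictly stronger lemma for \emph{general} (not necessarily symmetric) $\mA$: it decomposes each $\mA^i$ into symmetric and antisymmetric parts, obtaining $\dot\mY=\nabla\ell_0\,\mY+\sum_{i\geq 1}\nabla\ell_i\big((\mA^i)_S\mY-(\mA^i)_{\bar S}\mW\big)$, so a cross term in $\mW$ appears and must be bounded as an $O(\epsilon^{2.5})$ error. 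Your route is cleaner and entirely sufficient for the proposition as stated; the paper's route buys the extra generality it advertises (and is what underlies the unbalanced/non-symmetric experiments). Your clean bootstrap formulation with the explicit $\tau_\epsilon=\tfrac{1}{4\hat c}\log(1/\epsilon)$ is also somewhat tidier than the paper's chain of norm lemmas, but the content is the same.
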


The consequence of Proposition \ref{prop:implicit_bias_for_balanced_init} is that as $\epsilon$ converges to zero, $\mB$ and $\mC$ converge towards each other. 

For convenience, we refer to the mentioned initialization scheme (where every coordinate in a vector is initialized as $\mathcal{N}\left(0,\frac{\epsilon^2}{d}\right)$) as \textit{\textbf{$\epsilon$-normal initialization}}.
In order to prove the proposition we define a few relevant terms, 
\begin{equation}
    \mY = \mB - \mC^\top,\qquad \mW = \mB + \mC^\top,\qquad w_0=\hat{\mC}\hat{\mB}.
\end{equation}
 We will in fact prove the stronger, following lemma, for any matrix $\mA$, not necessarily symmetric.

\begin{lemma}\label{thm:convergence_to_symmetric_configuration}
Assume $w_0>0$, and $\mA,\mB,\mC$ are $\epsilon$-normally initialized. Then $\exists t$ such that 
\begin{equation}\label{eq:balanced_prop_objective}
 \lim_{\epsilon\rightarrow 0}\frac{\|\mY(t)\|^2}{\|\mW(t)\|^2}=0   ,\hspace{1cm} \lim_{\epsilon\rightarrow 0}\frac{\|\mA(t)\|_F^2}{\|\mW(t)\|^2}=0.
\end{equation}
\end{lemma}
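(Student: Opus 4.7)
The plan is to make precise the two-term expansion sketched for Proposition~\ref{prop:implicit_bias_for_balanced_init}. I aim to exhibit, with high probability over the Gaussian draw $\tilde\Theta$, an explicit time $t_\epsilon$ that is logarithmic in $1/\epsilon$ at which both ratios in \eqref{eq:balanced_prop_objective} are $o(1)$ as $\epsilon\to 0^+$.

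First I would use Lemma~\ref{lemma:bound_init} (applied to $\mA(0)$, $\mB(0)$, $\mC(0)$ and to $\mW(0) = \mB(0)+\mC^\top(0)$, which is itself Gaussian with doubled variance). A union bound then gives that, with probability at least $0.75$, there exist constants $0<c_0<C_0$ independent of $\epsilon$ such that $c_0\epsilon \le \|\mW(0)\|_F,\|\mY(0)\|_F \le C_0\epsilon$ and $\|\mA(0)\|_F\le C_0\epsilon$. Using the gradient formulas from Lemma~\ref{lemma:gradients} and the fact that differentiation commutes with transposition, I combine the equations for $\dot\mB$ and $\dot\mC^\top$, isolating the $i=0$ contribution, which is the unique term linear in $(\mB,\mC)$ near the origin. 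This yields
\begin{align*}
\dot\mW(t) &= -(\mC(t)\mB(t) - w_0)\,\mW(t) + R_W(t), \\
\dot\mY(t) &= \phantom{-}(\mC(t)\mB(t) - w_0)\,\mY(t) + R_Y(t),
\end{align*}
where the remainders $R_W,R_Y$ collect all $i\ge 1$ contributions to the gradient sum, each carrying at least one factor of a power of $\mA$. Since $w_0>0$ and $\mC\mB$ starts at size $\Oc(\epsilon^2)$, the leading behavior is $\dot\mW\approx w_0\mW$ and $\dot\mY\approx -w_0\mY$.

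The technical core is a bootstrap argument. Fix a safe region $\Omega_\epsilon := \{\Theta : \max(\|\mA\|_F,\|\mB\|_F,\|\mC\|_F) \le \epsilon^{1/4}\}$, inside which $|\mC\mB|$ is $o(w_0)$ and $\|R_W\|_F, \|R_Y\|_F$ are $o(w_0\|\mW\|_F)$, so that the two displayed ODEs collapse approximately to $\dot\mW\approx w_0\mW$ and $\dot\mY\approx -w_0\mY$. A Gr\"onwall estimate then gives, as long as the trajectory stays in $\Omega_\epsilon$,
\begin{equation*}
\|\mY(t)\|_F \le 2C_0\,\epsilon\,e^{-w_0 t}, \qquad \|\mW(t)\|_F \ge \tfrac{c_0}{2}\,\epsilon\,e^{w_0 t}.
\end{equation*}
Setting $t_\epsilon := \tfrac{1}{2w_0}\log(1/\epsilon)$ yields $\|\mY(t_\epsilon)\|_F^2/\|\mW(t_\epsilon)\|_F^2 = \Oc(\epsilon^2)\to 0$, the first claimed limit. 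For $\mA$, note that the loss depends on it only through $i\ge 1$ terms, so the dominant contribution to $\dot\mA$ near the origin comes from $i=1$ and satisfies $\|\dot\mA(t)\|_F \le C\,\|\mB(t)\|_F\|\mC(t)\|_F \le C'\,\|\mW(t)\|_F^2$. Integrating and using the bound above,
\begin{equation*}
\|\mA(t_\epsilon)\|_F \le C_0\epsilon + \int_0^{t_\epsilon} C'\epsilon^2 e^{2w_0 s}\,ds = \Oc(\epsilon),
\end{equation*}
so $\|\mA(t_\epsilon)\|_F^2/\|\mW(t_\epsilon)\|_F^2 = \Oc(\epsilon)\to 0$, the second claimed limit. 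Finally, since at $t_\epsilon$ all state norms are $\Oc(\epsilon^{1/2})$, which is $o(\epsilon^{1/4})$ for $\epsilon$ small, the trajectory lies strictly inside $\Omega_\epsilon$, closing the bootstrap.

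The main obstacle is precisely this bootstrap over the diverging window $[0,t_\epsilon]$: because $t_\epsilon\to\infty$ as $\epsilon\to 0$, the linearization must be sustained over an unbounded horizon, and the remainder $R_W$ (containing $\|\mA\|_F$ times growing factors) threatens to feed back into the leading dynamics. The saving grace is the asymmetry between the rates~---~$\mW$ grows like $\epsilon e^{w_0 t}$ while $\mA$ is forced only by $\|\mW\|_F^2$ and therefore stays at scale $\Oc(\epsilon)$ at $t_\epsilon$. A self-consistent continuation argument of the form ``if the bounds hold on $[0,s]$ then $\Theta(s)$ lies strictly inside $\Omega_\epsilon$, hence the bounds extend past $s$'' is what makes the estimates rigorous.
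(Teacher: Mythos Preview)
Your approach is essentially the same as the paper's: a continuation/bootstrap argument over a time window of order $\log(1/\epsilon)$, during which $\mW$ grows exponentially at rate $w_0$, $\mY$ contracts at the same rate, and $\|\mA\|_F$ stays at the initialization scale $\Oc(\epsilon)$. The paper splits this into two pieces---Lemma~\ref{lemma:upper_bound_in_time} (a priori bounds $\|\mB\|,\|\mC\|<M_1\epsilon^{0.75}$, $\|\mA\|_F<M_2\epsilon$ on $[0,\bar t]$) and Lemma~\ref{lemma:Y_and_W_ode} (the differential inequalities for $\|\mY\|^2,\|\mW\|^2$)---whereas you run both at once inside a single bootstrap region $\Omega_\epsilon$; the paper's window $\bar t=\tfrac{1}{2w_0}\ln(1/\epsilon^{0.5})$ is half of your $t_\epsilon$, and its threshold $\epsilon^{0.75}$ is tighter than your $\epsilon^{1/4}$, but these are cosmetic choices.

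One point to tighten: the assertion that inside $\Omega_\epsilon$ the ODE for $\mY$ ``collapses to $\dot\mY\approx -w_0\mY$'' is not justified by $\|R_Y\|=o(w_0\|\mW\|)$, since $\|\mY\|$ shrinks while $\|\mW\|$ grows and eventually $\|R_Y\|\gg w_0\|\mY\|$. The paper handles this by keeping an additive forcing term, obtaining $\|\mY(t)\|^2\le c_1\epsilon^2 e^{-2w_0 t}+c_2\epsilon^{2.5}$ rather than pure exponential decay. In your framework the fix is to use the sharp bound $\|\mA\|_F=\Oc(\epsilon)$ (which you do derive) inside an inhomogeneous Gr\"onwall for $\mY$: with $\|R_Y\|\le M\|\mA\|_F\|\mW\|=\Oc(\epsilon^2 e^{w_0 t})$ one gets $\|\mY(t)\|\le C\epsilon e^{-w_0 t}+C'\epsilon^2 e^{w_0 t}$, and at $t_\epsilon$ both terms are $\Oc(\epsilon^{3/2})$, so your claimed ratio $\Oc(\epsilon^2)$ survives. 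The self-consistent continuation you describe at the end is exactly what is needed to break the circularity between the $\|\mA\|_F=\Oc(\epsilon)$ bound and the upper bound on $\|\mW\|$.
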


The proof of Lemma \ref{thm:convergence_to_symmetric_configuration} follows three steps: (1) establish a time in the optimization for which the norms of all parameters are bounded (Lemma \ref{lemma:upper_bound_in_time}); (2) derive upper (and lower) bounds for the differential equations describing the evolvement of $\mY,\mW$ and $\mA$. Our approximations are limited to the initial phase of training. Concretely, we show that for $0\le t\le \frac{1}{2w_0}\ln{\left(\frac{1}{\epsilon^{0.5}} \right)}$, all norms are bounded. Thus, it is possible to obtain meaningful bounds on the ODEs of $\mY$ and $\mW$ while $\mA$ remains in the magnitude of initialization (Lemma~\ref{lemma:Y_and_W_ode}); (3) using the relevant bounds, we show that as the initialization scale tends to zero, so do the limits in \Eqref{eq:balanced_prop_objective}.

As it turns out, there is a critical time $\bar{t}=\Oc\left(\ln\left(\frac{1}{\epsilon^{0.5}}\right) \right)$, up until which the considered bounds are valid (see details in the proof of Lemma~\ref{lemma:upper_bound_in_time}).
\begin{lemma}\label{lemma:upper_bound_in_time}
Assume $d>20$, student parameters are $\epsilon$-normally initialized, assume also a balanced teacher. Then w.p. at least 0.75, for all $0\leq t \leq \bar{t}$, there exist $M_1, M_2$ such that:
\begin{equation}
    \|\mC(t)\|, \|\mB(t)\| < M_1\epsilon^{0.75}
\end{equation}
and
\begin{equation}
    \|\mA(t)\|_F < M_2 \epsilon
\end{equation}
\end{lemma}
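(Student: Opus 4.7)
The plan is to combine a probabilistic control of the initial condition with a deterministic bootstrap argument on the gradient flow ODEs, showing that as long as the parameters stay small, the dynamics are well approximated by a simple linear system whose growth rate is $2w_0$, and that the resulting a priori bounds strictly improve upon the trigger thresholds.

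First I would control the initialization. Each of the vectors $\mB(0)$, $\mC(0)$ has $d$ i.i.d.\ $\mathcal{N}(0,\epsilon^2/d)$ entries, and the free-parameter vector of the symmetric matrix $\mA(0)$ has $d(d+1)/2$ such entries; Lemma~\ref{lemma:bound_init} then gives $\|\mB(0)\|,\|\mC(0)\|\in(\epsilon/2, 3\epsilon/2)$ with probability $\ge 1-2e^{-9d/64}$ each, and an even larger probability for $\|\mA(0)\|_F$. For $d>20$ a union bound gives that, with probability at least $0.75$, one simultaneously has $\|\mB(0)\|, \|\mC(0)\|, \|\mA(0)\|_F \le \tfrac{3}{2}\epsilon$.

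Next I would introduce the stopping time $\tau^\ast = \inf\{t\ge 0 : \max(\|\mB(t)\|,\|\mC(t)\|) \ge M_1 \epsilon^{0.75} \text{ or } \|\mA(t)\|_F \ge M_2 \epsilon\}$ and show $\tau^\ast > \bar{t}$ for suitable constants $M_1,M_2$. On $[0,\tau^\ast]$ I would use the gradient formulas from Lemma~\ref{lemma:gradients} and split the sum into its $j=0$ piece and the rest:
\begin{equation*}
\tfrac{\partial \mathcal{L}}{\partial \mB} = 2(\mC\mB-w_0)\mC^\top + \sum_{j=1}^{k-1}2(\mC\mA^j\mB-\hat{w}_j)(\mA^j)^\top \mC^\top,
\end{equation*}
and similarly for $\partial\mathcal{L}/\partial\mC$. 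Because $|\mC\mB|=O(\epsilon^{1.5})\ll w_0$, the $j=0$ term equals $-2w_0\mC^\top(1+o(1))$, while the $j\ge 1$ terms are bounded by $O(\|\mA\|_F)\cdot\|\mC\|$ since $\|\mA^j\|_2\le (M_2\epsilon)^j$. Writing $\mW=\mB+\mC^\top$ and $\mY=\mB-\mC^\top$, the leading dynamics become $\dot{\mW}\approx 2w_0 \mW$ and $\dot{\mY}\approx -2w_0 \mY$, and a Gronwall estimate absorbing the $O(\epsilon)$ perturbation into an effective rate $2w_0(1+O(\epsilon))$ yields $\|\mW(t)\|\lesssim \epsilon\,e^{2w_0 t}$ and $\|\mY(t)\|\lesssim \epsilon\,e^{-2w_0 t}$ on $[0,\tau^\ast]$. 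Hence $\|\mB(t)\|,\|\mC(t)\|\lesssim \epsilon\,e^{2w_0 t}$, which at $t=\bar{t}$ stays strictly below $M_1\epsilon^{0.75}$ for an appropriate choice of $M_1$.

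Turning to $\mA$, the dominant gradient contribution comes from the $j=1$ summand and equals $-2\hat{w}_1\mC^\top \mB^\top$ up to $O(\|\mA\|_F)$-corrections, so $\|\dot{\mA}(t)\|_F = O(\|\mB(t)\|\|\mC(t)\|)$. Integrating against the exponential bounds from the previous step gives
\begin{equation*}
\|\mA(t)\|_F \le \|\mA(0)\|_F + \int_0^t O\!\left(\epsilon^2 e^{4w_0 s}\right)ds \le \tfrac{3}{2}\epsilon + O(\epsilon),
\end{equation*}
since $\int_0^{\bar{t}} e^{4w_0 s}ds = O(e^{4w_0\bar{t}})=O(\epsilon^{-1})$. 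Choosing $M_2$ strictly above this asymptotic constant then guarantees $\|\mA(t)\|_F < M_2\epsilon$ throughout. Combining the two displays, the bounds defining $\tau^\ast$ are strictly improved, so a standard continuity argument forces $\tau^\ast > \bar{t}$, proving the lemma.

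The main obstacle is making the bootstrap self-consistent: the perturbative estimate of $\dot{\mB},\dot{\mC}$ uses the a priori bound $\|\mA\|_F\le M_2\epsilon$, while the bound on $\|\mA\|_F$ in turn uses the exponential bounds on $\|\mB\|,\|\mC\|$, so the constants $M_1, M_2$ must be chosen compatibly and the coupled integral inequalities handled carefully. A secondary technical point is to verify that the $\epsilon$-dependent perturbation of the effective growth rate in the Gronwall step is indeed negligible over the interval $[0,\bar{t}]$ whose length is only $O(\log(1/\epsilon))$.
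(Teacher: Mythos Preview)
Your overall bootstrap/stopping-time strategy matches the paper's, but the execution differs in two notable ways.

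For $\mB$ and $\mC$, the paper does \emph{not} pass to the $\mW,\mY$ variables in this lemma (that decomposition is reserved for Lemma~\ref{lemma:Y_and_W_ode}). Instead it bounds $\|\dot{\mB}\|$ by the crude sum $\sum_i|\nabla\ell_i|\,\|\mA\|_F^i\,\|\mC\| \le Mk\,\|\mC\|$ with $M=\max_i|w_i|+O(\epsilon^{1.5})$, and then invokes the conservation law $\|\mB(t)\|^2-\|\mC(t)\|^2\equiv\text{const}$ (Lemma~\ref{lemma:preserve_norm_diff}) to get $\|\mC\|<\|\mB\|+4\epsilon$, closing a scalar Gronwall inequality for $\gamma=\|\mB\|^2$ with the coarse rate $4kM$. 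Your route, isolating the $j=0$ term, produces the sharper rate $\propto w_0$ directly and bypasses the conservation law; this is arguably cleaner and folds the lemma together with the subsequent $\mW,\mY$ analysis. For $\mA$, the paper uses the uniform a priori bound $\|\mB\|,\|\mC\|<M_1\epsilon^{0.75}$ to get a \emph{constant} bound $\tfrac{d}{dt}\|\mA\|_F<\tilde M_2\epsilon^{1.5}$ and then integrates over $[0,\bar t]$ using $\epsilon^{1.5}\ln(1/\epsilon^{0.5})<\epsilon$; you instead integrate the time-dependent estimate $\|\dot\mA\|_F=O(\|\mB\|\,\|\mC\|)$ against the exponential, which also works.

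One caveat: the exponents you wrote are internally inconsistent. With the rate $\dot\mW\approx 2w_0\mW$ (coming from the factor of $2$ you kept in $\partial\mathcal{L}/\partial\mB$), the estimate $\|\mW(\bar t)\|\lesssim\epsilon\, e^{2w_0\bar t}$ stays below $M_1\epsilon^{0.75}$ only for $\bar t\le\tfrac{1}{8w_0}\ln(1/\epsilon)$, yet your claim $e^{4w_0\bar t}=O(\epsilon^{-1})$ implicitly takes $\bar t=\tfrac{1}{4w_0}\ln(1/\epsilon)$, at which point $\epsilon\, e^{2w_0\bar t}=\epsilon^{0.5}\gg\epsilon^{0.75}$ and the bootstrap does not close. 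The fix is either to drop the factor of $2$ (as the paper's Lemma~\ref{lemma:gradients} does, making the rate $w_0$) or to halve $\bar t$; once this arithmetic is reconciled both displayed bounds hold simultaneously and your argument goes through.
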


To prove this, we note that at initialization, $\mA,\mB$ and $\mC$ satisfy these bounds. From continuity, there exists a maximal time for which they are satisfied. We bound the rate of their growth, and thus show that for all $t$ as described, we are within this region.

\begin{lemma}\label{lemma:Y_and_W_ode}
Assume $w_0>0$ (see Lemma~\ref{lemma:expected_loss} for definition of $w_i$) and assume $\mA,\mB,\mC$ are $\epsilon$-normally initialized, we have the following bounds hold for all $0\le t\le \bar{t}$ w.p. at least 0.75,
\begin{equation}\label{eq:z_ode}
    \mY(t)^\top \mY(t)  \le c_1\epsilon^2e^{-2w_0 t} + c_2 \epsilon^{2.5},
\end{equation}
and
\begin{equation}\label{eq:x_ode}
    \mW(t)^\top \mW(t)  \ge c_3\epsilon^2e^{2w_0 t} - c_4 \epsilon^{2.5}.
\end{equation}
\end{lemma}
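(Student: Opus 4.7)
The plan is to derive approximate linear ODEs for $\mY(t) = \mB(t) - \mC^\top(t)$ and $\mW(t) = \mB(t) + \mC^\top(t)$ that decouple into scalar-coefficient equations with rates $\mp w_0$, and to control the deviation from these linear dynamics using the uniform norm bounds supplied by Lemma~\ref{lemma:upper_bound_in_time}. The required high-probability event is the intersection of the event of Lemma~\ref{lemma:upper_bound_in_time} with the concentration bounds on $\|\mY(0)\|$ and $\|\mW(0)\|$ coming from Lemma~\ref{lemma:bound_init}; since all failure probabilities decay exponentially in $d$, a union bound (using $d > 20$) delivers the stated $0.75$.

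Using the gradient formulas of Lemma~\ref{lemma:gradients} together with $\dot{\alpha} = -\partial\mathcal{L}/\partial\alpha$ for $\alpha \in \{\mB,\mC\}$, I would compute $\dot{\mW} = \dot{\mB} + (\dot{\mC})^\top$ and $\dot{\mY} = \dot{\mB} - (\dot{\mC})^\top$, and then isolate the $i = 0$ contribution, which reduces to $(w_0 - \mC\mB)\mW$ and $-(w_0 - \mC\mB)\mY$ respectively because $\mA^0 = \mI$. This yields
\[
\dot{\mW} = (w_0 - \mC\mB)\,\mW + \xi_W(t), \qquad \dot{\mY} = -(w_0 - \mC\mB)\,\mY + \xi_Y(t),
\]
where $\xi_W, \xi_Y$ collect all $i \ge 1$ terms. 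Under the event of Lemma~\ref{lemma:upper_bound_in_time}, the bounds $\|\mA\|_F \le M_2\epsilon$ and $\|\mB\|, \|\mC\| \le M_1\epsilon^{0.75}$ hold on $[0,\bar{t}]$, so $|\mC\mB| = O(\epsilon^{1.5})$ and (with the $i = 1$ term dominating) $\|\xi_W(t)\|, \|\xi_Y(t)\| = O(\epsilon^{1.75})$ uniformly in $t$.

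Solving by variation of parameters, and using $\int_0^{\bar{t}}|\mC\mB|\,ds = O(\epsilon^{1.5}\log(\epsilon^{-1})) = o(1)$ to absorb the $\mC\mB$ correction into a $(1+o(1))$ prefactor on the integrating exponentials, I would obtain
\[
\mY(t) = (1+o(1))\,e^{-w_0 t}\mY(0) + \int_0^t (1+o(1))\,e^{-w_0(t-s)}\xi_Y(s)\,ds,
\]
and analogously for $\mW$ with $-w_0$ replaced by $+w_0$. Lemma~\ref{lemma:bound_init} applied to the Gaussian vectors $\mB(0) \pm \mC^\top(0)$ gives $\|\mY(0)\|, \|\mW(0)\| = \Theta(\epsilon)$. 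Bounding the inhomogeneous integral by $O(\epsilon^{1.75})$ for $\mY$ (because the exponential is decaying) and by $O(\epsilon^{1.75} e^{w_0 t})$ for $\mW$, then squaring and absorbing the resulting cross terms into an $\epsilon^{2.5}$ slack, yields $\|\mY(t)\|^2 \le c_1 \epsilon^2 e^{-2w_0 t} + c_2\epsilon^{2.5}$ and $\|\mW(t)\|^2 \ge c_3 \epsilon^2 e^{2w_0 t} - c_4\epsilon^{2.5}$ via a reverse triangle inequality.

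The main obstacle is the lower bound on $\|\mW(t)\|^2$: the forcing from $\xi_W$ grows at the same exponential rate $e^{w_0 t}$ as the homogeneous part, so their separation rests entirely on the gap between $\|\mW(0)\| = \Theta(\epsilon)$ and $\|\xi_W\| = O(\epsilon^{1.75})$, which is of order $\epsilon^{0.75}$ and shrinks with $\epsilon$. This is precisely why Lemma~\ref{lemma:upper_bound_in_time} restricts the analysis to the horizon $\bar{t} = (2w_0)^{-1}\ln(\epsilon^{-0.5})$: past this time, $\mA$ could grow enough to make higher-order contributions to $\xi_W$ comparable to the leading linear part, breaking the linearization argument. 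A secondary, purely bookkeeping task is checking that the slack $\epsilon^{2.5}$ on the right-hand sides is loose enough to absorb every lower-order cross term produced in the squaring step, which is the reason the bounds are stated with the mild $\epsilon^{0.5}$ relaxation rather than the tighter $\epsilon^{1}$ relaxation that the estimates actually give.
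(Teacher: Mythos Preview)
Your proposal is correct and follows essentially the same strategy as the paper: isolate the $i=0$ term to expose the linear rates $\pm w_0$, bound the $i\ge 1$ remainder using the uniform norm estimates of Lemma~\ref{lemma:upper_bound_in_time}, and integrate. The only difference is in the execution of the last step: the paper works directly with the scalars $z(t)=\mY^\top\mY$ and $x(t)=\mW^\top\mW$, derives the differential inequalities $\dot z \le -2w_0 z + M_4\epsilon^{2.5}$ and $\dot x \ge 2w_0 x - M_4\epsilon^{2.5}$, and integrates them via the auxiliary Lemma~\ref{lemma:integral_bound}, whereas you apply Duhamel to the vector equations for $\mY,\mW$ and square at the end. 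The paper also rewrites the $i\ge 1$ remainder using the symmetric/anti-symmetric split $(\mA^i)_S,(\mA^i)_{\bar S}$ so that it is expressed in $\mY,\mW$; your cruder bound $\|\xi\|=O(\epsilon^{1.75})$ is equally sufficient. The scalar-ODE route has the small advantage that the $\mW$ lower bound comes out directly in the stated form $c_3\epsilon^2 e^{2w_0 t}-c_4\epsilon^{2.5}$ without the extra step you flagged of absorbing a forcing term that grows at the homogeneous rate, but both routes yield the claim.
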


Lemma \ref{lemma:Y_and_W_ode} shows that the growth rate of $\mW(t)^\top \mW(t)$ and the decay rate of $\mY(t)^\top \mY(t)$ both depend on the sign of $w_0$. In our analysis we assume the teacher is balanced and therefore $w_0>0$, the same analysis applies for $w_0<0$ with opposite roles for $\mY$ and $\mW$. The proof of Lemma \ref{lemma:Y_and_W_ode} follows from writing the leading terms of the ODE and bounding the remaining terms by their upper bounds in the time considered.
Using these lemmas, we proceed to prove Lemma \ref{thm:convergence_to_symmetric_configuration}. 

\begin{proof}[\textbf{Proof of Lemma~\ref{thm:convergence_to_symmetric_configuration}}]
Consider the dynamics at time $\bar{t}= C\ln{\left(\frac{1}{\epsilon^{0.5}} \right)}$.
By Lemma \ref{lemma:Y_and_W_ode}, w.p. at least 0.75, we have,
\begin{equation}
    \mY(t)^\top \mY(t)\le c_1\epsilon^2 e^{-2w_0C\ln{\frac{1}{\epsilon^{0.5}}}}+c_2 \epsilon^{2.5}=(c_1e^{-2w_0C}+c_2)\epsilon^{2.5}
\end{equation}
and
\begin{equation}
    \mW(t)^\top \mW(t)\ge c_3\epsilon^2 e^{2w_0C\ln{\frac{1}{\epsilon^{0.5}}}}-c_4 \epsilon^{2.5}=c_3e^{2w_0C}\epsilon^{1.5} - c_4\epsilon^{2.5}
\end{equation}

We can calculate the limit (where $\tilde{c}_1$ and $\tilde{c}_3$ account for the relevant constant factors),
\begin{align}
     \lim_{\epsilon\rightarrow 0} \frac{\|\mY(t)\|^2}{\|\mW(t)\|^2}\le\lim_{\epsilon\rightarrow 0}\frac{(\tilde{c}_1+c_2)\epsilon^{2.5}}{\tilde{c}_3\epsilon^{1.5}-c_4\epsilon^{2.5}}=0
\end{align}
From Lemma \ref{lemma:upper_bound_in_time}, $\|\mA(\bar{t})\|_F\le M_2\epsilon$, so we can calculate the limit,

\begin{equation}
    \lim_{\epsilon\rightarrow 0} \frac{\|\mA(\bar{t})\|_F^2}{\|\mW(\bar{t})\|^2} \le\lim_{\epsilon\rightarrow 0}\frac{M_2\epsilon^2}{\tilde{c}_3\epsilon^{1.5}-c_4\epsilon^{2.5}}=0
\end{equation}
which concludes the proof.
\end{proof}

\begin{proof}[\textbf{Proof of Lemma~\ref{lemma:upper_bound_in_time}}]
Applying Lemma~\ref{lemma:bound_init} with $d>20$ results with the bounds holding at initialization with probabilities $\ge 1-2exp(-9\cdot 20^2/64)$ for $\mA$, and $\ge 1-2exp(-9\cdot 20/64)$ for $\mB$ and $\mC$. The probability for $\mA,\mB,\mC$ satisfying the inequalities simultaneously $\ge \left(1-2exp(-9\cdot 20 /64)\right)^3\approx 0.83>0.75$.

Suppose that the norm bounds of \Eqref{eq:init_bound} are satisfied at $t=0$. In particular, $\exists M_1,M_2$ such that 
\begin{equation}
    \|\mB(0)\|,\|\mC(0)\|<2\epsilon<M_1\epsilon^{0.75},
\end{equation}
and
\begin{equation}
    \|\mA(0)\|_F<2\epsilon<M_2\epsilon.
\end{equation}
Where $2<M_2<\frac{1}{\epsilon}$, and $M_1>4\epsilon^{0.25}$.

Denote by $t_A$ the minimal time for which $\|\mA(t_A)\|=M_2\epsilon$. Similarly, $t_B,t_C$ are the times for which $\|\mB(t_B)\|=\|\mC(t_C)\|=M_1\epsilon^{0.75}$. Denote also $\bar{t}=\min{\lbrace t_A,t_B,t_C \rbrace}$. Proving the lemma amounts to showing there exists $C\in\R$ such that $\bar{t}= C\ln{\left( \frac{1}{\epsilon^{0.5}}\right)}$. Next, we turn to develop the differential inequalities of the norms, which will later be used to lower bound the time until violation of the mentioned bounds.





Recall the derivative of $\mB$ with respect to time (see \Secref{sec:grad_derivation}),
\begin{equation}
    \dot{\mB} = -\sum_{i=0}^{k-1} \nabla \ell_i (\mA^\top)^i \mC^\top.
\end{equation}
Using Cauchy-Schwartz inequality, we have that for all $t\in[0,\bar{t}]$, the norm of $\mB$ is upper bounded by
\begin{equation}
\label{eq:B_dot_CS_inequality1}
    \left\|\dot{\mB}\right\| = \left\|-\sum_{i=0}^{k-1} \nabla \ell_i (\mA^\top)^i \mC^\top\right\| \leq \sum_{i=0}^{k-1} \left|\nabla \ell_i\right| \left\|\mA^\top\right\|_F^i \big\|\mC^\top\big\|.
\end{equation}

We now bound the norms of $\nabla\ell_i, \mA$ and $\mC$ in order to transfer the inequality to a differential one. 

Denote $M = \underset{i}{\max}(|w_i|)+M_1^2\epsilon^{1.5}$, then we have
\begin{align}
    M &= \underset{i}{\max}(|w_i|)+M_1^2\epsilon^{1.5} \geq \underset{i}{\max}(|w_i|)+\|\mC\| \|\mB\| \geq \underset{i}{\max}(|w_i|)+|\mC\mB|, \\ &\geq \underset{i}{\max}(|w_i|)+\underset{i}{\max}(|\mC\mA^i \mB|) \geq \underset{i}{\max}(|w_i|+|\mC\mA^i \mB|) \geq \underset{i}{\max}(|\nabla \ell_i|).
\end{align}

For the norm of $\mC$, recall the conservation law from Lemma~\ref{lemma:preserve_norm_diff} for the norms of $\mB$ and $\mC$:
\begin{equation}
    \forall t,\; \frac{d}{dt}\left(\left\|\mB(t)\right\|-\left\|\mC(t)\right\|\right) = 0.
\end{equation}
From the assumption that the initial conditions are met,
\begin{equation}
    \|\mB(0)\|, \|\mC(0)\| < 2 \epsilon \Rightarrow (|\|\mB(0)\|-\|\mC(0)\||) < 4 \epsilon.
\end{equation}
Therefore, we get
\begin{equation}
    \forall t,\; \|\mC(t)\| < \|\mB(t)\|+4 \epsilon.
\end{equation}
Note also that by assuming $M_2 <\frac{1}{\epsilon}$, we have $M_2\epsilon<1$ and
\begin{equation}
    \sum_{i=0}^{k-1}(M_2\epsilon)^i< k.
\end{equation}
Plugging the above steps into \eqref{eq:B_dot_CS_inequality1}, we have:
\begin{align}
\label{eq:B_dot_inequality_bound}
    \|\dot{\mB}\| \leq  \sum_{i=0}^{k-1} |\nabla \ell_i| \|(\mA^\top)\|_F^i \|\mC^\top\| &< M (\|\mB\|+4 \epsilon) \sum_{i=0}^{k-1} \|(\mA^\top)\|_F^i \\
    &< Mk (\|\mB\|+4 \epsilon).
\end{align}
Denoting $\gamma = \|\mB\|^2 = \mB^\top \mB$, then
\begin{equation}
    \dot{\gamma} = \dot{\mB^\top}\mB + \mB^\top\dot{\mB} = 2\mB^\top\dot{\mB}
\end{equation}
Taking absolute value and then plugging \eqref{eq:B_dot_inequality_bound} results with 
\begin{equation}
    |\dot{\gamma}| = |2\mB^\top\dot{\mB}| \leq 2 \|\mB^\top\| \|\dot{\mB}\| < 2 \|\mB\| Mk (\|\mB\|+4 \epsilon).
\end{equation}
Using the definition of $\gamma$, we get that
\begin{equation}\label{eq:gamma_inequality}
    |\dot{\gamma}| < 2k M \left(|\gamma|+4 \epsilon \sqrt{|\gamma|}\right).
\end{equation}

Next we show that $\bar{t}=O\left(\ln\left(\frac{1}{\epsilon^{0.5}}\right)\right)$. Suppose this is not the case, then there exists $\tilde{t}<\ln\left(\frac{1}{\epsilon^{0.5}}\right)$ such that one of the bounds are violated: (i) $\|\mB(\tilde{t})\|\ge M_1\epsilon^{0.75}>4\epsilon$; (ii) $\|\mC(\tilde{t})\|\ge M_1\epsilon^{0.75}>4\epsilon$; or (iii) $\|\mA(\tilde{t})\|\ge M_2\epsilon>2\epsilon$.

Consider case (i),\footnote{The case of $\|\mC(t)\|\ge M_1\epsilon^{0.75}$ is handled similarly.}  from continuity there exists $t'\in\reals$ such that $\|\mB(t')\|=4\epsilon$, and $t''\in\reals$ such that for any $t\in[t',t'']$, $4\epsilon\le \|\mB(t)\|\le M_1\epsilon^{0.75}$. In such a case, we also have 

\begin{equation}
    |\dot{\gamma}| \le 2k M \left(|\gamma|+4 \epsilon \sqrt{|\gamma|}\right) \le  4k M |\gamma|
\end{equation}
Integrating the inequality $|\dot{\gamma}| < 4k M |\gamma|$  by $t$ for $s\in [t',t'']$,

\begin{equation}
    \int_{t'}^{s} \frac{1}{|\gamma|}|\dot{\gamma}| dt < \int_{t'}^{s}4k M dt,
\end{equation}
substituting integration variables and using $0\le t'\le s\le t''\le \ln{\left(\frac{1}{\epsilon^{0.5}}\right)}$,
\begin{equation}
    \int_{\gamma(t')}^{\gamma(s)} \frac{1}{|\gamma|}d\gamma \leq 4k M \left(s-t'\right) < 4kM \left(\ln{\left(\frac{1}{\epsilon^{0.5}}\right)}-0\right).
\end{equation}
The above evaluates to,
\begin{equation}
    \ln\left( \frac{|\gamma(s)|}{|\gamma(t')|} \right) < 4k M \ln{\left(\frac{1}{\epsilon^{0.5}}\right)}
\end{equation}
which may be further manipulated to reach,
\begin{equation}
    |\gamma(s)| < e^{4k M \ln{\left(\frac{1}{\epsilon^{0.5}}\right)}} |\gamma(t')| = (4 \epsilon)^2 e^{4k M \ln{\left(\frac{1}{\epsilon^{0.5}}\right)}}.
\end{equation}
where we have used $\|\mB(t')\|=4\epsilon$. The final bound on the norm of $\gamma(s)$ is therefore,
\begin{equation}
  |\gamma(s)| <  \frac{16 \epsilon^2}{\epsilon^{0.5}} e^{4k M}=16\epsilon^{1.5}e^{4k M}.
\end{equation}

Denoting $M_1^2 = 16 \cdot e^{4k M}$, and taking the square root of the above,
\begin{equation}
    \|\mB(s)\| <  M_1 \epsilon^{0.75}.
\end{equation}
We have shown that for all $0\le t\le \bar{t}\le\ln{\left(\frac{1}{\epsilon^{0.5}}\right)}$, there exists $M_1$ s.t $\|\mB(t)\|<M_1 \epsilon^{0.75}$ (the same proof applies for case (ii)).

Consider case (iii), we need to show that the bound over $\|\mA\|_F$ applies for $t\in[0,\bar{t}\ ]$. Notice that for a matrix, $\|\mA\|_F^2=Tr(\mA^\top \mA)$, and
\begin{align}
    \frac{d}{dt}Tr(\mA^\top \mA)& =Tr\left(\frac{d}{dt}(\mA^\top \mA)\right)\\
    & =Tr\left(\dot{\mA}^\top \mA\right) + Tr\left(\mA^\top \dot{\mA}\right)\\
    & =2Tr\left( \mA^\top \dot{\mA} \right),
\end{align}
where we have used the linearity of trace and its invariance to transpose.
The derivative of $\mA$ with respect to time (see \Secref{sec:grad_derivation}),
\begin{equation}
    \dot{\mA}=-\sum_{i=1}^{k-1}\nabla\ell_i\sum_{r=0}^{i-1}(\mA^\top)^r\mC^\top \mB^\top(\mA^\top)^{i-r-1}.
\end{equation}
Multiplying it from the left by $\mA^\top$ and then taking trace provides us with
\begin{equation}\label{eq:A_trace_deriv}
    Tr(\mA^\top \dot{\mA})=-\sum_{i=1}^{k-1}\nabla\ell_i\sum_{r=0}^{i-1}Tr\left((\mA^\top)^{r+1}\mC^\top \mB^\top(\mA^\top)^{i-r-1}\right).
\end{equation}
Taking a transpose and then using the cyclic property of trace and, for each summand,
\begin{align}
    Tr\left((\mA^\top)^{r+1}\mC^\top \mB^\top(\mA^\top)^{i-r-1}\right)&=Tr\left(\mB^\top (\mA^\top)^{i}\mC^\top \right)\\
    & =Tr\left(\mC\mA^i\mB\right)\\
    & =\mC\mA^i\mB.
\end{align}
\Eqref{eq:A_trace_deriv} evaluates to
\begin{equation}
    Tr(\mA^\top \dot{\mA})=-\sum_{i=1}^{k-1}\nabla\ell_i\sum_{r=0}^{i-1}\mC\mA^i\mB = -\sum_{i=1}^{k-1}\nabla\ell_i \cdot i\cdot \mC\mA^i\mB.
\end{equation}
Bounding $Tr(\mA^\top \dot{\mA})$,
\begin{equation}\label{eq:trace_bound}
    Tr(\mA^\top \dot{\mA})\le \sum_{i=1}^{k-1}|\nabla\ell_i| \cdot i\cdot |\mC\mA^i\mB|
\end{equation}
Using the Cauchy-Schwartz inequality and then plugging $M > |\nabla \ell_i|$ and the bounds found for $\|\mC\|, \|\mB\|<M_1 \epsilon^{0.75}$ leads to
\begin{equation}\label{eq:A_summand_bound}
    |\nabla\ell_i| \cdot i\cdot |\mC\mA^i\mB| \leq M \cdot i \cdot \|\mC\| \|\mA\|_F^i \|\mB\| \leq M \cdot i \cdot M_1^2 \epsilon^{1.5} \|\mA\|_F^i
\end{equation}
Putting the bound of \Eqref{eq:A_summand_bound} into \Eqref{eq:trace_bound}, results with,
\begin{equation}
Tr(\mA^\top \dot{\mA})\le \sum_{i=1}^{k-1} M \cdot i \cdot M_1^2 \epsilon^{1.5} \|\mA\|_F^i = M \cdot M_1^2 \epsilon^{1.5} \sum_{i=1}^{k-1}  i \cdot  \|\mA\|_F^i
\end{equation}

Noting that $\|\mA\|_F <1$ and denoting $\tilde{M}_2 = \frac{k^2}{2} M \cdot M_1^2$ leads to
\begin{equation}
    \sum_{i=1}^{k-1}  i \|\mA\|_F^i < \|\mA\|_F \frac{k(k-1)}{2} < \frac{k^2}{2} \|\mA|\|_F
\end{equation}
Therefore, we get that
\begin{equation}
 Tr(\mA^\top \dot{\mA}) < \tilde{M}_2 \epsilon^{1.5}  \|\mA\|_F.
\end{equation}
Notice that
\begin{equation}
    \|\mA\|_F \cdot \frac{d}{dt}(\|\mA\|_F)  =0.5 \frac{d}{dt}(\|\mA\|_F^2) = Tr(\mA^\top \dot{\mA})
\end{equation}
\begin{equation}
    \Rightarrow \|\mA\|_F \cdot \frac{d}{dt}(\|\mA\|_F) < \tilde{M}_2 \epsilon^{1.5}  \|\mA\|_F.
\end{equation}
\begin{equation}
    \Rightarrow \frac{d}{dt}(\|\mA\|_F) < \tilde{M}_2 \epsilon^{1.5}
\end{equation}
Therefore, for any $0 \leq s \leq \bar{t}$,
\begin{equation}
    \|\mA(s)\|_F-\|\mA(0)\|_F = \int_0^{s} \frac{d}{dt}(\|\mA(t)\|_F) dt< \tilde{M}_2 \epsilon^{1.5} s,
\end{equation}
\begin{equation}
    \Rightarrow \|\mA(s)\|_F < \tilde{M}_2 \epsilon^{1.5} \ln{\left(\frac{1}{\epsilon^{0.5}}\right)} + \|\mA(0)\|_F.
\end{equation}
We make use of the fact that $\forall x>0$, $\ln(x)<x$ to bound,
\begin{equation}
    \tilde{M}_2 \epsilon^{1.5} \ln{\left(\frac{1}{\epsilon^{0.5}}\right)}\le \tilde{M}_2 \epsilon^{1.5}\epsilon^{-0.5}=\tilde{M_2}\epsilon.
\end{equation}
From our assumption on initialization, $\|\mA(0)\|_F<2\epsilon$. Putting back together,
\begin{equation}
    \|\mA(s)\|_F < \tilde{M}_2 \epsilon + 2 \epsilon.
\end{equation}
Taking $M_2=\tilde{M}_2+ 2$, we have for all $0 \leq t \leq \bar{t}$ that
\begin{equation}
    \|\mA(t)\|_F < M_2 \epsilon,
\end{equation}
concluding the proof.

\end{proof}

\subsubsection{Bounding the differential equations}

\begin{proof}[\textbf{Proof of Lemma~\ref{lemma:Y_and_W_ode}}]
Denote
\begin{equation}
    \mY\equiv \mC^\top -\mB, \hspace{1cm} \mW\equiv \mC^\top + \mB.
\end{equation}
Recall that
\begin{equation}
    \frac{\partial\mathcal{L}}{\partial \mB} = \sum_{i=0}^{k-1}\nabla\ell_i (\mA^\top)^i \mC^\top, \hspace{1cm} \frac{\partial\mathcal{L}}{\partial \mC} = \mB^\top\sum_{i=0}^{k-1}\nabla\ell_i (\mA^\top)^i
\end{equation}
We can write the change in $\mY$,
\begin{align}\label{eq:Y_dot}
    \dot{\mY} =\dot{\mC}^\top - \dot{\mB} & = -\sum_{i=0}^{k-1}\nabla\ell_i \mA^i\mB + \sum_{i=0}^{k-1}\nabla\ell_i (\mA^\top)^i \mC^\top\\
    & = \sum_{i=0}^{k-1}\nabla\ell_i\left((\mA^\top)^i\mC^\top - \mA^i\mB\right)\nonumber
\end{align}
Denote $(\mA^i)_S=\frac{\mA^i+(\mA^\top)^i}{2}$ and $(\mA^i)_{\bar{S}}=\frac{\mA^i-(\mA^\top)^i}{2}$ the symmetric and anti-symmetric parts of $\mA^i$. We can now write
\begin{align}\label{eq:A_i_sym_and_asym}
    (\mA^\top)^i\mC^\top - \mA^i\mB & =\left[(\mA^i)_S - (\mA^i)_{\bar{S}}\right]\mC^\top - \left[(\mA^i)_S + (\mA^i)_{\bar{S}}\right]\mB\\
    & = (\mA^i)_S(\mC^\top - \mB) - (\mA^i)_{\bar{S}}(\mC^\top + \mB)\nonumber\\
    & = (\mA^i)_S \mY - (\mA^i)_{\bar{S}}\mW\nonumber
\end{align}
Note also that for $i=0$, we have $\mA^0=I$ and its anti-symmetric part is the zero matrix, writing $i=0$ separately and assigning \eqref{eq:A_i_sym_and_asym} into \eqref{eq:Y_dot},
\begin{equation}\label{eq:Y_dot_v2}
    \dot{\mY} = \nabla\ell_0 \mY + \sum_{i=1}^{k-1}\nabla\ell_i\left((\mA^i)_S \mY - (\mA^i)_{\bar{S}}\mW\right)
\end{equation}
Let us look at $\frac{d}{dt}(\mY^\top \mY)=\dot{\mY}^\top \mY + \mY^\top \dot{\mY}=2\mY^\top\dot{\mY}$.

Multiplying \eqref{eq:Y_dot_v2} from the left with $\mY^\top$ evaluates to

\begin{equation}
    \mY^\top\dot{\mY} = \nabla\ell_0 \mY^\top \mY +\sum_{i=1}^{k-1}\nabla\ell_i\left(\mY^\top(\mA^i)_S\mY - \mY^\top(\mA^i)_{\bar{S}}\mW\right)
\end{equation}
We now turn to bound the terms in the sum. 
\begin{equation}\label{eq:first_bound_on_Y_TY_dot}
    \mY^\top\dot{\mY} \le \nabla\ell_0 \mY^\top \mY +\sum_{i=1}^{k-1}|\nabla\ell_i|\left(|\mY^\top(\mA^i)_S\mY| + |\mY^\top(\mA^i)_{\bar{S}}\mW|\right)
\end{equation}

We can bound each term using Cauchy–Schwarz. We first need to bound $\|\mY\|$ and $\|\mW\|$, which are trivially bounded by 
\begin{equation}
    \|\mY\|,\|\mW\|\le \|\mC\|+\|\mB\|\le 2M_1\epsilon^{0.75}
\end{equation}
As for the symmetric and anti-symmetric parts of $\mA$,
\begin{equation}\label{eq:bound_singular_of_A_sym_and_asym}
    \|(\mA^i)_S\|_F=\left\|\frac{\mA^i+(\mA^i)^\top}{2}\right\|_F \le \frac{1}{2}\left(\|\mA^i\|_F+\|(\mA^i)^\top\|_F\right)=\|\mA^i\|_F\le \|\mA\|_F^i,
\end{equation}
where the last inequality follows again from Cauchy–Schwarz (the same considerations apply for $(\mA^i)_{\bar{S}}$).

From Cauchy–Schwarz we can bound $|\mY^\top(\mA^i)_{\bar{S}}\mW|\le \|\mY^\top\|\|(\mA^i)_{\bar{S}}\|_F\|\mW\| \le \|\mY\|\|\mA\|_F^i\|\mW\|$, denote $M_3 = max\lbrace M_1, M_2\rbrace$ and derive,
\begin{equation}\label{eq:cauchy_schwarz_bound}
    |\mY^\top(\mA^i)_{\bar{S}}\mW| \le 2M_1\epsilon^{0.75}(M_2\epsilon)^i2M_1\epsilon^{0.75}< 4M_3^3 \epsilon^{1.5+i},
\end{equation}
which is maximized when $i=1$. We again bound:
$M = \underset{i}{\max}(|w_i|)+M_1^2\epsilon^{1.5}>|\nabla \ell_i|$. We can bound the terms in \eqref{eq:first_bound_on_Y_TY_dot} by 
\begin{equation}
    |\nabla\ell_i|\left(|\mY^\top(\mA^i)_S\mY| + |\mY^\top(\mA^i)_{\bar{S}}\mW|\right)\le 8\cdot M \cdot  M_3^3\epsilon^{2.5}
\end{equation}

Plugging back into \eqref{eq:first_bound_on_Y_TY_dot}:
\begin{equation}
\mY^\top\dot{\mY} \le \nabla\ell_0 \mY^\top \mY +\sum_{i=1}^{k-1} 8\cdot M \cdot  M_3^3\epsilon^{2.5}
\end{equation}
We can also bound $\nabla\ell_0=(\mC\mB-w_0)\le -w_0 +|\mC\mB|\le -w_0+\|\mC\|\|\mB\|\le -w_0+M_1^2\epsilon^{1.5}$. Note also that we multiply by $\mY^\top \mY$ so we can bound
\begin{equation}
    \nabla\ell_0\mY^\top \mY\le -w_0 \mY^\top \mY + \underbrace{M_1^2\epsilon^{1.5}(M_21\epsilon^{0.75})^2}_{=4M_1^4\epsilon^3}
\end{equation}
putting back together, we get
\begin{equation}
\mY^\top\dot{\mY} \le -w_0 \mY^\top \mY + 4M_1^4\epsilon^3 + (k-1)(M_1+1)8\cdot M \cdot  M_3^3\epsilon^{2.5}
\end{equation}
In particular, there exists $M_4$ such that 
\begin{equation}
\mY^\top\dot{\mY} \le -w_0 \mY^\top \mY + M_4\epsilon^{2.5}
\end{equation}
Recall that we were interested in bounding $\frac{d}{dt}(\mY^\top \mY)=\dot{\mY}^\top \mY + \mY^\top \dot{\mY}=2\mY^\top\dot{\mY}$,
\begin{equation}
    \frac{d}{dt}(\mY^\top \mY)\le -2w_0\mY^\top \mY+M_4\epsilon^{2.5}
\end{equation}
Denoting $z(t)\equiv \mY(t)^\top \mY(t)$ and $x(t)\equiv \mW(t)^\top \mW(t)$, and using Lemma \ref{lemma:integral_bound}, we have the desired bounds
\begin{equation}
    z(t)  \le \frac{1}{2w_0}\left[(33w_0 d \epsilon^2)e^{-2w_0 t} + M_4 \epsilon^{2.5} \right]
\end{equation}
In particular, we can write
\begin{equation}
    z(t)  \le c_1\epsilon^2e^{-2w_0 t} + c_2 \epsilon^{2.5}
\end{equation}

and
\begin{equation}
    x(t)  \ge c_3\epsilon^2e^{2w_0 t} - c_4 \epsilon^{2.5}
\end{equation}
where $c_i$'s are positive constants.

Note that the derivation of $x(t)$ is exactly the same as $z(t)$ with opposite signs and bounding from below instead.

\end{proof}

\subsubsection{Integral bound of differential equations}

\begin{lemma}\label{lemma:integral_bound}
Assume $\dot{z}< -2w_0 z +M_4 \epsilon^{2.5} <0, \dot{x}> 2w_0 z -M_4 \epsilon^{2.5} >0$ , where $w_0>0$. 
Then, under the assumptions of Lemma \ref{lemma:bound_init}:
\begin{equation}
    z(t_1)  < \frac{1}{2w_0}\left(\exp(-2w_0t_1) \cdot (75 w_0 d \epsilon^2) + M_4 \epsilon^{2.5} \right)
\end{equation}
\begin{equation}
    x(t_2)  > \frac{1}{2w_0}\left(\exp(2w_0t_2) \cdot (\frac{w_0\epsilon^2}{25d}) + M_4 \epsilon^{2.5} \right)
\end{equation}
\end{lemma}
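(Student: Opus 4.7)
The plan is to treat both differential inequalities as standard linear first-order comparisons, integrate them explicitly via an integrating factor, and then control the initial values $z(0)$ and $x(0)$ with high probability using Lemma~\ref{lemma:bound_init}. The analytic content is just Gr\"onwall; the only genuine work is the concentration estimate on the initial conditions.

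For the bound on $z$, I would multiply the inequality $\dot z < -2w_0 z + M_4\epsilon^{2.5}$ by the integrating factor $e^{2w_0 t}$, giving $\frac{d}{dt}(e^{2w_0 t} z) < M_4\epsilon^{2.5} e^{2w_0 t}$, then integrate from $0$ to $t_1$ and rearrange to obtain
\[
z(t_1) < e^{-2w_0 t_1} z(0) + \frac{M_4\epsilon^{2.5}}{2w_0}\bigl(1 - e^{-2w_0 t_1}\bigr) \le e^{-2w_0 t_1} z(0) + \frac{M_4\epsilon^{2.5}}{2w_0}.
\]
To match the claimed form it suffices to bound $z(0) = \|\mC(0)^\top - \mB(0)\|^2 \le (\|\mB(0)\| + \|\mC(0)\|)^2$. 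By Lemma~\ref{lemma:bound_init} applied independently to $\mB(0)$ and $\mC(0)$, each has Euclidean norm below $3\epsilon/2$ with probability at least $1 - 2\exp(-9d/64)$, so a union bound gives $z(0) < 9\epsilon^2 \le 75\, w_0\, d\, \epsilon^2$ (the constant $75 w_0 d$ is loose on purpose so that the same event can be reused together with other high-probability events in the proof of Proposition~\ref{prop:implicit_bias_for_balanced_init}).

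For the bound on $x$ (reading the hypothesis $\dot x > 2w_0 x - M_4\epsilon^{2.5}$, treating the $z$ on the right-hand side of the statement as a typo), I would use the integrating factor $e^{-2w_0 t}$ to get $\frac{d}{dt}(e^{-2w_0 t} x) > -M_4\epsilon^{2.5} e^{-2w_0 t}$, integrate from $0$ to $t_2$, and obtain
\[
x(t_2) > e^{2w_0 t_2} x(0) - \frac{M_4\epsilon^{2.5}}{2w_0}\bigl(e^{2w_0 t_2} - 1\bigr).
\]
The step that actually requires thought is a \emph{lower} bound on $x(0) = \|\mB(0) + \mC(0)\|^2$. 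Since $\mB(0)$ and $\mC(0)$ are independent Gaussian vectors with coordinate variance $\epsilon^2/d$, their sum is a Gaussian vector with coordinate variance $2\epsilon^2/d$; applying Lemma~\ref{lemma:bound_init} to this sum (with $\epsilon$ replaced by $\sqrt{2}\,\epsilon$) yields $\|\mB(0) + \mC(0)\| > \epsilon/\sqrt{2}$, hence $x(0) > \epsilon^2/2$, again with probability at least $1 - 2\exp(-9d/64)$. This comfortably exceeds $\epsilon^2/(25 d)$ for every $d \ge 1$, giving the claimed estimate (after rewriting the constant as $w_0\epsilon^2/(25 d)$ divided by $w_0$, which is the form used in the statement to make dimensional comparison with the $z$-bound symmetric).

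The main obstacle I anticipate is not analytic but bookkeeping: to stitch this lemma into the proof of Lemma~\ref{lemma:Y_and_W_ode} and ultimately into Proposition~\ref{prop:implicit_bias_for_balanced_init}, the events ``upper bound on $\|\mB(0)\|, \|\mC(0)\|, \|\mA(0)\|$'' (used already in Lemma~\ref{lemma:upper_bound_in_time}) \emph{and} the new ``lower bound on $\|\mB(0) + \mC(0)\|$'' must all hold simultaneously, while keeping the net failure probability below $0.25$. A union bound over four applications of Lemma~\ref{lemma:bound_init} combined with the hypothesis $d > 20$ suffices, but the loose constants $75\, d$ and $1/(25 d)$ in the statement are precisely the slack absorbed for this purpose; one should verify that these slack factors propagate correctly through the $\epsilon \to 0^+$ limit in the proof of Lemma~\ref{thm:convergence_to_symmetric_configuration}, which they do because both bounds scale as $\epsilon^2$ and the $\epsilon^{2.5}$ tail is subdominant over the entire window $0 \le t \le \bar t$.
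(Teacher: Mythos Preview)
Your proposal is correct and follows essentially the same strategy as the paper: solve the linear differential inequalities to get bounds in terms of $z(0)$ and $x(0)$, then control those initial values via Lemma~\ref{lemma:bound_init} applied to the Gaussian vectors $\mB(0)\pm\mC(0)^\top$. The only cosmetic difference is that you use an integrating factor (Gr\"onwall), whereas the paper separates variables by dividing through by $2w_0 z - M_4\epsilon^{2.5}$ and integrating the logarithm; after rearrangement the two computations give literally the same inequality $x(t_2) > \tfrac{1}{2w_0}\bigl[e^{2w_0 t_2}(2w_0 x(0) - M_4\epsilon^{2.5}) + M_4\epsilon^{2.5}\bigr]$, and your route has the minor advantage of not needing the standing sign hypothesis to avoid dividing by zero. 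Your reading of the ``$2w_0 z$'' on the right of the $\dot x$ hypothesis as a typo for ``$2w_0 x$'' is also what the paper's own proof uses.
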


\begin{proof}[\textbf{Proof of Lemma~\ref{lemma:integral_bound}}]
Assume $\dot{z}< -2w_0 z +M_4 \epsilon^{2.5}$ , where $w_0>0, 2w_0z>M_4 \epsilon^{2.5}$.
Similarly, assume $\dot{x}> 2w_0 x -M_4 \epsilon^{2.5}$.

Then:
\begin{equation}
    \frac{\dot{z}}{2w_0 z -M_4 \epsilon^{2.5}}< -1
\end{equation}
\begin{equation}
    \frac{\dot{x}}{2w_0 x -M_4 \epsilon^{2.5}}>1
\end{equation}

Integrating both sides by dt, and using integration by substitution, we get:
\begin{equation}
    -t_1 = \int_0^{t_1} -1 > \int_0^{t_1} \frac{1}{2w_0 z -M_4 \epsilon^{2.5}} \frac{dz}{dt} dt = \int_{z(0)}^{z(t_1)} \frac{1}{2w_0 z -M_4 \epsilon^{2.5}} dz
\end{equation}
\begin{equation}
    t_2 = \int_0^{t_2} 1 < \int_0^{t_2} \frac{1}{2w_0 x -M_4 \epsilon^{2.5}} \frac{dx}{dt} dt = \int_{x(0)}^{x(t_2)} \frac{1}{2w_0 x -M_4 \epsilon^{2.5}} dx
\end{equation}

We note:
\begin{align}
   \int_{z(0)}^{z(t_1)} \frac{1}{2w_0 z -M_4 \epsilon^{2.5}} dz &= \frac{1}{2w_0} [\ln(2w_0 z(t_1)  -M_4 \epsilon^{2.5}) -\ln(2w_0 z(0) -M_4 \epsilon^{2.5} )] \\
   \Rightarrow\int_{z(0)}^{z(t_1)} \frac{1}{2w_0 z -M_4 \epsilon^{2.5}} dz &= \frac{1}{2w_0} \left[ \ln \left( \frac{2w_0 z(t_1) -M_4 \epsilon^{2.5} }{2w_0 z(0) -M_4 \epsilon^{2.5} }\right) \right] \\
   \int_{x(0)}^{x(t_2)} \frac{1}{2w_0 x -M_4 \epsilon^{2.5}} dx &= \frac{1}{2w_0} \left[ \ln \left( \frac{2w_0 x(t_2) -M_4 \epsilon^{2.5} }{2w_0 x(0) -M_4 \epsilon^{2.5} }\right) \right] 
\end{align}
Combining equations, we have:
\begin{equation}
   \frac{1}{2w_0} \left[ \ln \left( \frac{2w_0 z(t_1) -M_4 \epsilon^{2.5} }{2w_0 z(0) -M_4 \epsilon^{2.5} } \right) \right]  < -t_1
\end{equation}
\begin{equation}
   \Rightarrow \ln \left( \frac{2w_0 z(t_1) -M_4 \epsilon^{2.5} }{2w_0 z(0) -M_4 \epsilon^{2.5} } \right) < -2w_0t_1
\end{equation}
\begin{equation}
   \Rightarrow \frac{2w_0 z(t_1) -M_4 \epsilon^{2.5} }{2w_0 z(0) -M_4 \epsilon^{2.5} } < \exp(-2w_0t_1)
\end{equation}
\begin{equation}
   \Rightarrow 2w_0 z(t_1) -M_4 \epsilon^{2.5} < \exp(-2w_0t_1) \cdot \left(2w_0 z(0) -M_4 \epsilon^{2.5} \right)
\end{equation}
\begin{equation}
   \Rightarrow z(t_1)  < \frac{1}{2w_0}\left[\exp(-2w_0t_1) \cdot (2w_0 z(0) -M_4 \epsilon^{2.5}) + M_4 \epsilon^{2.5} \right]
\end{equation}

\begin{equation}
    x(t_2)  > \frac{1}{2w_0}\left[\exp(2w_0t_2) \cdot (2w_0 x(0) -M_4 \epsilon^{2.5}) + M_4 \epsilon^{2.5} \right]
\end{equation}

Note that $z(0) = \mY^\top(0)\mY(0)$. 
By linearity of sum of variances, $\mY(0)$'s entries are distributed according to $\mathcal{N}(0,\sqrt{2}\cdot \epsilon)$,  by Lemma~\ref{lemma:bound_init}:
\begin{equation}
    \frac{\sqrt{2}}{2}\epsilon<\|\mY(0)\|<\frac{3\sqrt{2}}{2}\epsilon
\end{equation}
From Cauchy-Schwartz, $z(0)<3\epsilon^2$ with high probability.
$\mW$ is distributed as $\mY$, therefore, $x(0)> \frac{1}{2}\epsilon^2$.
Assuming $M_4 \epsilon^{2.5} < \frac{w_0 \epsilon^2}{2}$, we have:

\begin{equation}
    z(t_1)  < \frac{1}{2w_0}\left(\exp(-2w_0t_1) \cdot (6 w_0 \epsilon^2) + M_4 \epsilon^{2.5} \right)
\end{equation}
\begin{equation}
    x(t_2)  > \frac{1}{2w_0}\left(\exp(2w_0t_2) \cdot (w_0\epsilon^2) + M_4 \epsilon^{2.5} \right)
\end{equation}

Concluding the proof.
\end{proof}
\end{document}